%% file: arxiv.tex
\documentclass[10pt,journal,compsoc]{IEEEtran}

\ifCLASSOPTIONcompsoc
  \usepackage[nocompress]{cite}
\else
  \usepackage{cite}
\fi

\ifCLASSINFOpdf
\else
\fi

\usepackage[T1]{fontenc}
\usepackage[utf8]{inputenc}
\usepackage{times}
\usepackage{textcomp}

\usepackage{array}
\usepackage[caption=false,font=footnotesize]{subfig}
\usepackage{textcomp}
\usepackage{stfloats}
\usepackage{url}

\usepackage{amsmath,amssymb,amsfonts,amsthm,bm,mathtools}

\usepackage[table,dvipsnames]{xcolor}

\usepackage{graphicx}
\usepackage{booktabs,multirow,makecell,adjustbox,tabularx}
\usepackage{placeins}
\usepackage{enumerate}
\usepackage{siunitx}
\usepackage{enumitem}
\newcommand{\na}{\textemdash}
\definecolor{oursrow}{RGB}{235,235,252}

\newcommand{\mc}[1]{\multicolumn{1}{c}{#1}} 

\usepackage[ruled,vlined,linesnumbered]{algorithm2e}

\usepackage{enumitem}
\usepackage{ragged2e}

\usepackage{hyperref}
\usepackage{cleveref}

\definecolor{AcademicBlue}{RGB}{39,76,119}
\definecolor{AcademicTeal}{RGB}{53,122,120}
\definecolor{AcademicGray}{RGB}{90,90,90}
\definecolor{SoftLine}{RGB}{210,214,220}
\definecolor{SoftBlueBg}{RGB}{245,248,252}
\definecolor{SoftTealBg}{RGB}{244,249,248}
\definecolor{SoftGrayBg}{RGB}{247,247,247}

\hypersetup{
    colorlinks=true,
    linkcolor=AcademicBlue,
    citecolor=AcademicTeal,
    urlcolor=AcademicBlue,
    pdftitle={PhaseWin: An Efficient Search Algorithm for Faithful Visual Attribution},
    pdfauthor={Zihan Gu, Junchi Zhang, Li Liu, Xiaochun Cao, Hua Zhang},
    pdfsubject={arXiv Submission},
    pdfkeywords={visual attribution, efficient search, interpretability, submodular optimization}
}

\newtheorem{theorem}{Theorem}
\newtheorem{proposition}{Proposition}
\newtheorem{lemma}{Lemma}
\newtheorem{corollary}{Corollary}
\theoremstyle{definition}
\newtheorem{definition}{Definition}
\newtheorem{assumption}{Assumption}
\theoremstyle{remark}
\newtheorem{remark}{Remark}

\newcommand{\score}{F}

\renewcommand{\arraystretch}{1.12}
\setlist[itemize]{leftmargin=1.5em,itemsep=2pt,topsep=2pt}
\setlist[enumerate]{leftmargin=1.8em,itemsep=2pt,topsep=2pt}

\title{PhaseWin: An Efficient Search Algorithm for Faithful Visual Attribution}

\author{Zihan Gu,
    Junchi Zhang, 
    Li Liu, 
    Xiaochun Cao, 
    and Hua Zhang
\thanks{
    Zihan Gu, and Hua Zhang are with the Institute of Information Engineering, Chinese Academy of Sciences, Beijing 100093, China, and also with the School of Cyber Security, University of Chinese Academy of Sciences, Beijing 100049, China (Email: 
    \href{mailto:guzihan@iie.ac.cn}{guzihan@iie.ac.cn}, \href{mailto:chenruoyu@iie.ac.cn}{chenruoyu@iie.ac.cn}, \href{mailto:zhanghua@iie.ac.cn}{zhanghua@iie.ac.cn}).\\
    Junchi Zhang is with the Shanghai Center for Mathematical Sciences, Fudan University, Shanghai 200438, China (Email: \href{mailto:jczhang24@m.fudan.edu.cn}{jczhang24@m.fudan.edu.cn}).\\
    Li Liu is with the College of Electronic Science and Technology, National University of Defense Technology, Changsha 410073, China (Email: \href{mailto:li.liu@oulu.fi}{li.liu@oulu.fi}). \\
    Xiaochun Cao is with the School of Cyber Science and Technology, Shenzhen Campus of Sun Yat-sen University, Shenzhen 518107, China (Email: \href{mailto:caoxiaochun@mail.sysu.edu.cn}{caoxiaochun@mail.sysu.edu.cn}).}
}

\begin{document}

\IEEEtitleabstractindextext{%
\begin{abstract}
\justifying
Visual attribution is a fundamental tool for interpreting modern vision and vision-language models, particularly when their decisions must be inspected, diagnosed, or audited. Its goal is to explain how a model's decision depends on local regions of the visual input, typically by assigning an importance ordering over candidate image regions. Given an image partitioned into $n$ regions, faithful attribution can be cast as an ordered subset-search problem, in which progressively inserting the selected regions should recover the target model response as early as possible. Exhaustive search over region subsets incurs exponential cost, while the widely used greedy search still requires a quadratic number of model evaluations, because every selection step rescores all remaining candidates.
We propose PhaseWin, an efficient subset-search algorithm for faithful visual attribution. PhaseWin reorganizes greedy region selection into a phased window-search procedure: rather than re-evaluating the full candidate set at every step, it alternates between global candidate screening, adaptive pruning, and localized window refinement, while preserving the essential region-ranking behavior of greedy search.
We analyze PhaseWin under monotone evidence-accumulation conditions and show that, under feature-level structural assumptions, it attains controllable linear evaluation complexity together with near-greedy faithfulness guarantees. Extensive experiments on image classification, object detection, visual grounding, and image captioning show that, among all compared attribution methods, PhaseWin reaches high faithfulness with the fewest forward passes, empirically realizing the predicted reduction from $O(n^2)$ to $O(n)$. The code is available at \url{https://github.com/Qihuai27/phasewin-va}.
\end{abstract}

\begin{IEEEkeywords}
Visual Attribution; Subset Search; Interpretable AI.
\end{IEEEkeywords}
}

\maketitle


\section{Introduction}\label{sec:intro}

\IEEEPARstart{V}{ision} and vision--language models have grown increasingly capable, yet their predictions are produced from complex visual inputs while the underlying evidence remains implicit. A model may correctly classify an object, localize a referred region, or generate a caption, but the image regions that actually support the response cannot be read off from the output alone. Visual attribution addresses this gap by identifying the parts of an input responsible for a target model response~\cite{deng2026attribution}. It has consequently become an important tool for interpreting modern visual systems~\cite{dwivedi2023explainable,nazir2025survey,feng2021review}: diagnosing model behavior and analyzing failure cases~\cite{gao2024going,yang2026can}, detecting spurious correlations and inspecting bias~\cite{chen2026not}, and supporting safetyauditing~\cite{wilson2023safe,liang2025safemobile}. Because these models are increasingly deployed in high-stakes and large-scale settings, attribution methods must be not only faithful to the model's decision process but also computationally practical.

\begin{figure*}[t]
    \centering
    \vspace{-2pt}
    \includegraphics[width=\textwidth]{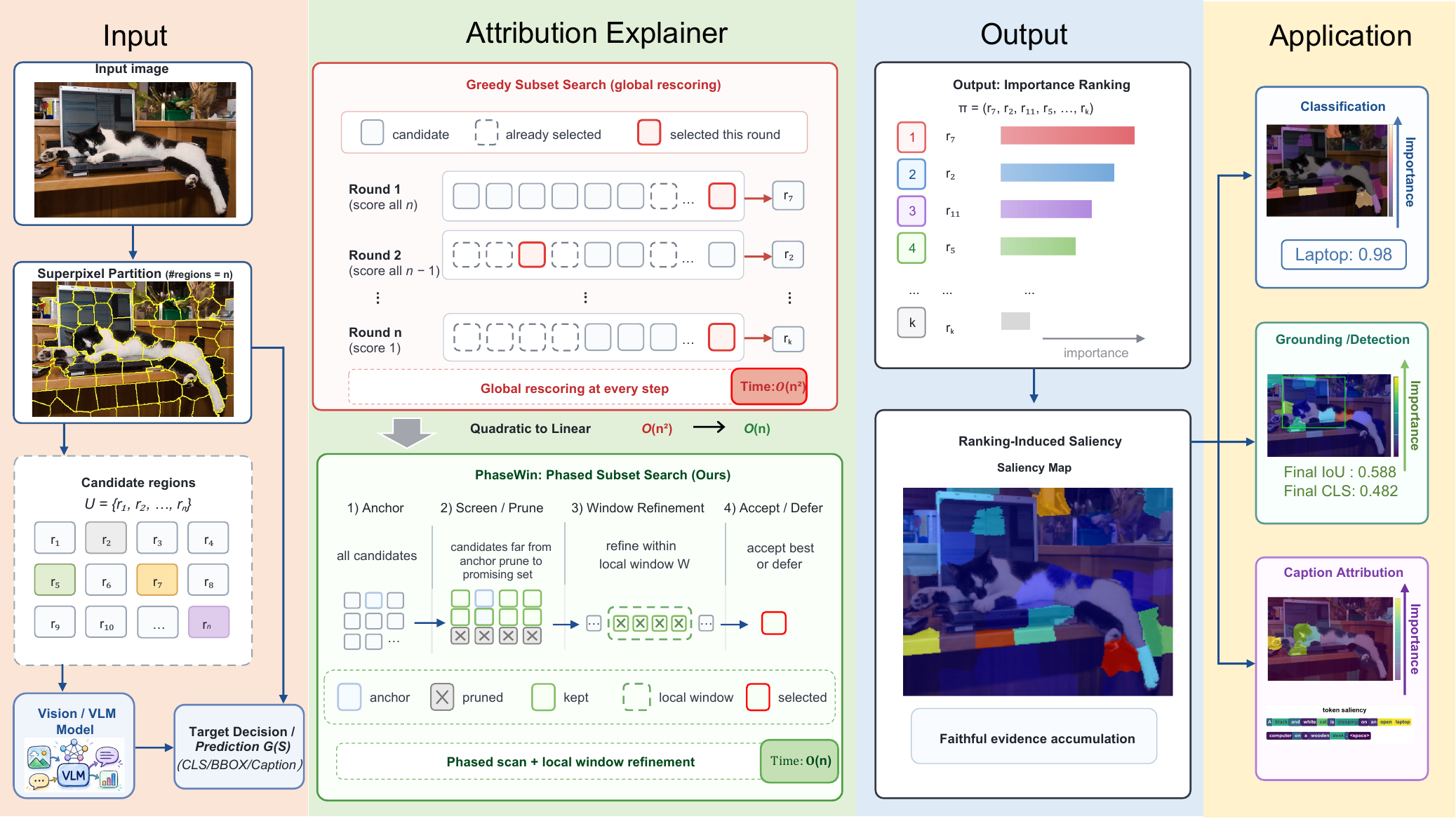}
    \caption{\textbf{Overview of PhaseWin for efficient high-faithfulness visual attribution.}
    From left to right, an input image is partitioned into candidate regions, and a 
    target response is obtained from a vision or vision--language model. Conventional 
    greedy subset search repeatedly rescores all remaining candidates and therefore 
    incurs quadratic evaluation cost. PhaseWin replaces exhaustive global rescoring 
    with phased subset search, including anchoring, screening and pruning, local 
    window refinement, and accept-or-defer decisions. The resulting importance ranking 
    induces a saliency map that can be used for classification, grounding or detection, 
    and caption attribution.}
    \label{fig:intro}
    \vspace{-14pt}
\end{figure*}

A broad class of faithful attribution methods can be cast as a search over visual 
evidence. Given an image partitioned into candidate regions, the goal is to order 
these regions so that progressively inserting highly ranked regions rapidly recovers 
the target model response, while removing them causes the response to drop. This 
ordered-region perspective underlies many perturbation-based~\cite{petsiuk2018rise,
petsiuk2021black} and search-based~\cite{chen2025less} attribution methods. Unlike 
purely gradient-based explanations~\cite{yamauchi2022spatial,yamauchi2024spatial}, 
such methods probe the model directly under controlled visual perturbations and often 
yield stronger faithfulness, especially for black-box or weakly accessible models. 
Their central drawback is cost: faithful evidence search typically demands a large 
number of model evaluations.

Recent greedy-style attribution methods have shown strong empirical performance under 
this search formulation. Methods such as LIMA~\cite{chen2024less,chen2025less}, VPS~
\cite{chen2025vps} rank image regions by optimizing a 
sufficiency--necessity proxy~\cite{sun2023explain}: a selected region set should be sufficient to recover 
the target response, and it should also be necessary in the sense that removing it 
from the image reduces the response. This criterion is natural for attribution because 
it captures two complementary aspects of visual evidence. A truly important region 
should not only support the prediction when present, but also affect the prediction 
when absent. Greedy search is therefore an appealing strategy, since it builds an 
explanation by repeatedly selecting the region that appears most informative under 
the current partial evidence set.

However, previous work~\cite{sun2023explain,chen2025less,chen2025vps} has two fundamental limitations. The first limitation is 
theoretical. Greedy attribution search is often motivated by analogies to submodular 
maximization, where classical greedy algorithms enjoy approximation guarantees. We 
show that this justification does not apply to the standard sufficiency--necessity 
proxy used in visual attribution. In particular, except for degenerate modular cases, 
this proxy cannot be a submodular set function. Intuitively, the proxy couples two 
complementary requirements---recovering the response from the selected set and 
destroying the response by removing that set. Imposing global submodularity on such a 
proxy would force a structure that is too restrictive to model nonlinear visual 
evidence interactions. Consequently, the classical submodular-greedy guarantee cannot 
serve as the theoretical foundation for existing greedy-style attribution methods. 
This does not imply that greedy attribution is empirically ineffective; rather, it 
shows that its effectiveness requires a different and more appropriate explanation.

The second limitation is computational. Standard greedy search performs global 
rescoring at every selection step. After one region is selected, all remaining 
regions are evaluated again to determine the next region. If an image is partitioned 
into \(n\) candidate regions, this procedure requires a quadratic number of expensive 
model evaluations. The cost becomes especially restrictive for large vision--language 
models, black-box APIs, high-resolution images, and dense superpixel partitions. As a 
result, faithful attribution faces a persistent trade-off: exhaustive search can 
produce high-quality explanations, but it is expensive; cheaper alternatives are more 
scalable, but often lose part of the faithfulness that makes perturbation-based 
attribution useful.

These two limitations suggest that a satisfactory solution should do more than simply 
speed up an existing implementation. It should first provide a theory that matches the 
actual attribution proxy being optimized, and then design an algorithm whose efficiency 
does not come at the expense of the evidence ordering that makes greedy search 
faithful. To solve this, we reformulate visual attribution 
as ordered evidence accumulation over an image partition. Instead of assuming a global 
submodular objective, we analyze the sufficiency--necessity proxy under explicit 
evidence-accumulation conditions. This yields a theoretical basis for understanding 
when greedy-style region ordering is meaningful and when a faster search procedure can 
preserve its key faithfulness properties.

Built on this formulation, we propose \emph{PhaseWin}, a phased window search 
algorithm for efficient faithful visual attribution. PhaseWin avoids exhaustive global 
rescoring by organizing the search into repeated phases. Each phase first performs a 
coarse global screening to identify promising candidates, prunes clearly uninformative 
regions, refines the most relevant candidates within a local window, and then either 
accepts the best region or defers ambiguous candidates to later phases. In this way, 
the algorithm concentrates expensive evaluations on regions that are most likely to 
change the target response, rather than repeatedly comparing every remaining region 
against every other region. Figure~\ref{fig:intro} summarizes this pipeline: PhaseWin 
takes the same input partition and target response as greedy attribution, but replaces 
quadratic global rescoring with phased subset search and produces an importance 
ranking that can be converted into task-level saliency maps.

The resulting algorithm has linear evaluation complexity with respect to the number 
of candidate regions under fixed phase and window settings. More importantly, this 
speedup is obtained under a theory that is tailored to the attribution problem. We model a visual understanding framework for evidence accumulation and prove that, under monotone evidence accumulation and additional feature-level 
diminishing-gain conditions, PhaseWin preserves the key ranking behavior of greedy 
search and achieves near-greedy faithfulness guarantees. We further relate these 
guarantees back to the original task response, including response recovery and 
insertion-based evaluation. Thus, PhaseWin is not merely a heuristic acceleration of 
greedy attribution; it provides a validity-preserving route from quadratic greedy 
search to linear-complexity attribution.

Extensive experiments support the proposed formulation and algorithm. For image
classification, we evaluate PhaseWin on ImageNet~\cite{deng2009imagenet} with CLIP ViT-L/14~\cite{radford2021learning}, CLIP ResNet-101~\cite{radford2021learning},
and ResNet-101~\cite{he2016resnet} under both correctly classified and failure-case settings. For
object-level interpretation, we evaluate object detection and referring expression
comprehension on MS COCO~\cite{lin2014microsoft}, LVIS~\cite{gupta2019lvis}, and RefCOCO~\cite{kazemzadeh2014referitgame} with Grounding DINO~\cite{liu2023grounding} and Florence-2~\cite{xiao2024florence}. For
generation, we further test image-caption attribution with Qwen-2.5-VL~\cite{bai2025qwen2} on COCO-style
captioning~\cite{lin2014microsoft} tasks. Across all these scenarios, PhaseWin consistently behaves as an
effective substitute for exhaustive greedy search. It preserves almost the same
faithfulness as greedy search, with only a small metric gap, while using substantially
fewer model evaluations. At the same time, it substantially outperforms all
non-greedy attribution baselines on standard faithfulness metrics, including
Insertion AUC, Deletion AUC, average highest confidence, and early-area recovery
metrics. Thus, PhaseWin occupies a distinct position among attribution methods: it
achieves the high-faithfulness behavior previously associated with greedy search,
but with a much lower evaluation cost. These results indicate that the phase-window
principle is not tied to a specific model, task, or scoring function, but provides a
general acceleration strategy for replacing greedy search in high-faithfulness visual
attribution.

This work makes the following contributions:
\begin{itemize}
    \item We provide a theoretical foundation for greedy-style visual attribution. 
    We show that the standard sufficiency--necessity proxy is not a non-degenerate 
    submodular function, so classical submodular-greedy guarantees cannot justify its 
    use; motivated by this, we reformulate visual attribution as ordered evidence 
    accumulation over image partitions, a model that applies uniformly to 
    classification, detection, grounding, and caption attribution.

    \item We propose PhaseWin, a phased window search algorithm that replaces 
    exhaustive global rescoring with anchor selection, screening and pruning, local 
    window refinement, and accept-or-defer decisions. Under fixed phase and window 
    settings it attains linear evaluation complexity, and under explicit 
    evidence-accumulation conditions we prove that it preserves the key ranking 
    behavior of greedy search with near-greedy faithfulness guarantees. To the best of our knowledge, PhaseWin is the most evaluation-efficient method that attains this level of faithfulness.

    \item We conduct extensive experiments across image classification, object 
    detection, visual grounding, and image captioning, showing that PhaseWin preserves 
    near-greedy faithfulness with only a small metric gap while using substantially 
    fewer model evaluations, and outperforms all non-greedy baselines on standard 
    faithfulness metrics.
\end{itemize}

The present article substantially extends our preliminary conference version~
\cite{gu2026phasewin}. The earlier version mainly focused on empirical acceleration 
for object-level interpretation. This journal version broadens the work in three 
directions. First, it introduces a unified formulation of visual attribution as 
ordered subset search over image partitions. Second, it replaces the previous 
submodular-style narrative with a theory centered on the actual sufficiency--necessity 
proxy, including the impossibility of non-degenerate submodularity, linear evaluation 
complexity, and near-greedy faithfulness guarantees. Third, it expands the empirical 
scope from object-level interpretation to a wider range of attribution settings, 
including image classification, object detection, visual grounding, and image 
captioning.

\section{Related Work}\label{sec:related}

\subsection{Post-hoc Visual Attribution}

Post-hoc visual attribution aims to identify input regions that support a model prediction. Existing methods can be broadly grouped into white-box attribution, perturbation-based attribution, Shapley-style estimation, and search-based attribution. White-box methods construct saliency maps from internal model signals, including relevance propagation~\cite{bach2015pixel}, input gradients~\cite{hassani2017gradient}, activation-gradient maps such as Grad-CAM and Grad-CAM++~\cite{selvaraju2020grad,chattopadhay2018grad}, score-based variants such as Score-CAM~\cite{wang2020score}, and recent architectures-specific methods such as ViT-CX and Grad-ECLIP~\cite{xie2023vit,zhao2024gradient}. Path-integral approaches, including Integrated Gradients and IGOS++, accumulate gradients along a prescribed path~\cite{sundararajan2017axiomatic,khorram2021igos++}. These methods are efficient when model internals are accessible, but their performance is often sensitive to layer choice, baseline design, and architectural details~\cite{novello2022making}.

Perturbation-based methods instead treat the model as a black box and estimate importance from output changes under masked or corrupted inputs. Representative examples include LIME~\cite{ribeiro2016should}, RISE~\cite{petsiuk2018rise}, and dependence-based attribution such as D-HSIC~\cite{novello2022making}. These methods are broadly applicable, but usually require many model evaluations and are sensitive to perturbation granularity, mask resolution, and sampling variance. Shapley-style methods estimate coalition contributions through sampled subsets~\cite{shapley1953value,lundberg2017unified}, sometimes using image regions or structural priors to reduce cost~\cite{sun2023explain,chen2023harsanyinet}. However, coalition-based attribution remains expensive for high-dimensional visual inputs and may dilute the importance of correlated regions~\cite{kumar2020problems}. In contrast, search-based attribution directly constructs an ordered subset of visual regions, making it closely aligned with insertion/deletion faithfulness evaluation~\cite{shitole2021one,chen2024less,chen2025vps}. Our work follows this search-based direction, but focuses on accelerating the ordered region search rather than designing a new attribution score.

\subsection{Attribution Beyond Image Classification}

Compared with image classification, object-level attribution is more challenging because detector outputs couple category prediction, localization, confidence scoring, and post-processing. Existing methods extend gradient-based attribution to detector architectures~\cite{gudovskiy2018explain,selvaraju2020grad,zhao2024gradient_detector}, refine Grad-CAM variants for spatial sensitivity~\cite{yamauchi2022spatial,yamauchi2024spatial,chattopadhay2018grad}, or adapt randomized perturbation to object-level outputs~\cite{petsiuk2018rise,petsiuk2021black}. Other studies analyze detector explanations from complementary perspectives, including diverse rationales~\cite{jiang2023diverse}, architecture comparison~\cite{jiang2024comparing}, representation decomposition~\cite{gandelsman2024interpreting}, and collective pixel contribution~\cite{yamauchi2024explaining}. Recent object-level methods improve faithfulness by searching for compact visual evidence that recovers a target detection~\cite{chen2025vps}, but their greedy region selection requires a quadratic number of model evaluations with respect to the number of candidate regions.

Attribution for multimodal generation introduces another layer of difficulty: the target is no longer a single class or detection score, but a generated sequence whose visual grounding can vary across tokens. Existing work visualizes cross-modal attention, adapts activation maps to token probabilities, or identifies visually grounded tokens in large vision-language models~\cite{ben2024lvlm,omeiza2019smooth,zhang2025redundancy,xing2025large,li2025token,zhang2025mllms,chen2026where}. These methods provide useful grounding evidence, but often depend on internal activations, gradients, attention maps, or token-specific designs. Greedy subset-search attribution offers a more general alternative: given a task-specific evaluation metric, it tests whether a compact set of image regions can recover the target output. This makes the same attribution principle applicable across classification, detection, and caption-level generation.

\subsection{Efficient Subset Search for Attribution}

Greedy subset search is attractive for attribution because it directly builds an ordered insertion trajectory: at each step, the method selects the region that most improves the current response. This procedure often produces highly faithful explanations, but its exhaustive rescoring of all remaining candidates leads to quadratic cost. This bottleneck becomes severe for large models, fine image partitions, or large-scale failure analysis.

A broad algorithmic literature has studied ways to accelerate greedy-style search, including lazy evaluation, multi-stage selection, stochastic candidate reduction, and pruning strategies~\cite{minoux1978lazy,leskovec2007lazy,mirzasoleiman2015lazier,wei2014fast,breuer2020fast}. These methods show that exhaustive rescoring is often unnecessary when many candidates are clearly unpromising. However, visual attribution differs from classical subset selection in two aspects: the goal is not only a high final subset score but also a high-quality ordered response curve, and the scoring function is induced by task-specific model behavior rather than a fixed analytic objective. PhaseWin is designed for this setting. It organizes search into phases, uses anchor regions to prune low-potential candidates, and applies windowed fine-grained selection to promising subsets. As a result, it preserves the empirical strength of greedy attribution while making high-faithfulness subset search practical for large visual and multimodal models.
\section{Method}
\label{sec:method}
To facilitate reading, we provide a notation table in Appendix~\ref{sec:notation}.
\subsection{Problem Setup}
\label{sec:problem_setup}

Let $U$ be a finite ground set with $|U|=n$, and let $G:2^U\to\mathbb R$ denote the base set function of interest. For any set function $H$ and any $X\subseteq U$, $e\in U\setminus X$, define the marginal gain
\begin{equation}
\Delta_H(e\mid X):=H(X\cup\{e\})-H(X).
\end{equation}
Instead of optimizing $G$ directly, we consider the symmetrized search objective
\begin{equation}
\label{eq:search_objective}
F(X):=G(X)+G(U)-G(U\setminus X),\; X\subseteq U.
\end{equation}

PhaseWin searches $F$ and outputs an ordering of elements in $U$. For any ordering $\pi=(\pi_1,\dots,\pi_n)$, define its prefix sets by
\begin{equation}
P_t^\pi:=\{\pi_1,\dots,\pi_t\},\; P_0^\pi:=\varnothing.
\end{equation}
We evaluate an ordering through two prefix-wise quantities induced by $G$:
\begin{itemize}
\item \textbf{Prefix maximum}
\begin{equation}
M_G(\pi):=\max_{0\le t\le n} G(P_t^\pi);
\end{equation}
\item \textbf{Full-cardinality AUC}
\begin{equation}
\operatorname{AUC}_G(\pi):=\sum_{t=1}^n \frac{a_{\pi_t}}{A}\,G(P_t^\pi),
\;
A:=\sum_{e\in U} a_e,
\end{equation}
where $a_e>0$ denotes the area or weight of element $e$.
\end{itemize}

\begin{remark}[Choice of the search objective]
Many perturbation-based attribution methods evaluate subsets directly through $G(X)$, which corresponds to the unsymmetrized choice $F\equiv G$. In contrast, LIMA and VPS adopt the symmetrized objective in Eq.~\eqref{eq:search_objective} and empirically obtain better attribution quality. Intuitively, the complement term compensates for higher-order interactions and makes the objective more amenable to ranking-based selection. To ensure a fair comparison with greedy-style baselines, we keep the same objective and focus exclusively on improving the search procedure.
\end{remark}

To motivate the design of PhaseWin, we introduce a partition-based view of the search objective. Let
\(
\mathcal H=\{H_1,\dots,H_q\}
\)
be a partition of $U$, where $H_i\cap H_j=\varnothing$ for $i\neq j$ and $\bigcup_{j=1}^q H_j=U$. Define the activation signature
\begin{equation}
\chi_{\mathcal H}(X)
:=
\bigl(
\mathbf 1[X\cap H_1\neq\varnothing],\dots,\mathbf 1[X\cap H_q\neq\varnothing]
\bigr),
\end{equation}
and the activated block set
\begin{equation}
B(X):=\{j\in [q]: X\cap H_j\neq\varnothing\}.
\end{equation}
If $\phi:\{0,1\}^q\to \mathbb R$ is a block-activation function, we write its set-valued counterpart as
\begin{equation}
\Phi(J):=\phi(\mathbf 1_J),\; J\subseteq [q].
\end{equation}

The central intuition behind PhaseWin is that the dominant gain in $F$ comes from activating previously uncovered semantic blocks, whereas additional elements selected within an already activated block contribute only small residual improvements. The next subsection turns this structural view into an efficient search algorithm; the formal assumptions required for the guarantees are stated in Sec.~\ref{sec:theory}.

\begin{figure*}[t]
    \centering
    \includegraphics[width=\textwidth]{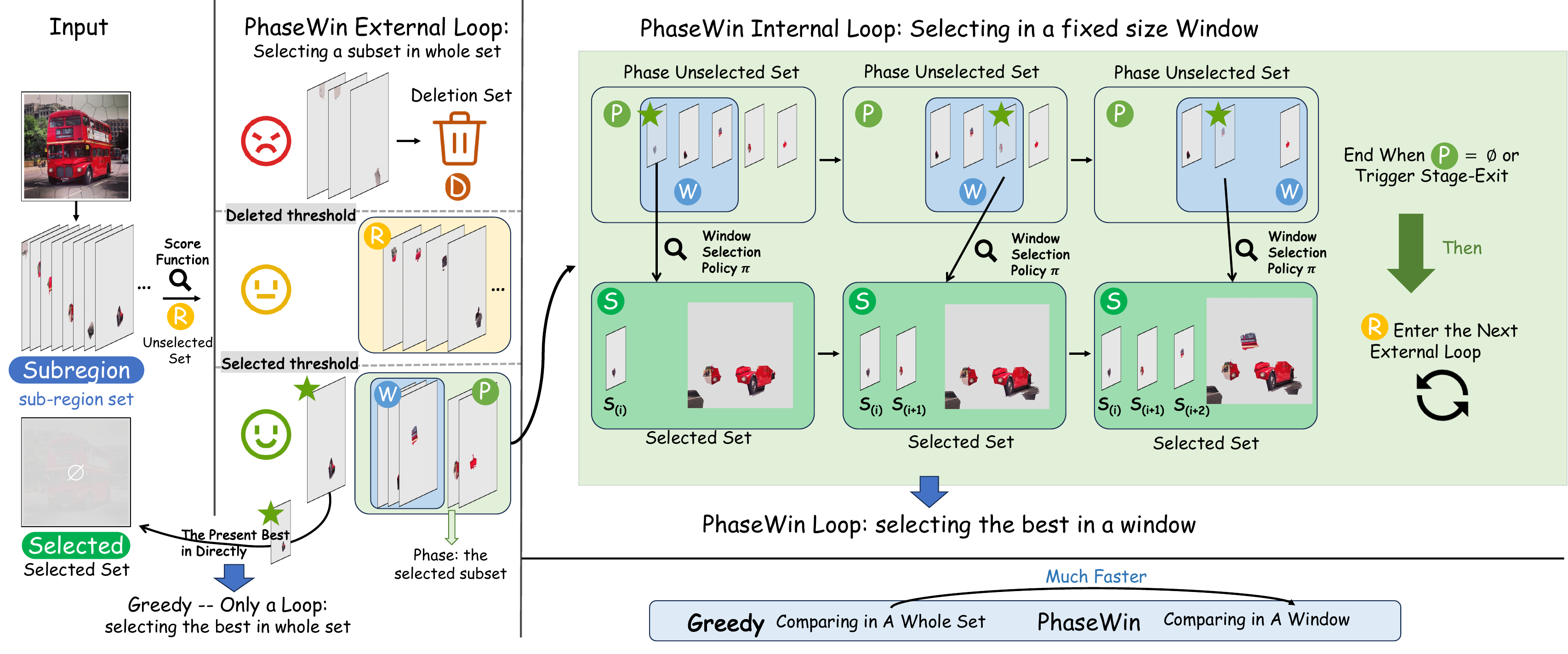}
    \caption{\textbf{PhaseWin workflow.} The algorithm alternates between (i) selecting a high-confidence anchor, (ii) pruning candidates via fixed-ratio thresholds, and (iii) performing windowed local refinement with dynamic supervision.}
    \label{fig:phasewin_workflow}
    \vspace{-12pt}
\end{figure*}

\subsection{Phase-Window Accelerated Search}
\label{sec:phasewin_alg}

A naive greedy procedure recomputes $\Delta_F(e\mid S)$ for every remaining candidate at every step, which leads to $\Theta(n^2)$ evaluations when $|U|=n$. PhaseWin preserves the ranking tendency of greedy search while sharply reducing repeated evaluations through a phased coarse-to-fine strategy.

Given a target cardinality $k$, PhaseWin returns an ordered list
\(
\Pi=(\Pi_1,\dots,\Pi_k),
\)
together with its selected set
\(
S=\{\Pi_1,\dots,\Pi_k\}.
\)
When $k=n$, the output is the full ordering used in the theoretical analysis, which we denote by $\pi^{\mathrm{PW}}$; when $k<n$, the output is a truncated attribution sequence. The overall workflow is shown in Fig.~\ref{fig:phasewin_workflow}.

At the beginning of each phase, the algorithm performs one global scan over the current remaining pool $\mathcal R$ and computes cached gains $\{g_r\}_{r\in\mathcal R}$ with respect to the current set $S$, where
\(
g_r:=\Delta_F(r\mid S).
\)
The highest-gain candidate is accepted as the phase anchor,
\[
\alpha^\star\in\arg\max_{r\in\mathcal R} g_r,
\]
and its gain becomes the phase reference
\begin{equation}
\Delta_{\mathrm{ref}}:=g_{\alpha^\star}.
\end{equation}
Using this reference, PhaseWin constructs two fixed-ratio thresholds,
\begin{equation}
\tau_{\mathrm{sel}}=\rho_{\mathrm{sel}}\Delta_{\mathrm{ref}},
\;
\tau_{\mathrm{del}}=\rho_{\mathrm{del}}\Delta_{\mathrm{ref}},
\;
0<\rho_{\mathrm{del}}<\rho_{\mathrm{sel}}<1.
\end{equation}
The cached gains from the same global scan are then used to partition the remaining candidates into three groups: a high-potential pool $\mathcal P$, a deferred set for the next phase, and a discarded set. This three-way split prevents obviously weak candidates from being repeatedly rescored.

\begin{algorithm}[t]
\caption{PhaseWin: Phase-Window Accelerated Search}
\label{alg:phasewin}
\renewcommand{\gets}{\leftarrow}

\KwIn{Ground set $U$, target size $k$, scoring function $F(\cdot)$, window size $\omega$, window policy $\psi$}
\KwOut{Ordered list $\Pi$}

$\Pi \gets [\,]$; \quad $S \gets \emptyset$; \quad $\mathcal R \gets U$; \quad $\Delta_{\mathrm{ref}} \gets +\infty$\;

\While{$|\Pi| < k$ \textbf{and} $\mathcal R \neq \emptyset$}{
    \tcp{Global anchor selection}
    $g_r \gets \Delta_F(r\mid S)=F(S\cup\{r\})-F(S)$ for all $r\in \mathcal R$\;
    $\alpha^\star \gets \arg\max_{r\in\mathcal R} g_r$\;
    \texttt{append} $\alpha^\star$ to $\Pi$; \quad $S \gets S\cup\{\alpha^\star\}$\;
    $\Delta_{\mathrm{ref}} \gets g_{\alpha^\star}$; \quad $\mathcal R \gets \mathcal R\setminus\{\alpha^\star\}$\;

    \tcp{Fixed-ratio pruning}
    $\tau_{\mathrm{sel}} \gets \rho_{\mathrm{sel}}\Delta_{\mathrm{ref}}$\;
    $\tau_{\mathrm{del}} \gets \rho_{\mathrm{del}}\Delta_{\mathrm{ref}}$\;

    $\mathcal P \gets \emptyset$; \quad $\mathcal R_{\mathrm{next}} \gets \emptyset$\;
    \For{$r\in\mathcal R$}{
        \lIf{$g_r \ge \tau_{\mathrm{sel}}$}{$\mathcal P \gets \mathcal P\cup\{r\}$}
        \lElseIf{$g_r \le \tau_{\mathrm{del}}$}{discard $r$}
        \lElse{$\mathcal R_{\mathrm{next}} \gets \mathcal R_{\mathrm{next}}\cup\{r\}$}
    }
    $\mathcal R \gets \mathcal R_{\mathrm{next}}$\;

    \tcp{Windowed local refinement}
    $(\Pi_{\mathrm{phase}},S_{\mathrm{phase}},\Delta_{\mathrm{ref}})
    \gets
    \texttt{WindowSelection}(\mathcal P,S,k-|\Pi|,F,\Delta_{\mathrm{ref}},\omega,\psi)$\;
    \texttt{append} $\Pi_{\mathrm{phase}}$ to $\Pi$; \quad $S \gets S\cup S_{\mathrm{phase}}$\;
}

\KwRet{$\Pi$}\;
\end{algorithm}

The \texttt{WindowSelection} subroutine performs a fine-grained search only within the pruned pool $\mathcal P$. We first sort $\mathcal P$ by the cached gains $g_r$, place the top $\omega$ candidates into a sliding window $W$, and store the remaining candidates in a queue $Q$. A window policy $\psi(\cdot)$ is then used to choose a subset $A\subseteq W$ for exact reevaluation. We reserve $\psi$ for window policies and keep $\pi$ for orderings to avoid notation conflict.

Table~\ref{tab:phasewin_policies} summarizes the window policies considered in this work. The complexity discussion below is written for a generic policy $\psi$; in particular, the effective local-search factor is $f(\omega)=\omega$ for $\psi_{\mathrm{LG}}$ and $f(\omega)=\log(\omega)$ for $\psi_{\mathrm{BA}}$.

\begin{table}[t]
\centering
\small
\caption{Window-selection policies $\psi(\cdot)$ used within \texttt{WindowSelection}.}
\label{tab:phasewin_policies}
\setlength{\tabcolsep}{4pt}
\begin{tabular}{l p{0.64\columnwidth}}
\toprule
\textbf{Policy} & \textbf{Description} \\
\midrule
$\psi_{\mathrm{LG}}$ &
\textbf{Local-Greedy:} Reevaluates only the highest-ranked candidate in the current window. \\

$\psi_{\mathrm{BA}}$ &
\textbf{Beta-Adaptive:} Reevaluates all candidates whose cached gains exceed an adaptive fraction of the maximum cached gain in the window. \\

$\psi_{\mathrm{T2}}$ &
\textbf{Top-2:} Jointly reevaluates the top two candidates when both appear competitive and their relative gap is sufficiently small. \\

$\psi_{\mathrm{BAF\mbox{-}B}}$ &
\textbf{Batched Best-Above with Forward Checking:} Processes the window in short batches and terminates early when the remaining cached gains are no longer competitive. \\
\bottomrule
\end{tabular}
\end{table}

For each candidate $\alpha\in A$, the algorithm recomputes its true gain $\Delta_F(\alpha\mid S)$. A \emph{phase-exit} rule then compares this value with the current reference $\Delta_{\mathrm{ref}}$: if
\begin{equation}
\Delta_F(\alpha\mid S)<\theta\,\Delta_{\mathrm{ref}},
\end{equation}
the current phase terminates early, since the remaining candidates are unlikely to alter the prefix order substantially. Otherwise, the candidate is processed by an \emph{annealing delay} mechanism, which either accepts it immediately or postpones it to encourage local exploration. Accepted candidates are appended to $\Pi$, inserted into $S$, and used to update $\Delta_{\mathrm{ref}}$. The window is then replenished from $Q$ until either $|\Pi|=k$ or no promising candidates remain.

This design directly matches the partition-based view introduced in Sec.~\ref{sec:problem_setup}. Under a decomposition of the form
\(
F(X)=\Phi(B(X))+R(X),
\)
the global anchor step captures high-value block activations, while the windowed refinement resolves the smaller within-block differences contributed by the residual term $R(X)$. PhaseWin therefore separates the expensive global search for new informative blocks from the cheaper local ranking within the current high-potential pool.

As shown in Theorem~\ref{thm:linear}, if the number of effective phase transitions is bounded independently of $n$, then the total evaluation cost is
\[
O\!\bigl(n(f(\omega)+1)\bigr),
\]
which becomes linear in $n$ for fixed $\omega$.

\subsection{Theoretical Guarantees}
\label{sec:theory}

Then we state the assumptions and guarantees used to analyze PhaseWin.  The analysis distinguishes between a target-cardinality guarantee, which applies to the first $k$ accepted regions, and full-order prefix/AUC guarantees, which apply when PhaseWin is run to produce a complete ordering.  For a PhaseWin output $\Pi=(v_1,\ldots,v_k)$, define
\begin{equation}
S_i^{\mathrm{PW}}:=\{v_1,\ldots,v_i\},
\;
S_0^{\mathrm{PW}}:=\varnothing .
\end{equation}

We first assume that $F$ is a monotone set function and has properties as follows.

\begin{assumption}[Partition-dominant structure]
\label{ass:partition}
There exist a monotone submodular function $\Phi:2^{[q]}\to\mathbb R_+$ with $\Phi(\varnothing)=0$ and a residual term $R:2^U\to\mathbb R_+$ such that
\begin{equation}
F(X)=\Phi(B(X))+R(X),\; \forall X\subseteq U,
\end{equation}
where $R(\varnothing)=0$ and
\begin{equation}
0\le \Delta_R(e\mid X)\le \varepsilon_R,
\;
\forall X\subseteq U,\ e\in U\setminus X.
\end{equation}
Further, for approximate window policies, we further assume
\begin{equation}
\label{eq:kappa-gap}
\varepsilon_R<\kappa_1\,\underline{\Delta}_\Phi,
\end{equation}
where
\begin{equation}
\label{eq:block-gap}
\begin{aligned}
\Delta_\Phi(j\mid J)
&:= \Phi(J\cup\{j\})-\Phi(J),\\
\underline{\Delta}_\Phi
&:=
\min_{J\subsetneq [q],\, j\notin J}
\Delta_\Phi(j\mid J).
\end{aligned}
\end{equation}

\end{assumption}

Let $\mathcal R_i$ be the live candidate pool immediately before $v_i$ is accepted, and let $\mathcal D_i$ be the set of candidates hard-deleted at the same decision point; for ordinary window steps without deletion, $\mathcal D_i=\varnothing$. Define
\begin{equation}
a_i:=\max_{e\in\mathcal R_i}\Delta_F(e\mid S_{i-1}^{\mathrm{PW}}).
\end{equation}
To accelerate the algorithm, we may adopt different selection threshold ratios, typically increasing step by step and thus there exist increasing policy-dependent constants $\beta_i^\psi\in(0,1]$ such that, with
\begin{equation}
\kappa_i:=\rho_{\mathrm{sel}}\beta_i^\psi,
\end{equation}
the accepted element satisfies
\begin{equation}
\Delta_F(v_i\mid S_{i-1}^{\mathrm{PW}})
\ge
\kappa_i\,a_i,
\; i=1,\ldots,k.
\end{equation}
Moreover, every deleted candidate satisfies
\begin{equation}
\Delta_F(e\mid S_{i-1}^{\mathrm{PW}})
\le
\rho_{\mathrm{del}}\,
\Delta_F(v_i\mid S_{i-1}^{\mathrm{PW}}),
\;
\forall e\in\mathcal D_i.
\end{equation}

For the full-order prefix and AUC guarantees, we additionally use a block-safe live-pool condition:

\begin{assumption}[Window-faithful selection]
\label{ass:windowfaithful}
    \begin{equation}
    \label{eq:block-safe}
    \mathcal R_i\cap H_j\neq\varnothing,
    \;
    \forall i\le q,\ \forall j\notin B(S_{i-1}^{\mathrm{PW}}).
    \end{equation} 
\end{assumption}
This condition states that before all semantic blocks have been activated, every inactive block still has at least one live representative.

\begin{remark}
The condition in Eq.~\eqref{eq:block-safe} is mainly introduced to facilitate theoretical analysis,
ensuring that each unactivated semantic block remains observable during the search process.

In practical implementations of PhaseWin, this condition is typically satisfied implicitly.
In particular, we adopt a very small deletion ratio $\rho_{\mathrm{del}}$, so that the deletion
operation is rarely triggered. As a result, most candidate regions are preserved across phases,
and the live pool $\mathcal R_i$ continues to contain representatives from nearly all semantic blocks.

Therefore, although Eq.~\eqref{eq:block-safe} appears as a structural assumption in the analysis,
it does not impose a restrictive requirement in practice.
\end{remark}

Since we optimize $F$ instead of $G$, a two-sided alignment of $F$ and $G$ should be added. A counterexample is given in \Cref{app:proofs} stating that without this assumption, the algorithm fails.

\begin{assumption}[Two-sided alignment]
\label{ass:align}
There exist constants $0\leq \lambda_1\leq \lambda_2$ and $b_1,b_2\in\mathbb{R}$ such that for any $X$
\begin{equation}
\lambda_1G(X)+b_1\leq F(X)\leq \lambda_2G(X)+b_2.
\end{equation}
\end{assumption}
\begin{remark}
    Usually $b_1$ and $b_2$ are related to $\epsilon_R$ and $k$, and $\lambda_1, \lambda_2$ are related to the submodular ration of $G$. Typically $\lambda_1\leq 1\leq \lambda_2$.
\end{remark}

Under the above assumptions, we state the main theorems as following:

\textbf{Target-cardinality approximation.}
Let
\begin{equation}
\begin{aligned}
&S_{F,k}^\star
\in \arg\max_{|X|\le k} F(X),\\
&S_{G,k}^\star
\in \arg\max_{|X|\le k} G(X),\\
&\mathrm{OPT}_{G,k}
:= G(S_{G,k}^\star).
\end{aligned}
\end{equation}
Define
\begin{equation}
\label{eq:Ck}
C_k(\kappa,\rho_{\mathrm{del}})
:=
\frac{
1-\left(
1-\frac{\kappa(1+k\rho_{\mathrm{del}})}{k}
\right)^k
}{
1+k\rho_{\mathrm{del}}
}.
\end{equation}

\begin{theorem}[Cardinality-$k$ search-objective guarantee]
\label{thm:finiteF}
Under Assumptions~\ref{ass:partition} and \ref{ass:windowfaithful}, suppose
\begin{equation}
0\le
1-\frac{\kappa_1(1+k\rho_{\mathrm{del}})}{k}
\le 1.
\end{equation}
Then the first $k$ elements returned by PhaseWin satisfy
\begin{equation}
F(S_k^{\mathrm{PW}})
\ge
C_k(\kappa_1,\rho_{\mathrm{del}})
\left(
F(S_{F,k}^\star)-k\varepsilon_R
\right).
\end{equation}
and under Assumption~\ref{ass:align}
\begin{equation}
\begin{aligned}
G(S_k^{\mathrm{PW}})
\;\ge\;&
\frac{C_k(\kappa_1,\rho_{\mathrm{del}})\lambda_1}{\lambda_2}\,\mathrm{OPT}_{G,k}\\
&+\frac{C_k(\kappa_1,\rho_{\mathrm{del}}) b_1-
C_k(\kappa_1,\rho_{\mathrm{del}}) k\varepsilon_R-b_2}{\lambda_2}.
\end{aligned}
\end{equation}
In particular, if $\kappa_1=1-o(1)$ and $\rho_{\mathrm{del}}=o(1/k)$, then
\begin{equation}
C_k(\kappa_1,\rho_{\mathrm{del}})
=
1-\frac{1}{e}-o(1),
\end{equation}
\end{theorem}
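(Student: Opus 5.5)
The proof follows the standard approximate-greedy argument for monotone submodular maximization. It is run on the submodular surrogate $\Phi\circ B$, and the residual $R$ and the hard deletions are absorbed as additive and multiplicative losses.

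\textbf{Step 1: an approximate-submodularity inequality for $F$.}
Fix $i$ and let $O=S_{F,k}^\star$, with $|O|\le k$. We want
\begin{equation}
F(O)-F(S_{i-1}^{\mathrm{PW}})\le \sum_{e\in O\setminus S_{i-1}^{\mathrm{PW}}}\Delta_F(e\mid S_{i-1}^{\mathrm{PW}})+k\varepsilon_R .
\end{equation}
To see this, write $F=\Phi\circ B+R$ (Assumption~\ref{ass:partition}).
\begin{itemize}
\item The map $X\mapsto\Phi(B(X))$ is monotone submodular on $2^U$, because composing a monotone submodular $\Phi$ with the coverage map $B$ preserves both properties. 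The usual telescoping over $O$ therefore bounds its increment by the sum of singleton gains.
\item Since $R$ has nonnegative marginals bounded by $\varepsilon_R$, it satisfies $R(O\cup S)-R(S)\le k\varepsilon_R$ and $\Delta_R\ge0$. Hence replacing $\Phi$-gains by $F$-gains only increases the right-hand side.
\end{itemize}

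\textbf{Step 2: relating the terms in Step 1 to the accepted gain.}
Each $e\in O\setminus S_{i-1}^{\mathrm{PW}}$ falls into one of two cases.
\begin{itemize}
\item $e$ is still live at step $i$. Then $\Delta_F(e\mid S_{i-1})\le a_i\le \Delta_F(v_i\mid S_{i-1})/\kappa_i$.
\item $e$ was hard-deleted at an earlier decision point $i'\le i$. Then $\Delta_F(e\mid S_{i'-1})\le\rho_{\mathrm{del}}\Delta_F(v_{i'}\mid S_{i'-1})$.
\end{itemize}
To transport a deleted element's gain from $S_{i'-1}$ to $S_{i-1}$, I would use submodularity of the $\Phi$ part plus the $\varepsilon_R$ slack. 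Assumption~\ref{ass:windowfaithful} guarantees that an inactive block always retains a live representative. So a deleted $e$ whose block is still inactive can be charged to that representative, and otherwise its $\Phi$-gain is zero and only the $R$-slack remains.

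The crude bookkeeping I will commit to is to bound the total deleted contribution by $k\rho_{\mathrm{del}}\Delta_F(v_i\mid S_{i-1})$ per step. With $\kappa_i\ge\kappa_1$ (the constants $\kappa_i$ are increasing), this gives
\begin{equation}
F(O)-F(S_{i-1})-k\varepsilon_R\le \frac{k(1+k\rho_{\mathrm{del}})}{\kappa_1}\bigl(F(S_i)-F(S_{i-1})\bigr).
\end{equation}
This is the main obstacle. The factor $(1+k\rho_{\mathrm{del}})$ must come out exactly as in $C_k$, including the extra division by it. I expect the paper instead charges deletions by enlarging the effective comparison set, giving a recursion of the form $\delta_i\le(1-\kappa_1(1+k\rho)/k)\delta_{i-1}$ on the scaled gap $\delta_i=(1+k\rho)^{-1}\bigl(F(O)-k\varepsilon_R\bigr)-F(S_i)$. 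I would pick whichever charging scheme makes this recursion close, checking the sign condition in the hypothesis.

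\textbf{Step 3: unrolling the recursion.}
Set $c=\kappa_1(1+k\rho_{\mathrm{del}})/k$, which lies in $[0,1]$ by hypothesis, and write $\delta_i$ for the gap defined in Step 2. The recursion gives $\delta_k\le(1-c)^k\delta_0$. Using $F(\varnothing)\ge0$, this yields
\begin{equation}
F(S_k)\ge\frac{1-(1-c)^k}{1+k\rho_{\mathrm{del}}}\bigl(F(O)-k\varepsilon_R\bigr).
\end{equation}
The first inequality of the theorem follows.

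\textbf{Step 4: transfer to $G$.}
Apply Assumption~\ref{ass:align} on both sides. From $\lambda_2G(S_k)+b_2\ge F(S_k)$ and $F(O)\ge F(S_{G,k}^\star)\ge\lambda_1\mathrm{OPT}_{G,k}+b_1$, we obtain
\begin{equation}
\lambda_2G(S_k)+b_2\ge C_k\bigl(\lambda_1\mathrm{OPT}_{G,k}+b_1-k\varepsilon_R\bigr),
\end{equation}
and rearranging gives the stated bound.

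\textbf{The asymptotic claim.}
If $\kappa_1\to1$ and $k\rho_{\mathrm{del}}\to0$, then $(1-c)^k\to e^{-1}$ and the denominator tends to $1$. Hence $C_k=1-1/e-o(1)$.
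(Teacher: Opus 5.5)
\paragraph{The gap.} The problem is in Step~2, which is exactly where $C_k$ is produced, and you leave it open.

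The bookkeeping you commit to, bounding the deleted contribution at step $i$ by $k\rho_{\mathrm{del}}\Delta_F(v_i\mid S_{i-1})$, does not follow from the deletion rule. An element $e\in\mathcal D_{i'}$ is controlled by $\rho_{\mathrm{del}}d_{i'}$, the accepted gain at its own deletion time. Nothing gives $d_{i'}\le d_i$, and accepted gains typically decrease.

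Even granting that bound, your displayed inequality yields $c_F-F_i\le\bigl(1-\frac{\kappa_1}{k(1+k\rho_{\mathrm{del}})}\bigr)(c_F-F_{i-1})$, with $c_F:=F(S_{F,k}^\star)-k\varepsilon_R$ and $F_i:=F(S_i)$. That gives the constant $1-\bigl(1-\frac{\kappa_1}{k(1+k\rho_{\mathrm{del}})}\bigr)^k$. This is not $C_k$ and can be strictly smaller: for $\kappa_1=1$, $k\rho_{\mathrm{del}}=1$, $k\to\infty$ it tends to $1-e^{-1/2}\approx0.39$, versus $(1-e^{-2})/2\approx0.43$. Step~3 then unrolls a recursion with contraction $\mu_k:=1-\kappa_1(1+k\rho_{\mathrm{del}})/k$ that has only been conjectured.

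There is also a smaller leak. By converting $\Phi$-gains to $F$-gains already in Step~1, transporting a deleted element's gain from $S_{i'-1}$ to $S_{i-1}$ costs an extra $\varepsilon_R$ per element, since $\Delta_R$ is not diminishing. That cost is not covered by the single $k\varepsilon_R$ in the bound.

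\paragraph{The missing idea.} The paper keeps the deleted terms at block level and charges them to the accumulated value rather than to the current gain.

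\begin{itemize}
\item For $e\in S_{F,k}^\star\cap\mathcal D_{i'}$ with $i'<\ell$, and $b(e)$ the block containing $e$, submodularity of $\Phi$ gives $\Delta_\Phi(b(e)\mid B(S_{\ell-1}))\le\Delta_\Phi(b(e)\mid B(S_{i'-1}))\le\Delta_F(e\mid S_{i'-1})\le\rho_{\mathrm{del}}d_{i'}$. There is no residual slack here.
\item Summing over at most $k$ such elements gives at most $k\rho_{\mathrm{del}}\sum_{i<\ell}d_i=k\rho_{\mathrm{del}}F_{\ell-1}$.
\item The live terms are bounded by $\frac{k}{\kappa_1}d_\ell$, and $\Phi(B(S_{\ell-1}))\le F_{\ell-1}$. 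Together these give $c_F\le(1+k\rho_{\mathrm{del}})F_{\ell-1}+\frac{k}{\kappa_1}(F_\ell-F_{\ell-1})$.
\item Rearranged, this is $F_\ell\ge\frac{\kappa_1}{k}c_F+\mu_kF_{\ell-1}$, which is exactly your scaled-gap recursion. The hypothesis $\mu_k\ge0$ is what permits iterating it.
\end{itemize}

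\paragraph{An alternative that also works.} Your remark about live representatives closes the argument on its own if you carry it out at block level.

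\begin{itemize}
\item For $\ell\le q$, Assumption~\ref{ass:windowfaithful} (via a live representative and $\Delta_R\ge0$) gives $\Delta_\Phi(j\mid B(S_{\ell-1}))\le a_\ell$ for every inactive block $j$.
\item The gap condition \eqref{eq:kappa-gap} forces each of the first $q$ accepted elements to open a new block while one remains inactive. A live representative makes $a_i\ge\underline{\Delta}_\Phi$, hence $d_i\ge\kappa_1\underline{\Delta}_\Phi>\varepsilon_R$, which no element of an active block attains. So no block is inactive after step $q$.
\item This yields $F_\ell\ge\frac{\kappa_1}{k}c_F+\bigl(1-\frac{\kappa_1}{k}\bigr)F_{\ell-1}$.
\item The resulting constant is $1-(1-\kappa_1/k)^k\ge C_k(\kappa_1,\rho_{\mathrm{del}})$, by concavity of $y\mapsto1-(1-y)^k$; the case $c_F<0$ is trivial. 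There is no deletion loss at all.
\end{itemize}

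The paper's proof, by contrast, never invokes Assumption~\ref{ass:windowfaithful}. Your Steps~3--4 and the asymptotics are fine once the recursion is established.
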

\textbf{Complexity.}
Let $f_\psi(\omega)$ denote the number of true reevaluations induced by the window policy $\psi$ per window scan. For example, $f_{\psi_{\mathrm{LG}}}(\omega)=\omega$ and $f_{\psi_{\mathrm{BA}}}(\omega)=\log\omega$ in the policies considered in this work.

\begin{theorem}[Near-linear complexity]
\label{thm:linear}
Under Assumptions~\ref{ass:partition}, every effective non-terminal phase whose live pool still contains an inactive block activates at least one previously inactive block. Consequently, the number of effective non-terminal phases is at most $q$, and the total number of evaluations of $F$ is
\begin{equation}
O\!\bigl((q+1)n(f_\psi(\omega)+1)\bigr).
\end{equation}
When $q$ is independent of $n$, this reduces to $O(n(f_\psi(\omega)+1))$, and to $O(n)$ for fixed $\omega$.
\end{theorem}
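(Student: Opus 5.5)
The proof has two parts: a structural claim about what one effective phase accomplishes, and an accounting argument that turns this into an evaluation count.

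\emph{Structural claim.} Fix an effective non-terminal phase that begins with current set $S$ and live pool $\mathcal R$, and suppose some block $H_j$ with $j\notin B(S)$ still has a live representative $e\in\mathcal R\cap H_j$. By Assumption~\ref{ass:partition} and monotonicity of $R$,
\[
\Delta_F(e\mid S)\ \ge\ \Delta_\Phi(j\mid B(S))\ \ge\ \underline{\Delta}_\Phi .
\]
Any candidate $r$ lying in an already active block has $B(S\cup\{r\})=B(S)$, so $\Delta_F(r\mid S)=\Delta_R(r\mid S)\le\varepsilon_R$. Since $\varepsilon_R<\kappa_1\underline{\Delta}_\Phi\le\underline{\Delta}_\Phi$ by Eq.~\eqref{eq:kappa-gap}, the global argmax $\alpha^\star$ computed at the start of the phase cannot lie in an active block. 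Hence $\alpha^\star$ activates a new block.

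This covers every phase in which an inactive block still has a live representative. Each such phase strictly enlarges $B(S)$, and $|B(S)|\le q$, so there are at most $q$ of them. After all blocks are active, or once no inactive block has a live representative, at most one further phase is counted as terminal, for instance when the loop exits because $\mathcal R=\emptyset$ or $|\Pi|=k$. I will read ``effective'' so that a phase with no inactive live block falls into this terminal, bounded count. This gives at most $q+1$ phases.

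\emph{Cost per phase.} Each phase consists of two parts.
\begin{itemize}
\item \textbf{Global scan.} It evaluates $F(S\cup\{r\})$ for every $r\in\mathcal R$, plus $F(S)$, which costs at most $n+1$ evaluations.
\item \textbf{WindowSelection.} It operates on $\mathcal P\subseteq\mathcal R$. Every window scan either accepts an element (removing it from the pool) or triggers phase exit. Each candidate enters the window at most once from $Q$, and deferred candidates are passed to $\mathcal R_{\mathrm{next}}$ rather than rescanned within the same phase. Consequently the number of window scans in a phase is at most $|\mathcal P|+1\le n+1$, and each scan costs $f_\psi(\omega)$ true reevaluations.
\end{itemize}
So a phase costs $O(n(f_\psi(\omega)+1))$, and summing over at most $q+1$ phases gives the stated $O((q+1)n(f_\psi(\omega)+1))$. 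The specializations for $q$ independent of $n$ and for fixed $\omega$ follow immediately.

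\emph{Main obstacle.} The delicate step is bounding the window-scan count per phase. The annealing-delay mechanism could in principle postpone a candidate and re-place it in the window repeatedly. I will handle this by reading the algorithm so that a delayed candidate is either moved to the tail of $Q$ at most a constant number of times, or deferred to the next phase. If repeated in-phase postponement were allowed, I would instead bound the total number of delays per candidate by a constant $c$, which only inflates the per-phase cost by a factor $c$.

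A second subtlety is that Eq.~\eqref{eq:kappa-gap} is stated for approximate policies. For the global argmax step the weaker condition $\varepsilon_R<\underline{\Delta}_\Phi$ suffices, and it is implied by Eq.~\eqref{eq:kappa-gap} because $\kappa_1\le 1$.
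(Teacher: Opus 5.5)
Your proposal is correct and follows essentially the same route as the paper. The gap $\varepsilon_R<\kappa_1\underline{\Delta}_\Phi\le\underline{\Delta}_\Phi$ forces the global-scan anchor into an inactive block whenever one is still live, which gives at most $q$ block-activating phases plus one terminal phase, and each phase is charged $O(n)$ for the scan and $O(n f_\psi(\omega))$ for window reevaluations. You flag two points as interpretive: that only one phase follows full activation, and that in-phase postponement may repeat. The paper treats both at the same level of rigor, citing the exit criterion for the first and simply charging at most $n$ candidates to the windows of one phase for the second.
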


\textbf{Full-order prefix and AUC guarantees.}
For the following two guarantees, which are direct results of \ref{thm:finiteF}, PhaseWin is run in the full-order regime $k=n$ and the resulting ordering is denoted by $\pi^{\mathrm{PW}}$. 

\begin{corollary}[Prefix-maximum guarantee]
\label{thm:max}
Under Assumptions~\ref{ass:partition}--\ref{ass:align}, let
\begin{equation}
\begin{aligned}
X^\star
&\in \arg\max_{X\subseteq U} G(X),\\
J^\star
&:= B(X^\star),\\
r^\star
&:= |J^\star|,\\
\mathrm{OPT}&_G
:= G(X^\star).
\end{aligned}
\end{equation}
Then
\begin{equation}
\begin{aligned}
M_G(\pi^{\mathrm{PW}})
\ge & \frac{\lambda_1}{\lambda_2}\cdot C_{r^*}(\kappa_1,\rho_{\mathrm{del}}) \mathrm{OPT}_G\\
&+\frac{C_{r^*}(\kappa_1,\rho_{\mathrm{del}}) b_1-
C_{r^*}(\kappa_1,\rho_{\mathrm{del}}) r^*\varepsilon_R-b_2}{\lambda_2}.
\end{aligned}
\end{equation}
\end{corollary}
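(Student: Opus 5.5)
The plan is to reduce Corollary~\ref{thm:max} to Theorem~\ref{thm:finiteF} with the cardinality chosen as $k=r^\star$. Since $\pi^{\mathrm{PW}}$ is a full ordering, its prefix $P_{r^\star}^{\pi^{\mathrm{PW}}}$ coincides with $S_{r^\star}^{\mathrm{PW}}$. Running PhaseWin with $k=n$ produces the same first $r^\star$ accepted elements as a run truncated at $k=r^\star$, because acceptance decisions depend only on the current $S$ and live pool, not on the remaining budget. Hence
\[
M_G(\pi^{\mathrm{PW}})\ge G(P_{r^\star}^{\pi^{\mathrm{PW}}})=G(S_{r^\star}^{\mathrm{PW}}),
\]
and it suffices to lower bound $G(S_{r^\star}^{\mathrm{PW}})$. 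One subtlety needs checking. With $k=n$, deletions and pruning thresholds are identical, but the budget argument $k-|\Pi|$ passed to \texttt{WindowSelection} differs. I would argue that this budget only caps the number of acceptances and does not alter their order. The hypothesis of Theorem~\ref{thm:finiteF}, $0\le 1-\kappa_1(1+r^\star\rho_{\mathrm{del}})/r^\star\le 1$, is tacitly needed for $C_{r^\star}$ to make sense, and I would state it as inherited.

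Next, I would apply the first conclusion of Theorem~\ref{thm:finiteF} with $k=r^\star$:
\[
F(S_{r^\star}^{\mathrm{PW}})\ge C_{r^\star}\bigl(F(S_{F,r^\star}^\star)-r^\star\varepsilon_R\bigr).
\]
The key comparison is $F(S_{F,r^\star}^\star)\ge \lambda_1\mathrm{OPT}_G+b_1$. To obtain it, I would build a set of size at most $r^\star$ that is as good as $X^\star$ up to alignment. Choose one representative of $X^\star\cap H_j$ for each $j\in J^\star$ and call the result $Y$, so that $|Y|=r^\star$ and $B(Y)=J^\star$. Under Assumption~\ref{ass:partition}, $\Phi(B(Y))=\Phi(B(X^\star))$ while the residuals differ: $R(X^\star)-R(Y)\le (|X^\star|-r^\star)\varepsilon_R$. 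This would cost an extra additive term that the stated bound does not contain.

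This extra term is the main obstacle. I would try the cleaner route first: use Assumption~\ref{ass:align} on $X^\star$ directly and compare via $F$'s block structure, reading $S_{F,r^\star}^\star$ loosely. Alternatively, I would note that monotonicity of $R$ together with the $-r^\star\varepsilon_R$ slack in Theorem~\ref{thm:finiteF} absorbs the residual. That slack arises in the proof because the greedy comparison is against $\Phi$-optimal block sets plus residual. So I would instead redo the last step of the theorem's argument, comparing against $\Phi(J^\star)$ rather than $F(S_{F,k}^\star)$. The theorem's proof actually yields
\[
F(S_{r^\star}^{\mathrm{PW}})\ge C_{r^\star}\bigl(\max_{|J|\le r^\star}\Phi(J)-r^\star\varepsilon_R\bigr)+\dots,
\]
and $\Phi(J^\star)+R(X^\star)=F(X^\star)\ge\lambda_1\mathrm{OPT}_G+b_1$. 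The residual $R(X^\star)$ must then be handled, either by the $-r^\star\varepsilon_R$ term or by accepting that the paper's bound implicitly treats it as absorbed into $b_1$.

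Finally, I would chain the inequalities. The upper alignment gives $G(S_{r^\star}^{\mathrm{PW}})\ge (F(S_{r^\star}^{\mathrm{PW}})-b_2)/\lambda_2$. Substituting the previous bounds yields
\[
\frac{C_{r^\star}\lambda_1}{\lambda_2}\mathrm{OPT}_G+\frac{C_{r^\star}b_1-C_{r^\star}r^\star\varepsilon_R-b_2}{\lambda_2},
\]
which is exactly the second conclusion of Theorem~\ref{thm:finiteF} with $\mathrm{OPT}_{G,r^\star}$ replaced by $\mathrm{OPT}_G$. That replacement is justified once $\mathrm{OPT}_{G,r^\star}$ is shown to be comparable to $\mathrm{OPT}_G$ via the representative set $Y$.
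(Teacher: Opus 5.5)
Your reduction depends on the comparison $F(S^\star_{F,r^\star})\ge\lambda_1\mathrm{OPT}_G+b_1$ (or a block-level substitute), and the proposal never establishes it. The representative set $Y$ loses $R(X^\star)-R(Y)$, which can be as large as $(|X^\star|-r^\star)\varepsilon_R$. Comparing against $\Phi(J^\star)$ instead loses all of $R(X^\star)$, up to $|X^\star|\varepsilon_R$. None of your three patches absorbs this:
\begin{itemize}
\item The $-r^\star\varepsilon_R$ slack of Theorem~\ref{thm:finiteF} is exactly what is spent converting $F$ of a set of size at most $r^\star$ into its block value, so nothing is left over.
\item ``Reading $S^\star_{F,r^\star}$ loosely'' applies the theorem to an infeasible competitor.
\item Folding the residual into $b_1$ changes the constants of the statement.
\end{itemize}
Carried out honestly, your route gives the prefix-$r^\star$ bound with $C_{r^\star}|X^\star|\varepsilon_R$ in place of $C_{r^\star}r^\star\varepsilon_R$. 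Without further structure, the prefix $P_{r^\star}$ can genuinely fail the stated bound. Take $q=1$, $\Phi\equiv 0$, $F(X)=\varepsilon_R|X|$ and $G=F/2$. Then $\lambda_1=\lambda_2=2$, $b_1=b_2=0$, $X^\star=U$, $r^\star=1$ and $C_1=\kappa_1$. The right-hand side equals $\kappa_1(n-1)\varepsilon_R/2$, while $G(P_1^{\pi^{\mathrm{PW}}})=\varepsilon_R/2$. This example violates the gap condition $\varepsilon_R<\kappa_1\underline{\Delta}_\Phi$, and that condition is where the missing idea lies.

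The paper sidesteps the issue by applying Theorem~\ref{thm:finiteF} with $k=|X^\star|$. There $X^\star$ itself is feasible and $\mathrm{OPT}_{G,k}=\mathrm{OPT}_G$. It obtains the bound with $|X^\star|$ and then explicitly identifies that index with the one in the statement, so it too tacitly equates $r^\star$ and $|X^\star|$. Under the gap condition this identification is correct. Since $F(\varnothing)=0$, the definition of $F$ gives $F(X)+F(U\setminus X)=F(U)$. Now take $e$ in a block $H_j$ with $|H_j|\ge 2$, so that $B(U\setminus\{e\})=B(U)$. Then $\underline{\Delta}_\Phi\le\Phi(\{j\})\le F(\{e\})=F(U)-F(U\setminus\{e\})=\Delta_R(e\mid U\setminus\{e\})\le\varepsilon_R$, contradicting $\varepsilon_R<\kappa_1\underline{\Delta}_\Phi\le\underline{\Delta}_\Phi$. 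Hence every block is a singleton, $r^\star=|X^\star|$ and $Y=X^\star$, and your argument (now identical to the paper's) goes through with no loss.

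A simpler route needs no gap condition: use the last prefix. We have $M_G(\pi^{\mathrm{PW}})\ge G(U)$. Upper alignment gives $\lambda_2G(U)+b_2\ge F(U)$. Monotonicity and nonnegativity of $F$ (Assumption~\ref{ass:partition}) plus lower alignment give $F(U)\ge F(X^\star)\ge\max\{0,\lambda_1\mathrm{OPT}_G+b_1\}$. The inherited hypothesis gives $0\le C_{r^\star}\le 1$, so the last quantity is at least $C_{r^\star}(\lambda_1\mathrm{OPT}_G+b_1)-C_{r^\star}r^\star\varepsilon_R$. Dividing by $\lambda_2$ yields the stated inequality verbatim, without invoking Theorem~\ref{thm:finiteF} at all.
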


For the AUC guarantee, we specialize to the equal-area setting $a_e\equiv 1$, under which
\begin{equation}
\operatorname{AUC}_G(\pi)
=
\frac{1}{n}\sum_{t=1}^nG(P_t^\pi),
\;
\operatorname{AUC}_G^\star
:=
\max_{\pi}\operatorname{AUC}_G(\pi).
\end{equation}

\begin{corollary}[AUC guarantee]
\label{thm:auc}
Under Assumptions~\ref{ass:partition}--\ref{ass:align}, 
let
\begin{equation}
    C_{\min}:=\min_{1\le t\le n}   C_t(\kappa_1,\rho_{\mathrm{del}}),\quad  \Gamma_{\min}:=
\min_{1\le t\le n}
\bigl(c_t b_1-c_t t\varepsilon_R\bigr).
\end{equation}
Then the PhaseWin ordering satisfies
\begin{equation}
\operatorname{AUC}_G(\pi^{\mathrm{PW}})
\ge
\frac{C_{\min}\lambda_1}{\lambda_2}
\operatorname{AUC}_G^\star
+
\frac{\Gamma_{\min}-b_2}{\lambda_2}.
\end{equation}
\end{corollary}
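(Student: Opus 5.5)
Since the statement is labelled a direct consequence of Theorem~\ref{thm:finiteF}, the plan is to apply the $G$-side bound of that theorem at every prefix length $t$ and then average over $t$. Fix $t\in\{1,\dots,n\}$. The PhaseWin ordering produced in the full-order regime has prefixes $P_t^{\pi^{\mathrm{PW}}}=S_t^{\mathrm{PW}}$, and its first $t$ elements are exactly what a run with target size $t$ would produce. Hence Theorem~\ref{thm:finiteF} applied with $k=t$ gives, writing $c_t:=C_t(\kappa_1,\rho_{\mathrm{del}})$,
\begin{equation*}
G(P_t^{\pi^{\mathrm{PW}}})\ge \frac{c_t\lambda_1}{\lambda_2}\mathrm{OPT}_{G,t}+\frac{c_tb_1-c_tt\varepsilon_R-b_2}{\lambda_2}.
\end{equation*}
I will need the condition $0\le 1-\kappa_1(1+t\rho_{\mathrm{del}})/t\le 1$ to hold for every $t$. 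I will take it as implicit in the definition of $C_{\min}$, so that all $c_t$ are well defined and nonnegative. The statement's $c_t$ is read as $C_t(\kappa_1,\rho_{\mathrm{del}})$.

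Second, I compare against the optimal ordering. Let $\pi^\star$ attain $\operatorname{AUC}_G^\star$. For every $t$ the prefix $P_t^{\pi^\star}$ has cardinality $t$, so $G(P_t^{\pi^\star})\le \mathrm{OPT}_{G,t}$. This gives
\begin{equation*}
\operatorname{AUC}_G^\star=\frac1n\sum_{t=1}^n G(P_t^{\pi^\star})\le \frac1n\sum_{t=1}^n\mathrm{OPT}_{G,t}.
\end{equation*}
Third, I bound the two terms of the per-$t$ inequality uniformly. Since $\mathrm{OPT}_{G,t}\ge 0$, I can replace $c_t$ by $C_{\min}$ in the first term. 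For the second term, $c_tb_1-c_tt\varepsilon_R\ge\Gamma_{\min}$ by definition, and $\lambda_2>0$. Averaging over $t=1,\dots,n$ and using the comparison above then yields the claim, since the constant $(\Gamma_{\min}-b_2)/\lambda_2$ survives the averaging unchanged.

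The main obstacle is the monotonicity step $C_{\min}\mathrm{OPT}_{G,t}\le c_t\mathrm{OPT}_{G,t}$, which requires $\mathrm{OPT}_{G,t}\ge0$ (and $\lambda_1\ge0$, which is given). I plan to obtain it from monotonicity of $G$ together with the normalization $G(\varnothing)\ge 0$; if needed, I would invoke the convention that the model response is nonnegative. If $G$ could be negative, I would instead absorb $\min_t \mathrm{OPT}_{G,t}$ into the additive constant.

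A secondary point is the prefix-consistency claim: the first $t$ accepted elements of the full run must satisfy the per-step hypotheses used in Theorem~\ref{thm:finiteF} with $k=t$. This holds because the acceptance inequality $\Delta_F(v_i\mid S_{i-1}^{\mathrm{PW}})\ge\kappa_i a_i\ge\kappa_1a_i$ and the deletion bound are stated step by step, independently of $k$. The only $k$-dependence enters through the constant $C_k$. So the proof of Theorem~\ref{thm:finiteF} can be rerun verbatim with $k=t$ on the same trajectory.
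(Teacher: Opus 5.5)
Your proposal is correct and is essentially the paper's own proof: apply the $G$-side bound of Theorem~\ref{thm:finiteF} at each prefix length $t$, compare with the prefixes of an AUC-optimal ordering $\pi^\star$, average over $t$, and use nonnegativity to replace $c_t$ by $C_{\min}$ and the additive terms by $\Gamma_{\min}$. Routing through $\mathrm{OPT}_{G,t}\ge G(P_t^{\pi^\star})$ instead of inserting $P_t^{\pi^\star}$ directly is only cosmetic. It even lets you get by with $G(\varnothing)\ge 0$, since $\varnothing$ is feasible in the maximum, rather than the monotonicity of $G$, which the paper never assumes. The per-$t$ hypothesis of Theorem~\ref{thm:finiteF} that you flag is indeed left implicit by the paper, and it reduces to the $t=1$ case $\kappa_1(1+\rho_{\mathrm{del}})\le 1$.
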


We can also give some sufficient conditions to obtain a more concrete bounds for the approximation.
Proofs are deferred to Appendix~\ref{app:proofs}.

\begin{table*}[t]
\centering
\small
\setlength{\tabcolsep}{4.2pt}
\renewcommand\arraystretch{1.12}

\caption{
Evaluation scope across classification, detection, grounding, and MLLM attribution tasks.
}

\label{tab:eval_scope}

\resizebox{\textwidth}{!}{%
\begin{tabular}{
l
l
l
!{\color{gray!35}\vrule}
l
}
\toprule

\textbf{Setting} 
& \textbf{Dataset} 
& \textbf{Model} 
& \textbf{Methods} \\

\midrule

\multirow{3}{*}{\textbf{Classification}}
& \multirow{3}{*}{ImageNet-1K (val)~\cite{deng2009imagenet}}
& CLIP ViT-L/14~\cite{radford2021learning}
& Gradient~\cite{simonyan2014deep}, Gradient Integral~\cite{sundararajan2017axiomatic}, Gradient ECLIP~\cite{zhao2024gradient}, IGOS++~\cite{khorram2021igos++}, RISE~\cite{petsiuk2018rise}, HSIC~\cite{novello2022making}, Greedy~\cite{chen2024less}, PhaseWin (Ours) \\

& 
& CLIP RN101~\cite{radford2021learning}
& Gradient~\cite{simonyan2014deep}, Gradient Integral~\cite{sundararajan2017axiomatic}, IGOS++~\cite{khorram2021igos++}, RISE~\cite{petsiuk2018rise}, HSIC~\cite{novello2022making}, Greedy~\cite{chen2024less}, PhaseWin (Ours) \\

& 
& ResNet-101~\cite{he2016resnet}
& Gradient~\cite{simonyan2014deep}, Gradient Integral~\cite{sundararajan2017axiomatic}, IGOS++~\cite{khorram2021igos++}, RISE~\cite{petsiuk2018rise}, HSIC~\cite{novello2022making}, Greedy~\cite{chen2024less}, PhaseWin (Ours)\\

\midrule

\multirow{2}{*}{\textbf{Detection}}
& MS COCO~\cite{lin2014microsoft}
& \multirow{2}{*}{Grounding DINO~\cite{liu2023grounding}, Florence-2~\cite{xiao2024florence}}
& Grad-CAM~\cite{selvaraju2020grad}, SSGrad-CAM++~\cite{yamauchi2022spatial}, ODAM~\cite{zhao2024gradient}, RISE~\cite{petsiuk2021black}, HSIC~\cite{novello2022making}, Greedy~\cite{chen2025vps}, PhaseWin (Ours)\\

& LVIS v1~\cite{gupta2019lvis}
& 
& Grad-CAM~\cite{selvaraju2020grad}, SSGrad-CAM++~\cite{yamauchi2022spatial}, ODAM~\cite{zhao2024gradient}, RISE~\cite{petsiuk2021black}, HSIC~\cite{novello2022making}, Greedy~\cite{chen2025vps}, PhaseWin (Ours)\\

\midrule

\textbf{Grounding (REC)}
& RefCOCO~\cite{kazemzadeh2014referitgame}
& Grounding DINO~\cite{liu2023grounding}, Florence-2~\cite{xiao2024florence}
& Grad-CAM~\cite{selvaraju2020grad}, SSGrad-CAM++~\cite{yamauchi2022spatial}, ODAM~\cite{zhao2024gradient}, RISE~\cite{petsiuk2021black}, HSIC~\cite{novello2022making}, Greedy~\cite{chen2025vps}, PhaseWin (Ours) \\

\midrule

\multirow{2}{*}{\textbf{MLLM Attribution}}
& \multirow{2}{*}{MS COCO Captions~\cite{lin2014microsoft}}
& Qwen2.5-VL-3B-Instruct~\cite{bai2025qwen2}
& Gradient~\cite{simonyan2014deep}, LLaVA-CAM~\cite{zhang2025redundancy}, RISE~\cite{petsiuk2018rise}, IGOS++~\cite{khorram2021igos++}, Greedy~\cite{chen2026where}, PhaseWin (Ours)\\

& 
& Qwen2.5-VL-7B-Instruct~\cite{bai2025qwen2}
& Gradient~\cite{simonyan2014deep}, LLaVA-CAM~\cite{zhang2025redundancy}, RISE~\cite{petsiuk2018rise}, IGOS++~\cite{khorram2021igos++}, Greedy~\cite{chen2026where}, PhaseWin (Ours)\\

\bottomrule
\end{tabular}
}
\vspace{-4pt}
\end{table*}

\section{Experiments}
\label{sec:exp}
We evaluate PhaseWin as a general accelerator for region-based attribution in
three progressively more challenging settings: standard post-hoc image
classification explanation, object-level attribution for detection and referring
expression comprehension (REC), and token-level caption attribution for
multimodal large language models (MLLMs). This expanded evaluation is designed
to test not only whether PhaseWin preserves the faithfulness of greedy search in its
original object-level regime, but also whether the same search principle
transfers to classical discriminative explanations and generative multimodal
attribution. Table~\ref{tab:eval_scope} summarizes the datasets, models,
and attribution baselines used in all experiments.

\textbf{Unified naming and mechanism-level grouping.}
To emphasize underlying algorithmic mechanisms rather than task-specific
implementations, we adopt a unified naming scheme across all experiments.

For perturbation-based methods, RISE and HSIC denote the general attribution
families, abstracting away task-specific instantiations (e.g., RISE and
HSIC in detection). These variants share the same perturbation principle
while differing in task-specific modeling and evaluation protocols.

For search-based methods, Greedy refers to the standard subset selection
procedure used in subset attribution. While prior work instantiates this
procedure under different task settings with tailored objectives and modeling
choices, the underlying search mechanism remains the same. Our focus is to
isolate and improve this shared search primitive.

For gradient-based methods, we distinguish approaches based on how gradient
information is utilized, as different usages correspond to different attribution
mechanisms. Specifically, Gradient denotes single-step local sensitivity
(saliency), while Gradient Integral denotes path-integrated variants.
When different path construction strategies are employed (e.g., linear
interpolation versus representation-constrained paths), we treat them as
distinct methods.

\subsection{Unified Setup and Evaluation Protocol}
\label{sec:exp_setup}

\textbf{Common implementation details.}
All tasks share the same region-based attribution runtime. Given an image and a task-specific target, we first partition the image into disjoint regions, then run an attribution method to obtain a ranking over regions, and finally evaluate the ordered regions with a common insertion/deletion replay protocol. The target score depends on the task: class probability for classification, object-level confidence for detection and REC, and mean selected-token probability for caption attribution. Search-based methods directly output ordered regions, whereas gradient- and perturbation-based baselines first produce pixel-level saliency maps, which we aggregate to region scores and convert into the same ordered-mask representation for fair comparison. Unless otherwise noted, we use SLIC~\cite{achanta2012slic} or SLICO~\cite{achanta2012slic} superpixels and set $\lambda_1=\lambda_2=1$ in the shared gain function
\begin{equation}
    G(S)=\lambda_1 s_{\mathrm{ins}}(S)+\lambda_2\bigl(1-s_{\mathrm{del}}(S)\bigr).
\end{equation}
We use 50 regions for classification, 64 regions for caption, and 50 or 100 regions for detection and grounding. Unless otherwise stated, PhaseWin uses a window size of 16 for the 50- and 64-region settings and 32 for the 100-region setting. For PhaseWin, we use the ratio-based early-exit criterion
\[
\frac{S_{k-2}}{S_{k-1}}-\frac{S_{k-1}}{S_k}\le \tau,
\]
with $\tau=0.025$ for 50 subregions and $\tau=0.01$ for 100 subregions.

\begin{table*}[t]
\centering
\small
\setlength{\tabcolsep}{3.8pt}
\renewcommand\arraystretch{1.10}

\caption{Classification attribution on ImageNet (correct samples).}
\label{tab:cls_main_correct}

\resizebox{\textwidth}{!}{%
\begin{tabular}{
l
!{\color{gray!35}\vrule}
cccccc
!{\color{gray!35}\vrule}
cccccc
}
\toprule

\multirow{2}{*}{\textbf{Method}}
& \multicolumn{6}{c!{\color{gray!35}\vrule}}{\textbf{CLIP ViT-L/14}}
& \multicolumn{6}{c}{\textbf{CLIP RN101}} \\

\cmidrule(lr){2-7}
\cmidrule(lr){8-13}

& \shortstack{Ins.\\(↑)}
& \shortstack{Del.\\(↓)}
& \shortstack{Ave.~high.\\(↑)}
& \shortstack{$\mu$-fid\\(↑)}
& \shortstack{MEC\\(↓)}
& \shortstack{A--C\\(↑)}

& \shortstack{Ins.\\(↑)}
& \shortstack{Del.\\(↓)}
& \shortstack{Ave.~high.\\(↑)}
& \shortstack{$\mu$-fid\\(↑)}
& \shortstack{MEC\\(↓)}
& \shortstack{A--C\\(↑)} \\

\midrule

Gradient
& 0.4404 & 0.4783 & 0.9081 & 0.1881 & \na & --
& 0.3495 & 0.2193 & 0.7677 & 0.2076 & \na & -- \\

Gradient Integral
& 0.4213 & 0.5012 & 0.9081 & 0.1887 & \na & --
& 0.3540 & 0.2271 & 0.7694 & 0.2054 & \na & -- \\

Grad-ECLIP
& 0.6488 & 0.2791 & 0.9273 & 0.1595 & \na & --
& \multicolumn{6}{c}{\textit{not supported}} \\

IGOS++
& 0.5224 & 0.4149 & 0.9122 & 0.1792 & \na & --
& 0.2871 & 0.2171 & 0.7655 & 0.2219 & \na & -- \\

RISE
& 0.6364 & 0.3161 & 0.9282 & 0.1509 & 5000.00 & 1.27
& 0.4627 & 0.1232 & 0.7917 & 0.1733 & 5000.00 & 0.93 \\

HSIC
& 0.6755 & 0.2617 & 0.9191 & 0.1600 & 1536.00 & 4.39
& 0.4405 & 0.1314 & 0.7768 & 0.1835 & 1536.00 & 2.87 \\

\midrule

Greedy
& 0.8239 & 0.1388 & 0.9707 & 0.1743 & 1735.90 & 4.74
& 0.6525 & 0.0650 & 0.8943 & 0.2101 & 1736.68 & 3.76 \\

\rowcolor{oursrow}
PhaseWin
& 0.7990 & 0.1625 & 0.9653 & 0.1717 & 871.84 & 9.16
& 0.5981 & 0.0674 & 0.8783 & 0.2046 & 951.16 & 6.29 \\

\bottomrule
\end{tabular}}
\vspace{-6pt}
\end{table*}

\textbf{Evaluation metrics.}

Faithfulness is the primary criterion across all tasks. We report
Insertion AUC (higher is better), Deletion AUC (lower is better),
Average Highest, and early-area recovery metrics such as
Highest@30\% and Highest@50\% when available.

Efficiency is measured by the average number of model evaluations,
denoted as $\mathrm{MEC}_{\mathrm{ave}}$, where one unit corresponds
to a single forward pass.

To jointly reflect faithfulness and efficiency, we also report the
accuracy–cost ratio (A–C ratio), defined as the primary faithfulness
metric (scaled by 10000) divided by the number of forward passes.
We note that this metric can be misleading in low-faithfulness regimes,
as it may favor methods that achieve low scores with extremely few
evaluations. Therefore, A–C ratio is only meaningful when methods
operate at sufficiently high faithfulness levels.

For methods whose implementations do not expose directly comparable
evaluation counts, we leave the efficiency entries blank.

For classification tasks, we additionally report $\mu$-fidelity~\cite{fel2021sobol}.
For detection and REC, we further report class-specific insertion
and deletion AUC~\cite{petsiuk2021black}, Point Game~\cite{zhang2018top}, Energy Point Game~\cite{wang2020score} for localization,
and ESR~\cite{chen2025vps} on failure cases.

For caption attribution, we report sensitivity-aware insertion
and deletion AUC~\cite{zhang2025mllms}, computed on the subset of visually sensitive
generated tokens.

Overall, this unified evaluation protocol ensures that comparisons
primarily reflect the quality of region ordering, rather than
differences in raw heatmap appearance.

\subsection{Image Classification Attribution}
\label{sec:cls_exp}

We first evaluate PhaseWin in the standard post-hoc image classification setting. Given an image and a target class, the goal is to identify an ordered set of image regions whose progressive insertion most efficiently recovers the model response to the target. We conduct experiments on ImageNet validation images with three representative classifiers: CLIP ViT-L/14, CLIP RN101, and ResNet-101. All images are resized to \(224\times224\), and each image is partitioned into 50 SLIC superpixels. For each method, the attribution order is evaluated under the common insertion/deletion replay protocol.

For each backbone, we construct three evaluation splits. The first split contains 5,000 correctly classified images, where the model prediction matches the ground-truth label. The other two splits are derived from 2,000 misclassified images, for which both the model-predicted label and the ground-truth label are recorded. This design allows us to evaluate not only standard attribution on successful predictions, but also failure attribution under two complementary targets: the wrong class selected by the model and the true class missed by the model. The samples are collected by randomly shuffling the ImageNet validation set and running model inference sequentially until the required number of correct and incorrect samples is reached for each backbone.

We compare PhaseWin with gradient-based methods, perturbation-based methods, and greedy subset search. For CLIP ViT-L/14, we additionally include Grad-ECLIP, which is specifically designed for CLIP-style vision transformers and is therefore not applicable to CLIP RN101 or ResNet-101. We report Insertion AUC, Deletion AUC, Average Highest, \(\mu\)-fidelity on correct predictions, early recovery at \(50\%\) revealed area for failure cases, Model Evaluation Count (MEC), and the accuracy--cost ratio (A--C).

\begin{table}[t]
\centering
\small
\setlength{\tabcolsep}{3.8pt}
\renewcommand\arraystretch{1.10}

\caption{Classification attribution on ImageNet (correct samples) with ResNet-101.}
\label{tab:cls_resnet101}

\resizebox{\columnwidth}{!}{%
\begin{tabular}{
l
!{\color{gray!35}\vrule}
ccccc
}
\toprule

\multirow{2}{*}{\textbf{Method}}
& \multicolumn{5}{c}{\textbf{ResNet-101}} \\

\cmidrule(lr){2-6}

& \shortstack{Ins.\\(↑)}
& \shortstack{Del.\\(↓)}
& \shortstack{Ave.~high.\\(↑)}
& \shortstack{MEC\\(↓)}
& \shortstack{A--C\\(↑)} \\

\midrule

Gradient
& 0.4649 & 0.3637 & 0.8025 & \na & -- \\

Gradient Integral
& 0.5062 & 0.3347 & 0.8064 & \na & -- \\

IGOS++
& 0.4740 & 0.3536 & 0.8136 & \na & -- \\

RISE
& 0.6083 & 0.2429 & 0.8254 & 5000.00 & 1.22 \\

HSIC
& 0.6128 & 0.2233 & 0.8213 & 1536.00 & 3.99 \\

\midrule

Greedy
& 0.7926 & 0.1449 & 0.9349 & 1744.07 & 4.54 \\

\rowcolor{oursrow}
PhaseWin
& 0.7672 & 0.1556 & 0.9225 & 907.73 & 8.45 \\

\bottomrule
\end{tabular}}
\vspace{-6pt}
\end{table}

\subsubsection{Correctly Classified Samples}

Tables~\ref{tab:cls_main_correct} and~\ref{tab:cls_resnet101} report the results on correctly classified ImageNet samples. Across all three backbones, search-based region selection is clearly stronger than map-based attribution. Greedy subset search obtains the best raw faithfulness overall, confirming that direct region ordering is highly aligned with the insertion/deletion protocol. However, PhaseWin consistently stays close to Greedy while using substantially fewer model evaluations.

On CLIP ViT-L/14, PhaseWin achieves an Insertion AUC of \(0.7990\), compared with \(0.8239\) from Greedy, while reducing MEC from \(1735.90\) to \(871.84\). This corresponds to \(96.98\%\) of Greedy's Insertion AUC with roughly half of the model evaluations. The same pattern holds for CLIP RN101, where PhaseWin obtains \(0.5981\) Insertion AUC against \(0.6525\) from Greedy, with MEC reduced from \(1736.68\) to \(951.16\). On ResNet-101, PhaseWin reaches \(0.7672\) Insertion AUC, close to Greedy's \(0.7926\), while reducing MEC from \(1744.07\) to \(907.73\).

Averaged over the three correctly classified settings, PhaseWin preserves \(95.39\%\) of Greedy's Insertion AUC and \(98.79\%\) of its Average Highest score, while using only \(52.35\%\) of the model evaluations. This gives an average speedup of \(1.91\times\) over Greedy. The A--C ratio also improves from \(0.43\) to \(0.80\), showing that the small faithfulness loss is compensated by a much better efficiency--faithfulness trade-off.

Compared with perturbation-based baselines, PhaseWin is consistently more faithful. On CLIP ViT-L/14, it improves Insertion AUC over RISE and HSIC by \(16.26\) and \(12.35\) percentage points, respectively. On CLIP RN101, the corresponding gains are \(13.54\) and \(15.76\) percentage points. On ResNet-101, PhaseWin exceeds RISE and HSIC by \(15.89\) and \(15.44\) percentage points. The deletion results show the same trend: PhaseWin yields substantially lower Deletion AUC than the non-search baselines across all three backbones. These results indicate that PhaseWin does not merely accelerate Greedy; it remains in the high-faithfulness regime that map-based and perturbation-based methods fail to reach.

\begin{figure*}[t]
    \centering
    \includegraphics[width=\textwidth]{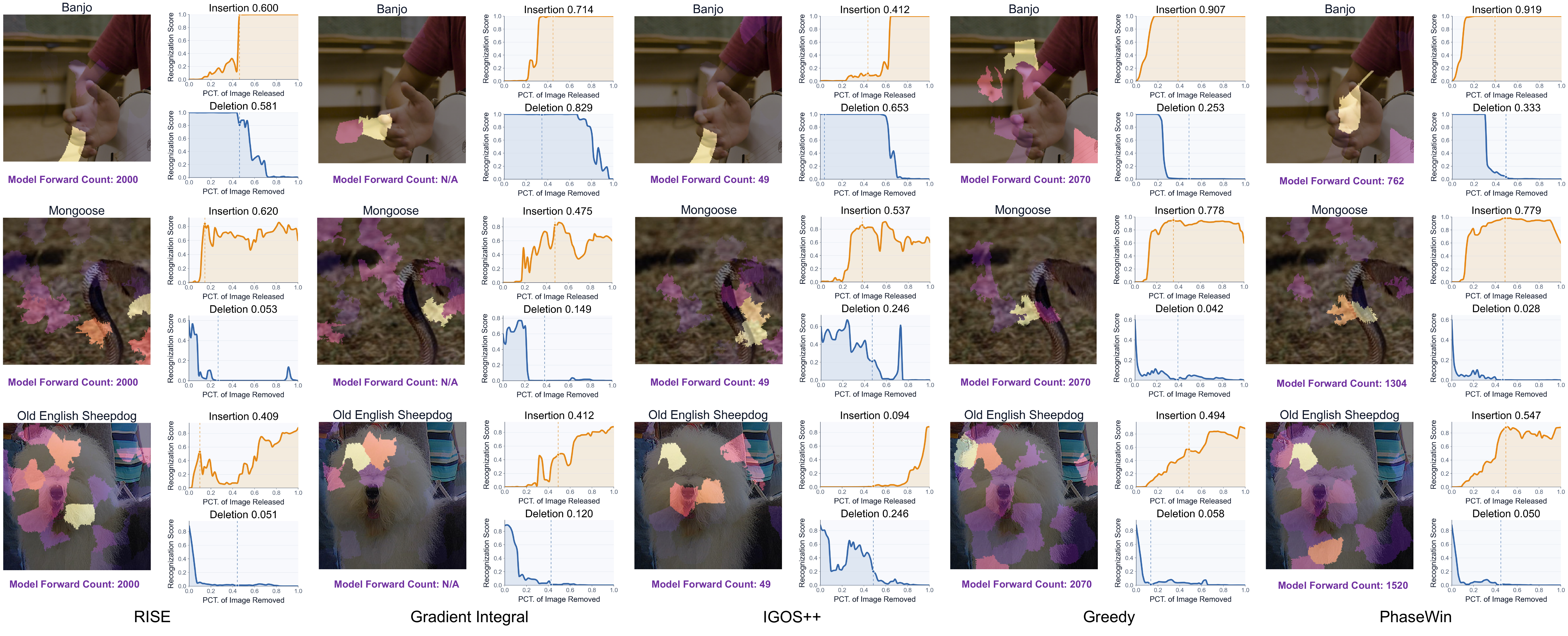}
    \vspace{-5pt}
    \caption{
\textbf{Qualitative comparison on ImageNet classification attribution with CLIP ViT-L/14.}
Each row shows one target class, and each column compares one method.
The overlays visualize the ranked superpixel regions, while the curves report the corresponding insertion and deletion trajectories.
PhaseWin produces region orderings visually close to Greedy and achieves comparable insertion/deletion behavior, but requires substantially fewer model forward evaluations.
}
    \label{fig:cls_vis}
    \vspace{-4pt}
\end{figure*}

\subsubsection{Failure Attribution Toward the Model's Wrong Prediction}

\begin{table}[t]
\centering
\small
\setlength{\tabcolsep}{3.8pt}
\renewcommand\arraystretch{1.10}

\caption{Classification attribution on ImageNet misclassified samples with CLIP ViT-L/14, using the model's wrong prediction as the attribution target.}
\label{tab:cls_vitl_wrong_pred}

\resizebox{\columnwidth}{!}{%
\begin{tabular}{
l
!{\color{gray!35}\vrule}
cccccc
}
\toprule

\multirow{2}{*}{\textbf{Method}}
& \multicolumn{6}{c}{\textbf{CLIP ViT-L/14}} \\

\cmidrule(lr){2-7}

& \shortstack{Ins.\\(↑)}
& \shortstack{Del.\\(↓)}
& \shortstack{Ave.~high.\\(↑)}
& \shortstack{@50\%\\(↑)}
& \shortstack{MEC\\(↓)}
& \shortstack{A--C\\(↑)} \\

\midrule

Gradient
& 0.2607 & 0.2831 & 0.6933 & 0.3118 & \na & -- \\

Gradient Integral
& 0.2468 & 0.2984 & 0.6899 & 0.2817 & \na & -- \\

Grad-ECLIP
& 0.3958 & 0.1616 & 0.7363 & 0.5644 & \na & -- \\

IGOS++
& 0.3035 & 0.2395 & 0.7016 & 0.3866 & \na & -- \\

RISE
& 0.4133 & 0.1605 & 0.7474 & 0.5851 & 5000.00 & 0.83 \\

HSIC
& 0.4035 & 0.1518 & 0.7124 & 0.5757 & 1536.00 & 2.63 \\

\midrule

Greedy
& 0.6837 & 0.0652 & 0.8932 & 0.8576 & 1755.77 & 3.89 \\

\rowcolor{oursrow}
PhaseWin
& 0.6421 & 0.0705 & 0.8774 & 0.8105 & 1026.52 & 6.26 \\

\bottomrule
\end{tabular}}
\vspace{-6pt}
\end{table}

\begin{table}[t]
\centering
\small
\setlength{\tabcolsep}{3.8pt}
\renewcommand\arraystretch{1.10}

\caption{Classification attribution on ImageNet misclassified samples with CLIP RN101, using the model's wrong prediction as the attribution target.}
\label{tab:cls_rn101_wrong_pred}

\resizebox{\columnwidth}{!}{%
\begin{tabular}{
l
!{\color{gray!35}\vrule}
cccccc
}
\toprule

\multirow{2}{*}{\textbf{Method}}
& \multicolumn{6}{c}{\textbf{CLIP RN101}} \\

\cmidrule(lr){2-7}

& \shortstack{Ins.\\(↑)}
& \shortstack{Del.\\(↓)}
& \shortstack{Ave.~high.\\(↑)}
& \shortstack{@50\%\\(↑)}
& \shortstack{MEC\\(↓)}
& \shortstack{A--C\\(↑)} \\

\midrule

Gradient
& 0.1891 & 0.1120 & 0.5097 & 0.2168 & \na & -- \\

Gradient Integral
& 0.1930 & 0.1150 & 0.5091 & 0.2251 & \na & -- \\

IGOS++
& 0.1530 & 0.1127 & 0.5051 & 0.1367 & 49.00 & 31.22 \\

RISE
& 0.2543 & 0.0602 & 0.5435 & 0.3443 & 5000.00 & 1.27 \\

HSIC
& 0.2365 & 0.0671 & 0.5210 & 0.3292 & 1536.00 & 4.73 \\

\midrule

Greedy
& 0.4831 & 0.0309 & 0.7440 & 0.6297 & 1748.26 & 2.81 \\

\rowcolor{oursrow}
PhaseWin
& 0.4264 & 0.0316 & 0.7175 & 0.5281 & 1014.20 & 4.20 \\

\bottomrule
\end{tabular}}
\vspace{-6pt}
\end{table}

\begin{table}[t]
\centering
\small
\setlength{\tabcolsep}{3.8pt}
\renewcommand\arraystretch{1.10}

\caption{Classification attribution on ImageNet misclassified samples with ResNet-101, using the model's wrong prediction as the attribution target.}
\label{tab:cls_resnet101_wrong_pred}

\resizebox{\columnwidth}{!}{%
\begin{tabular}{
l
!{\color{gray!35}\vrule}
cccccc
}
\toprule

\multirow{2}{*}{\textbf{Method}}
& \multicolumn{6}{c}{\textbf{ResNet-101}} \\

\cmidrule(lr){2-7}

& \shortstack{Ins.\\(↑)}
& \shortstack{Del.\\(↓)}
& \shortstack{Ave.~high.\\(↑)}
& \shortstack{@50\%\\(↑)}
& \shortstack{MEC\\(↓)}
& \shortstack{A--C\\(↑)} \\

\midrule

Gradient
& 0.2747 & 0.1737 & 0.5732 & 0.3770 & \na & -- \\

Gradient Integral
& 0.2928 & 0.1622 & 0.5761 & 0.4047 & \na & -- \\

IGOS++
& 0.2680 & 0.1695 & 0.5984 & 0.3525 & \na & -- \\

RISE
& 0.3733 & 0.1065 & 0.6315 & 0.5252 & 5000.00 & 0.74 \\

HSIC
& 0.3543 & 0.1067 & 0.6087 & 0.5203 & 1536.00 & 2.31 \\

\midrule

Greedy
& 0.6671 & 0.0559 & 0.8672 & 0.8399 & 1738.14 & 3.84 \\

\rowcolor{oursrow}
PhaseWin
& 0.6311 & 0.0575 & 0.8476 & 0.8004 & 987.42 & 6.39 \\

\bottomrule
\end{tabular}}
\vspace{-10pt}
\end{table}

Tables~\ref{tab:cls_vitl_wrong_pred}, \ref{tab:cls_rn101_wrong_pred}, and~\ref{tab:cls_resnet101_wrong_pred} evaluate misclassified samples using the model's wrong prediction as the attribution target. This setting asks which regions support the erroneous decision actually made by the model. The task is easier than attributing toward the ground-truth class, because the selected target already corresponds to the model's dominant response.

The results again show a clear hierarchy. Gradient-based methods produce weak insertion scores and limited early recovery. Perturbation-based methods improve over gradients, but remain far below search-based methods. Greedy achieves the strongest raw faithfulness, while PhaseWin provides the closest efficient approximation. On CLIP ViT-L/14, PhaseWin obtains \(0.6421\) Insertion AUC and \(0.8105\) @50\%, compared with \(0.6837\) and \(0.8576\) from Greedy. On CLIP RN101, PhaseWin reaches \(0.4264\) Insertion AUC and \(0.5281\) @50\%, compared with Greedy's \(0.4831\) and \(0.6297\). On ResNet-101, PhaseWin obtains \(0.6311\) Insertion AUC and \(0.8004\) @50\%, close to Greedy's \(0.6671\) and \(0.8399\).

Averaged across the three wrong-prediction settings, PhaseWin retains \(92.68\%\) of Greedy's Insertion AUC, \(97.53\%\) of its Average Highest score, and \(91.91\%\) of its @50\% recovery. Meanwhile, MEC decreases from \(1747.39\) to \(1009.38\), giving a \(1.73\times\) speedup. The A--C ratio improves from \(3.51\) to \(5.62\). Although HSIC can sometimes obtain a competitive A--C value due to its fixed and smaller evaluation budget, its raw faithfulness is much lower than PhaseWin. Therefore, PhaseWin offers a more favorable balance when both explanation quality and evaluation cost are considered.

These results are important for failure diagnosis. In misclassified cases, the explanation target is no longer a correct semantic decision, but the model's own erroneous class preference. PhaseWin remains close to Greedy under this setting, suggesting that the phased window search can still track the dominant evidence used by the model even when the prediction itself is wrong.

\begin{table*}[t]
\centering
\small
\setlength{\tabcolsep}{5.5pt}
\renewcommand\arraystretch{1.15}
\caption{Comparison on three datasets for correctly detected or grounded samples using Grounding DINO.}
\label{tab:correct}
\vspace{-5pt}
\resizebox{\textwidth}{!}{%
\begin{tabular}{
   l l
  !{\color{gray!40}\vrule}
  S[table-format=1.4] S[table-format=1.4] S[table-format=1.4]
  S[table-format=1.4] S[table-format=1.4] S[table-format=1.4]
  S[table-format=1.4]
  !{\color{gray!40}\vrule}
  S[table-format=1.4] S[table-format=1.4]
  !{\color{gray!40}\vrule}
  c c
}
\toprule
\multirow{3}{*}{\textbf{Datasets}} & \multirow{3}{*}{\textbf{Methods}} &
\multicolumn{7}{c!{\color{gray!40}\vrule}}{Faithfulness Metrics} &
\multicolumn{2}{c!{\color{gray!40}\vrule}}{Location Metrics} &
\multicolumn{2}{c}{\textbf{Efficiency Metrics}} \\
\cmidrule(lr){3-9}\cmidrule(lr){10-11}\cmidrule(lr){12-13}
& &
\mc{\shortstack{Ins.\\(↑)}} &
\mc{\shortstack{Del.\\(↓)}} &
\mc{\shortstack{Ins.~(class)\\(↑)}} &
\mc{\shortstack{Del.~(class)\\(↓)}} &
\mc{\shortstack{Ins.~(IoU)\\(↑)}} &
\mc{\shortstack{Del.~(IoU)\\(↓)}} &
\mc{\shortstack{Ave.~high.\\score~(↑)}} &
\mc{\shortstack{Point\\Game~(↑)}} &
\mc{\shortstack{Energy\\PG~(↑)}} &
\mc{\shortstack{$\mathrm{MEC}_{\mathrm{ave}}$\\(↓)}} &
\mc{\shortstack{\textbf{A-C ratio}\\(↑)}} \\
\midrule
\multirow{9}{*}{\shortstack[l]{MS COCO\\(Detection task)}} & Grad-CAM
& 0.2436 & 0.1526 & 0.3064 & 0.2006 & 0.6229 & 0.5324 & 0.5904 & 0.1746 & 0.1463 & \na & \na \\
& SSGrad-CAM++ & 0.2107 & 0.1778 & 0.2639 & 0.2314 & 0.5981 & 0.5511 & 0.5886 & 0.1905 & 0.1293 & \na & \na \\
& RISE & 0.4412 & 0.0402 & 0.5081 & 0.0886 & 0.8396 & 0.3642 & 0.6215 & 0.9497 & 0.1850 & 5000 & 0.88 \\
& HSIC & 0.3776 & 0.0439 & 0.4382 & 0.0903 & 0.8301 & 0.3301 & 0.5862 & 0.7328 & 0.1861 & 1536 & 2.46 \\
& ODAM & 0.3103 & 0.0519 & 0.3655 & 0.0894 & 0.7869 & 0.3984 & 0.5865 & 0.5431 & 0.2034 & \na & \na \\
\cmidrule(lr){2-13}
& Greedy-50 & 0.5195 & 0.0375 & 0.5941 & 0.0835 & 0.8480 & 0.3044 & 0.6591 & 0.9841 & 0.2046 & 2548.8 & 2.04 \\
\rowcolor{oursrow}
\cellcolor{white} & PhaseWin-50 & 0.4785 & 0.0424 & 0.5562 & 0.0898 & 0.8323 & 0.3116 & 0.6353 & 0.9894 & 0.1843 & 536.8 & 8.92 \\
\cmidrule(lr){2-13}
& Greedy-100 & 0.5459 & 0.0375 & 0.6204 & 0.0882 & 0.8581 & 0.3300 & 0.6873 & 0.9894 & 0.2046 & 10100 & 0.54 \\
\rowcolor{oursrow}
\cellcolor{white} & PhaseWin-100 & 0.5141 & 0.0410 & 0.5890 & 0.0907 & 0.8505 & 0.3400 & 0.6644 & 0.9894 & 0.1628 & 2853.4 & 1.81 \\
\midrule
\multirow{9}{*}{\shortstack[l]{RefCOCO\\(REC task)}} & Grad-CAM
& 0.3749 & 0.4237 & 0.4658 & 0.5194 & 0.7516 & 0.7685 & 0.7481 & 0.2380 & 0.2171 & \na & \na \\
& SSGrad-CAM++ & 0.4113 & 0.3925 & 0.5008 & 0.4851 & 0.7700 & 0.7588 & 0.7561 & 0.2820 & 0.2262 & \na & \na \\
& RISE & 0.6178 & 0.1605 & 0.7033 & 0.3396 & 0.8606 & 0.5164 & 0.8471 & 0.9400 & 0.2870 & 5000 & 1.24 \\
& HSIC & 0.5491 & 0.1846 & 0.6295 & 0.3509 & 0.8504 & 0.5120 & 0.7739 & 0.7900 & 0.3190 & 1536 & 3.57 \\
& ODAM & 0.4778 & 0.2718 & 0.5620 & 0.3757 & 0.8217 & 0.6641 & 0.7425 & 0.6320 & 0.3529 & \na & \na \\
\cmidrule(lr){2-13}
& Greedy-50 & 0.7278 & 0.1240 & 0.7995 & 0.2473 & 0.8961 & 0.5053 & 0.8770 & 0.9580 & 0.3738 & 2290.6 & 3.18 \\
\rowcolor{oursrow}
\cellcolor{white} & PhaseWin-50 & 0.7013 & 0.1473 & 0.7794 & 0.2747 & 0.8862 & 0.5273 & 0.8654 & 0.9580 & 0.3530 & 630.1 & 11.13 \\
\cmidrule(lr){2-13}
& Greedy-100 & 0.7419 & 0.1250 & 0.8080 & 0.2457 & 0.9050 & 0.5103 & 0.8842 & 0.9460 & 0.3566 & 10100 & 0.73 \\
\rowcolor{oursrow}
\cellcolor{white} & PhaseWin-100 & 0.7377 & 0.1529 & 0.8046 & 0.2823 & 0.9054 & 0.5466 & 0.8813 & 0.9360 & 0.3076 & 3382.5 & 2.18 \\
\midrule
\multirow{9}{*}{\shortstack[l]{LVIS V1 (rare)\\(Zero-shot det. task)}} & Grad-CAM
& 0.1253 & 0.1294 & 0.1801 & 0.1814 & 0.5657 & 0.5910 & 0.3549 & 0.1151 & 0.0941 & \na & \na \\
& SSGrad-CAM++ & 0.1253 & 0.1254 & 0.1765 & 0.1775 & 0.5800 & 0.5691 & 0.3504 & 0.1091 & 0.0931 & \na & \na \\
& RISE & 0.2808 & 0.0289 & 0.3348 & 0.0835 & 0.8303 & 0.3174 & 0.4289 & 0.9697 & 0.1462 & 5000 & 0.56 \\
& HSIC & 0.2417 & 0.0353 & 0.2912 & 0.0928 & 0.8187 & 0.3550 & 0.4044 & 0.8303 & 0.1730 & 1536 & 1.57 \\
& ODAM & 0.2009 & 0.0410 & 0.2478 & 0.0844 & 0.7770 & 0.4082 & 0.3694 & 0.6061 & 0.2050 & \na & \na \\
\cmidrule(lr){2-13}
& Greedy-50 & 0.3411 & 0.0265 & 0.3995 & 0.0805 & 0.8372 & 0.2986 & 0.4654 & 0.9939 & 0.1439 & 2544.6 & 1.34 \\
\rowcolor{oursrow}
\cellcolor{white} & PhaseWin-50 & 0.3071 & 0.0303 & 0.3645 & 0.0893 & 0.8245 & 0.3097 & 0.4325 & 0.9939 & 0.1369 & 465.9 & 6.59 \\
\cmidrule(lr){2-13}
& Greedy-100 & 0.3695 & 0.0277 & 0.4275 & 0.0799 & 0.8479 & 0.3242 & 0.4969 & 0.9758 & 0.1785 & 10100 & 0.37 \\
\rowcolor{oursrow}
\cellcolor{white} & PhaseWin-100 & 0.3363 & 0.0309 & 0.3944 & 0.0839 & 0.8379 & 0.3374 & 0.4688 & 0.9697 & 0.1175 & 2726.8 & 1.23 \\
\bottomrule
\end{tabular}}
\vspace{-10pt}
\end{table*}

\subsubsection{Failure Attribution Toward the Ground-Truth Class}
\begin{table}[t]
\centering
\small
\setlength{\tabcolsep}{3.8pt}
\renewcommand\arraystretch{1.10}

\caption{Classification attribution on ImageNet misclassified samples with CLIP ViT-L/14, using the ground-truth class as the attribution target.}
\label{tab:cls_vitl_wrong_gt}

\resizebox{\columnwidth}{!}{%
\begin{tabular}{
l
!{\color{gray!35}\vrule}
cccccc
}
\toprule

\multirow{2}{*}{\textbf{Method}}
& \multicolumn{6}{c}{\textbf{CLIP ViT-L/14}} \\

\cmidrule(lr){2-7}

& \shortstack{Ins.\\(↑)}
& \shortstack{Del.\\(↓)}
& \shortstack{Ave.~high.\\(↑)}
& \shortstack{@50\%\\(↑)}
& \shortstack{MEC\\(↓)}
& \shortstack{A--C\\(↑)} \\

\midrule

Gradient
& 0.0996 & 0.0968 & 0.3251 & 0.1852 & \na & -- \\

Gradient Integral
& 0.0931 & 0.1042 & 0.3169 & 0.1653 & \na & -- \\

Grad-ECLIP
& 0.1832 & 0.0508 & 0.4475 & 0.3898 & \na & -- \\

IGOS++
& 0.1181 & 0.0875 & 0.3361 & 0.2226 & \na & -- \\

RISE
& 0.1990 & 0.0516 & 0.4541 & 0.3774 & 5000.00 & 0.91 \\

HSIC
& 0.1447 & 0.0642 & 0.3620 & 0.3227 & 1536.00 & 2.36 \\

\midrule

Greedy
& 0.4827 & 0.0228 & 0.7425 & 0.7161 & 1755.77 & 4.23 \\

\rowcolor{oursrow}
PhaseWin
& 0.4291 & 0.0243 & 0.6989 & 0.6458 & 1046.46 & 6.68 \\

\bottomrule
\end{tabular}}
\vspace{-6pt}
\end{table}

\begin{table}[t]
\centering
\small
\setlength{\tabcolsep}{3.8pt}
\renewcommand\arraystretch{1.10}

\caption{Classification attribution on ImageNet misclassified samples with CLIP RN101, using the ground-truth class as the attribution target.}
\label{tab:cls_rn101_wrong_gt}

\resizebox{\columnwidth}{!}{%
\begin{tabular}{
l
!{\color{gray!35}\vrule}
cccccc
}
\toprule

\multirow{2}{*}{\textbf{Method}}
& \multicolumn{6}{c}{\textbf{CLIP RN101}} \\

\cmidrule(lr){2-7}

& \shortstack{Ins.\\(↑)}
& \shortstack{Del.\\(↓)}
& \shortstack{Ave.~high.\\(↑)}
& \shortstack{@50\%\\(↑)}
& \shortstack{MEC\\(↓)}
& \shortstack{A--C\\(↑)} \\

\midrule

Gradient
& 0.0605 & 0.0349 & 0.1849 & 0.1043 & \na & -- \\

Gradient Integral
& 0.0615 & 0.0361 & 0.1832 & 0.1048 & \na & -- \\

IGOS++
& 0.0528 & 0.0343 & 0.1839 & 0.0729 & \na & -- \\

RISE
& 0.0975 & 0.0177 & 0.2433 & 0.1793 & 5000.00 & 0.48 \\

HSIC
& 0.0795 & 0.0206 & 0.2074 & 0.1581 & 1536.00 & 1.35 \\

\midrule

Greedy
& 0.2747 & 0.0105 & 0.4802 & 0.4184 & 1748.26 & 2.75 \\

\rowcolor{oursrow}
PhaseWin
& 0.2193 & 0.0107 & 0.4267 & 0.3133 & 835.51 & 5.11 \\

\bottomrule
\end{tabular}}
\vspace{-6pt}
\end{table}

\begin{table}[t]
\centering
\small
\setlength{\tabcolsep}{3.8pt}
\renewcommand\arraystretch{1.10}

\caption{Classification attribution on ImageNet misclassified samples with ResNet-101, using the ground-truth class as the attribution target.}
\label{tab:cls_resnet101_wrong_gt}

\resizebox{\columnwidth}{!}{%
\begin{tabular}{
l
!{\color{gray!35}\vrule}
cccccc
}
\toprule

\multirow{2}{*}{\textbf{Method}}
& \multicolumn{6}{c}{\textbf{ResNet-101}} \\

\cmidrule(lr){2-7}

& \shortstack{Ins.\\(↑)}
& \shortstack{Del.\\(↓)}
& \shortstack{Ave.~high.\\(↑)}
& \shortstack{@50\%\\(↑)}
& \shortstack{MEC\\(↓)}
& \shortstack{A--C\\(↑)} \\

\midrule

Gradient
& 0.0898 & 0.0508 & 0.2471 & 0.2007 & \na & -- \\

Gradient Integral
& 0.0917 & 0.0488 & 0.2471 & 0.2077 & \na & -- \\

IGOS++
& 0.1020 & 0.0487 & 0.2725 & 0.1951 & \na & -- \\

RISE
& 0.1456 & 0.0318 & 0.3307 & 0.2943 & 5000.00 & 0.66 \\

HSIC
& 0.1102 & 0.0373 & 0.2751 & 0.2514 & 1536.00 & 1.79 \\

\midrule

Greedy
& 0.4358 & 0.0182 & 0.6606 & 0.6414 & 1738.14 & 3.80 \\

\rowcolor{oursrow}
PhaseWin
& 0.3869 & 0.0185 & 0.6101 & 0.5751 & 955.35 & 6.39 \\

\bottomrule
\end{tabular}}
\vspace{-6pt}
\end{table}

Tables~\ref{tab:cls_vitl_wrong_gt}, \ref{tab:cls_rn101_wrong_gt}, and~\ref{tab:cls_resnet101_wrong_gt} report the complementary failure setting, where the attribution target is the ground-truth class rather than the model's wrong prediction. This is a more difficult diagnostic task. Since the model did not select the ground-truth class, the target response is weaker, and the attribution method must recover evidence that is present in the image but insufficiently used by the model.

All methods show lower absolute scores in this setting, but the relative pattern remains stable. Greedy still provides the strongest raw faithfulness, and PhaseWin remains the best efficient approximation. On CLIP ViT-L/14, PhaseWin reaches \(0.4291\) Insertion AUC and \(0.6458\) @50\%, compared with \(0.4827\) and \(0.7161\) from Greedy. On CLIP RN101, PhaseWin obtains \(0.2193\) Insertion AUC and \(0.3133\) @50\%, while Greedy obtains \(0.2747\) and \(0.4184\). On ResNet-101, PhaseWin achieves \(0.3869\) Insertion AUC and \(0.5751\) @50\%, close to Greedy's \(0.4358\) and \(0.6414\).

Averaged over the three ground-truth failure settings, PhaseWin preserves \(86.77\%\) of Greedy's Insertion AUC, \(92.16\%\) of its Average Highest score, and \(86.39\%\) of its @50\% recovery. The retained fraction is lower than in the wrong-prediction setting, which is expected because the ground-truth class is not the model's dominant output. Nevertheless, PhaseWin reduces MEC from \(1747.39\) to \(945.77\), corresponding to a \(1.85\times\) speedup, and improves the A--C ratio from \(3.59\) to \(6.06\).

Compared with non-search baselines, PhaseWin remains substantially stronger in raw faithfulness. For example, on CLIP ViT-L/14, PhaseWin more than doubles the Insertion AUC of RISE under the ground-truth target setting. Similar margins are observed on CLIP RN101 and ResNet-101. This confirms that direct subset search is especially valuable for failure attribution, where weak target responses make dense saliency maps and random perturbation estimates less reliable.

\subsection{Detection and Grounding Attribution}
\label{sec:det_exp}
\begin{figure*}[t]
    \centering
    \includegraphics[width=\textwidth]{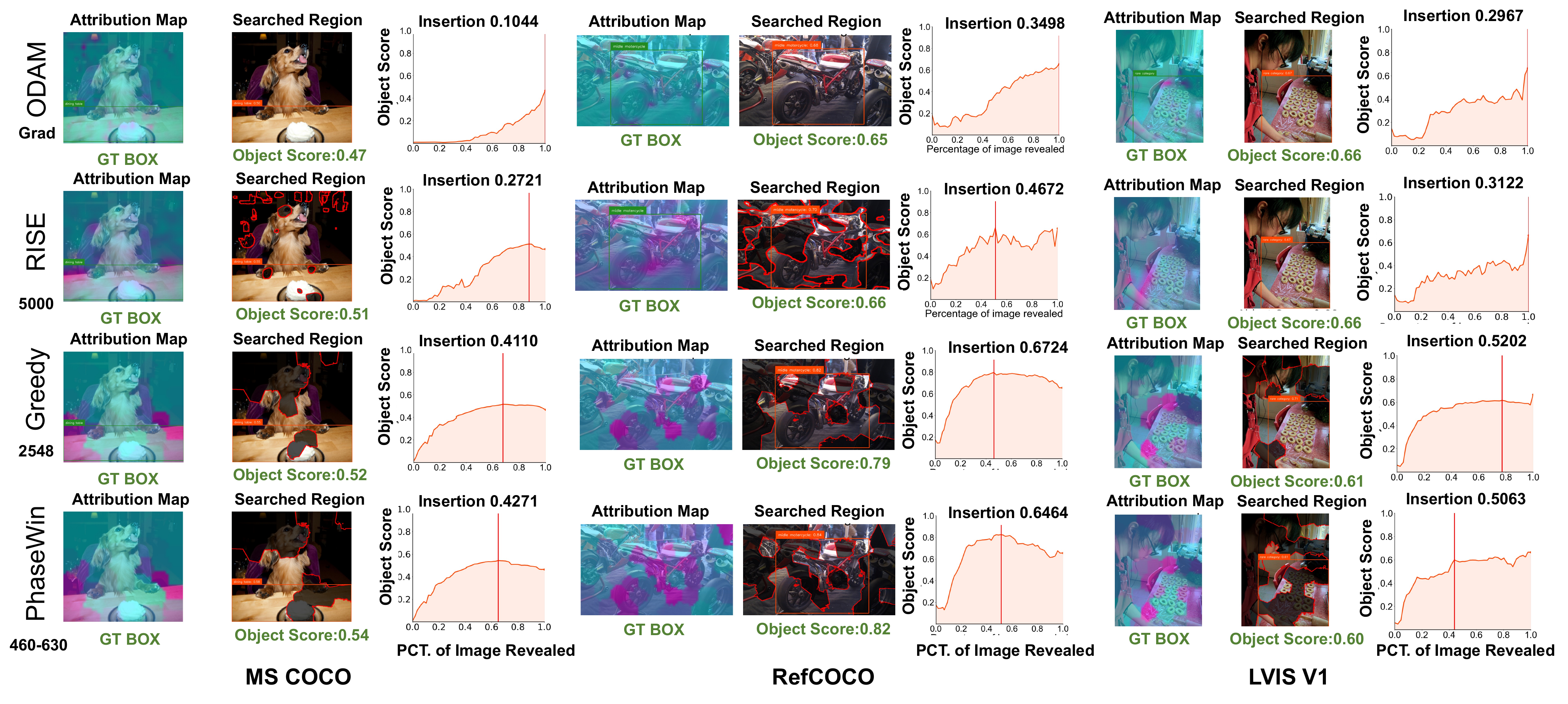}
    \vspace{-8pt}
    \caption{\textbf{Qualitative comparison of correct object-level attribution cases on MS COCO, RefCOCO, and LVIS v1.} Compared with ODAM and RISE, PhaseWin produces sharper and more faithful attributions. It matches or even exceeds Greedy in insertion AUC while requiring only a fraction of the computational budget.}
    \label{fig:vis}
    \vspace{-10pt}
\end{figure*}

\subsubsection{Summary Across Classification Settings}

Across all nine classification evaluations, covering three backbones and three target regimes, PhaseWin preserves \(92.51\%\) of Greedy's average Insertion AUC and \(96.62\%\) of its Average Highest score. At the same time, it reduces the average MEC from \(1744.55\) to \(955.13\), giving an overall \(1.83\times\) reduction in model evaluations. The average A--C ratio improves from \(2.51\) to \(4.16\).

These results support three conclusions. First, Greedy remains the strongest raw optimizer for the insertion/deletion protocol, which is consistent with its exhaustive candidate evaluation. Second, PhaseWin closely approximates Greedy across both successful and failed predictions, showing that the acceleration mechanism is not tied to a particular backbone or target class type. Third, the advantage of PhaseWin becomes more meaningful in diagnostic settings, where repeated attribution over many failure cases would make full Greedy search expensive.

\subsubsection{Qualitative Results}

Representative ImageNet visualizations are shown in Figure~\ref{fig:cls_vis}. The qualitative results follow the same trend as the quantitative tables. Gradient-based methods tend to produce diffuse maps, while perturbation-based methods often introduce noisy region responses. Greedy produces sharp and faithful region orderings but requires a large number of model evaluations. PhaseWin remains visually close to Greedy, while using substantially fewer forward passes. This confirms that the efficiency gain does not come from a qualitatively different attribution behavior, but from a more efficient approximation of the same region-selection process.

We next evaluate the original object-level setting, where attribution must explain both recognition and localization. We use Grounding DINO and Florence-2 as backbones, and consider three benchmarks: MS COCO for detection, RefCOCO for REC, and LVIS v1 rare categories for zero-shot detection. This part of the evaluation covers both correctly predicted and failure cases, making it our most comprehensive object-level benchmark. We compare against gradient-based baselines (Grad-CAM, SSGrad-CAM++, ODAM), perturbation-based baselines (RISE, HSIC), and Greedy, which is the quadratic search procedure accelerated by PhaseWin.

\subsubsection{Correct predictions on Grounding DINO}
Table~\ref{tab:correct} reports correct detection and grounding results on MS COCO, RefCOCO, and LVIS. Across all three datasets, PhaseWin preserves the ranking quality of Greedy while drastically reducing the number of model evaluations. On MS COCO with 50 regions, PhaseWin lowers $\mathrm{MEC}_{\mathrm{ave}}$ from 2548.8 to 536.8, a 4.7$\times$ reduction, while retaining an Insertion score of 0.4785 versus 0.5195 for Greedy. On RefCOCO, the same 50-region setting reduces the cost from 2290.6 to 630.1 while keeping Insertion at 0.7013 versus 0.7278. On LVIS rare categories, where all methods become weaker due to the long-tail distribution, PhaseWin still improves the A-C ratio from 1.34 to 6.59. These results show that the efficiency advantage of PhaseWin is strongest in the computationally heavy object-level regime, while the loss in faithfulness remains limited.

\begin{table*}[t]
\centering
\small
\setlength{\tabcolsep}{3.6pt}
\renewcommand\arraystretch{1.12}
\caption{Caption token attribution on COCO with Qwen2.5-VL-Instruct. PhaseWin is our method.}
\label{tab:caption_main}
\resizebox{\textwidth}{!}{%
\begin{tabular}{
   ll
  !{\color{gray!40}\vrule}
  *{6}{S[table-format=1.4]}
  !{\color{gray!40}\vrule}
  S[table-format=4.2]
  S[table-format=2.2]
}
\toprule
\multirow{2}{*}{\textbf{Model}} &
\multirow{2}{*}{\textbf{Method}} &
\multicolumn{6}{c!{\color{gray!40}\vrule}}{\textbf{Faithfulness Metrics}} &
\multicolumn{2}{c}{\textbf{Efficiency Metrics}} \\
\cmidrule(lr){3-8}\cmidrule(lr){9-10}
& &
\mc{\shortstack{Ins.\\(↑)}} &
\mc{\shortstack{Del.\\(↓)}} &
\mc{\shortstack{SensIns\\(↑)}} &
\mc{\shortstack{SensDel\\(↓)}} &
\mc{\shortstack{SensHigh\\(↑)}} &
\mc{\shortstack{Ave.~high.\\(↑)}} &
\mc{\shortstack{$\mathrm{MEC}_{\mathrm{ave}}$\\(↓)}} &
\mc{\shortstack{A-C ratio\\(↑)}} \\
\midrule

\multirow{6}{*}{Qwen2.5-VL-3B}
& Gradient        & 0.5365 & 0.5315 & 0.4397 & 0.4298 & 0.6575 & 0.6615 & \mc{\na} & \mc{\na} \\
& LLaVACAM        & 0.5248 & 0.5460 & 0.4184 & 0.4530 & 0.6547 & 0.6599 & \mc{\na} & \mc{\na} \\
& IGOS++          & 0.5376 & 0.5296 & 0.4388 & 0.4266 & 0.6574 & 0.6620 & \mc{\na}   & \mc{\na} \\
& RISE          & 0.5608 & 0.5087 & 0.4771 & 0.3893 & 0.6600 & 0.6645 & 5000.00 & 1.12 \\
& Greedy          & 0.6405 & 0.4372 & 0.5946 & 0.2858 & 0.6908 & 0.6951 & 4168.62 & 1.53 \\
\rowcolor{oursrow}
& PhaseWin (Ours) & 0.6351 & 0.4522 & 0.5736 & 0.3052 & 0.6786 & 0.6835 & 1412.71 & 4.80 \\

\midrule

\multirow{6}{*}{Qwen2.5-VL-7B}
& Gradient        & 0.5279 & 0.5248 & 0.4210 & 0.4149 & 0.6842 & 0.6791 & \mc{\na} & \mc{\na} \\
& LLaVACAM        & 0.5340 & 0.5357 & 0.4347 & 0.4362 & 0.6890 & 0.6824 & \mc{\na} & \mc{\na} \\
& IGOS++          & 0.5350 & 0.5219 & 0.4313 & 0.4097 & 0.6874 & 0.6816 & \mc{\na} & \mc{\na} \\
& RISE          & 0.5540 & 0.5019 & 0.4610 & 0.3771 & 0.6868 & 0.6821 & 5000.00 & 1.11 \\
& Greedy          & 0.6284 & 0.4350 & 0.5721 & 0.2815 & 0.7081 & 0.7064 & 2931.52 & 2.14 \\
\rowcolor{oursrow}
& PhaseWin (Ours) & 0.6155 & 0.4467 & 0.5535 & 0.2968 & 0.7015 & 0.6995 & 1401.60 & 4.39 \\

\bottomrule
\end{tabular}}
\vspace{-6pt}
\end{table*}

\begin{table}[t]
\centering
\small
\setlength{\tabcolsep}{5.5pt}
\renewcommand\arraystretch{1.15}
\caption{Evaluation of faithfulness (Insertion/Deletion AUC) and efficiency metrics on MS COCO and RefCOCO validation sets (Florence-2).}
\label{tab:vps_t2_aligned_eff}
\resizebox{\columnwidth}{!}{%
\begin{tabular}{
   l l
  !{\color{gray!40}\vrule}
  S[table-format=1.4] S[table-format=1.4]
  !{\color{gray!40}\vrule}
  c c
}
\toprule
\multirow{2}{*}{\textbf{Datasets}} & \multirow{2}{*}{\textbf{Methods}} &
\multicolumn{2}{c!{\color{gray!40}\vrule}}{Faithfulness Metrics} &
\multicolumn{2}{c}{\textbf{Efficiency Metrics}} \\
\cmidrule(lr){3-4}\cmidrule(lr){5-6}
& &
\mc{\shortstack{Insertion\\(↑)}} &
\mc{\shortstack{Deletion\\(↓)}} &
\mc{\shortstack{$\mathrm{MEC}_{\mathrm{ave}}$\\(↓)}} &
\mc{\shortstack{\textbf{A-C ratio}\\(↑)}} \\
\midrule
\multirow{4}{*}{\shortstack[l]{MS COCO\\(Detection task)}}
& RISE & 0.7477 & 0.0972 & 5000 & 1.50 \\
& HSIC & 0.5345 & 0.2730 & 1536 & 3.48 \\
\cmidrule(lr){2-6}
& Greedy-50 & 0.7678 & 0.0550 & 2548.1 & 2.98 \\
& \cellcolor{oursrow} PhaseWin-50 & \cellcolor{oursrow} 0.7615 & \cellcolor{oursrow} 0.0474 & \cellcolor{oursrow} 2184.1 & \cellcolor{oursrow} 3.49 \\
\midrule
\multirow{4}{*}{\shortstack[l]{RefCOCO\\(REC task)}}
& RISE & 0.7922 & 0.3505 & 5000 & 1.24 \\
& HSIC & 0.7639 & 0.3560 & 1536 & 3.57 \\
\cmidrule(lr){2-6}
& Greedy-50 & 0.8301 & 0.1159 & 2547.8 & 3.25 \\
& \cellcolor{oursrow} PhaseWin-50 & \cellcolor{oursrow} 0.8312 & \cellcolor{oursrow} 0.1205 & \cellcolor{oursrow} 2349.1 & \cellcolor{oursrow} 3.53 \\
\bottomrule
\end{tabular}}
\vspace{-6pt}
\end{table}

\subsubsection{Cross-backbone transfer to Florence-2}
Table~\ref{tab:vps_t2_aligned_eff} verifies that the same pattern transfers to Florence-2. PhaseWin nearly matches Greedy on both MS COCO and RefCOCO, with Insertion scores of 0.7615 vs.\ 0.7678 on COCO and 0.8312 vs.\ 0.8301 on RefCOCO. Compared with RISE, it achieves better faithfulness at lower cost, and compared with HSIC, it provides much stronger faithfulness with a modest increase in MEC. The speedup is smaller than with Grounding DINO, which is consistent with the weaker local submodularity of Florence-2, but PhaseWin still yields a very strong overall faithfulness--cost trade-off.

\subsubsection{Failure analysis}
Search efficiency is especially important when explanations are needed for debugging. We therefore evaluate mis-grounded, misclassified, and undetected cases separately.

\begin{figure*}[t]
    \centering
    \includegraphics[width=\textwidth]{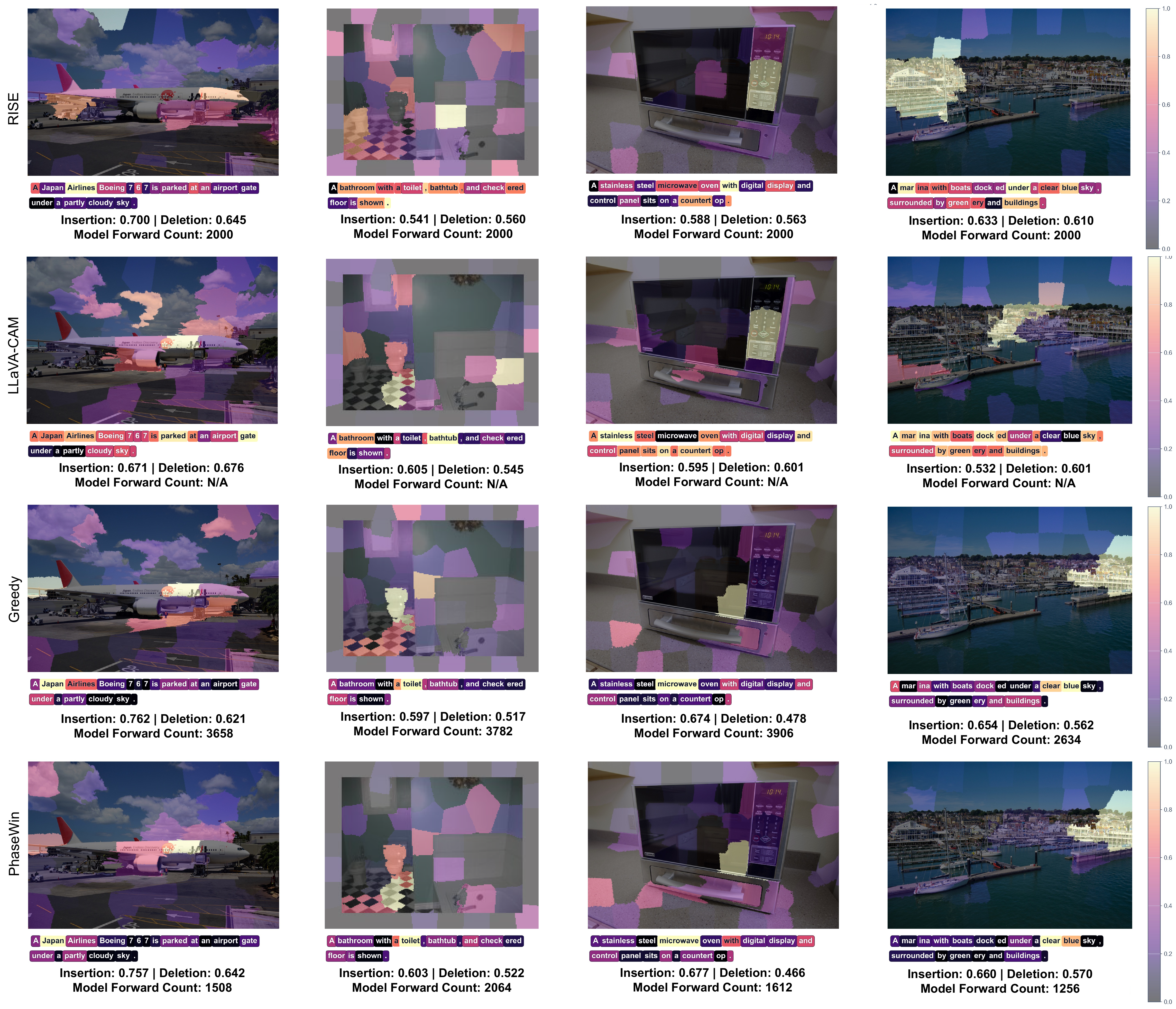}
    \vspace{-8pt}
    \caption{\textbf{Qualitative comparison on caption token attribution (Qwen2.5-VL-7B-Instruct).} Each column shows one image-caption example, and each row compares one attribution method.
The highlighted tokens define the attribution target, and the overlays show the ranked image regions supporting those tokens.
PhaseWin closely follows Greedy in token-relevant visual evidence while using far fewer model forward evaluations than exhaustive Greedy search.}
    \label{fig:caption_vis}
    \vspace{-10pt}
\end{figure*}

\textbf{REC failures.}
Table~\ref{tab:refcoco_faith_eff} shows that search-based attribution remains clearly superior when Grounding DINO grounds the wrong object. PhaseWin-100 sligtly surpasses Greedy-100 in Insertion (0.5047 vs.\ 0.4981) and class-based Insertion (0.6023 vs.\ 0.5990) while reducing $\mathrm{MEC}_{\mathrm{ave}}$ from 10100 to 3164.4. The 50-region version trades a small amount of faithfulness for a large efficiency gain, reaching an A-C ratio of 10.48.

\begin{table}[t]
\centering
\small
\setlength{\tabcolsep}{5.5pt}
\renewcommand\arraystretch{1.15}
\caption{RefCOCO (REC task): Faithfulness metrics and efficiency (Grounding DINO).}
\label{tab:refcoco_faith_eff}
\resizebox{\columnwidth}{!}{%
\begin{tabular}{
   l l
  !{\color{gray!40}\vrule}
  S[table-format=1.4] S[table-format=1.4] S[table-format=1.4]
  !{\color{gray!40}\vrule}
  c c
}
\toprule
\multirow{2}{*}{\textbf{Datasets}} & \multirow{2}{*}{\textbf{Methods}} &
\multicolumn{3}{c!{\color{gray!40}\vrule}}{Faithfulness Metrics} &
\multicolumn{2}{c}{\textbf{Efficiency Metrics}} \\
\cmidrule(lr){3-5}\cmidrule(lr){6-7}
& &
\mc{\shortstack{Ins.\\(↑)}} &
\mc{\shortstack{Ins.~(class)\\(↑)}} &
\mc{\shortstack{Ave.~high.\\score~(↑)}} &
\mc{\shortstack{$\mathrm{MEC}_{\mathrm{ave}}$\\(↓)}} &
\mc{\shortstack{\textbf{A-C ratio}\\(↑)}} \\
\midrule
\multirow{8}{*}{\shortstack[l]{RefCOCO\\(REC task)}}
& Grad-CAM & 0.1536 & 0.2794 & 0.3295 & \na & \na \\
& SSGrad-CAM++ & 0.1590 & 0.2837 & 0.3266 & \na & \na \\
& RISE & 0.3486 & 0.4787 & 0.6096 & 5000 & 1.21 \\
& HSIC & 0.2274 & 0.3488 & 0.4495 & 1536 & 2.92 \\
& ODAM & 0.1793 & 0.3001 & 0.3453 & \na & \na \\
\cmidrule(lr){2-7}
& Greedy-100 & 0.4981 & 0.5990 & 0.7007 & 10100 & 0.69 \\
& \cellcolor{oursrow} PhaseWin-50 & \cellcolor{oursrow} 0.4455 & \cellcolor{oursrow} 0.5537 & \cellcolor{oursrow} 0.6437 & \cellcolor{oursrow} 614.4 & \cellcolor{oursrow} 10.48 \\
& \cellcolor{oursrow} PhaseWin-100 & \cellcolor{oursrow} 0.5047 & \cellcolor{oursrow} 0.6023 & \cellcolor{oursrow} 0.7116 & \cellcolor{oursrow} 3164.4 & \cellcolor{oursrow} 2.25 \\
\bottomrule
\end{tabular}}
\vspace{-8pt}
\end{table}

\textbf{Misclassified detections.}
Table~\ref{tab:miscls_faith_eff} reports MS COCO and LVIS samples whose target objects are detected but assigned the wrong category. Gradient-based baselines are weak in this regime, and RISE/HSIC improve only moderately. Greedy remains the strongest raw baseline, while PhaseWin stays close with far fewer evaluations. On MS COCO, PhaseWin-50 uses only 477.3 evaluations instead of 10100 and improves the A-C ratio from 0.45 to 7.90. On LVIS rare categories, PhaseWin-50 reduces the cost by more than $20\times$ and raises the A-C ratio from 0.26 to 5.20.

\begin{table}[t]
\centering
\small
\setlength{\tabcolsep}{5.5pt}
\renewcommand\arraystretch{1.15}
\caption{MS COCO and LVIS (misclassified samples): Faithfulness metrics and efficiency (Grounding DINO).}
\label{tab:miscls_faith_eff}
\resizebox{\columnwidth}{!}{%
\begin{tabular}{
   l l
  !{\color{gray!40}\vrule}
  S[table-format=1.4] S[table-format=1.4] S[table-format=1.4] c
  !{\color{gray!40}\vrule}
  c c
}
\toprule
\multirow{2}{*}{\textbf{Datasets}} & \multirow{2}{*}{\textbf{Methods}} &
\multicolumn{4}{c!{\color{gray!40}\vrule}}{Faithfulness Metrics} &
\multicolumn{2}{c}{\textbf{Efficiency Metrics}} \\
\cmidrule(lr){3-6}\cmidrule(lr){7-8}
& &
\mc{\shortstack{Ins.\\(↑)}} &
\mc{\shortstack{Ins.~(class)\\(↑)}} &
\mc{\shortstack{Ave.~high.\\score~(↑)}} &
\mc{\shortstack{ESR\\(↑)}} &
\mc{\shortstack{$\mathrm{MEC}_{\mathrm{ave}}$\\(↓)}} &
\mc{\shortstack{\textbf{A-C ratio}\\(↑)}} \\
\midrule
\multirow{8}{*}{\shortstack[l]{MS COCO\\(Detection task)}}
& Grad-CAM & 0.1091 & 0.1478 & 0.3102 & 38.38\% & \na & \na \\
& SSGrad-CAM++ & 0.0960 & 0.1336 & 0.2952 & 33.51\% & \na & \na \\
& RISE & 0.2170 & 0.2661 & 0.3603 & 50.26\% & 5000 & 0.72 \\
& HSIC & 0.1771 & 0.2161 & 0.3143 & 34.59\% & 1536 & 2.04 \\
& ODAM & 0.1129 & 0.1486 & 0.2869 & 32.97\% & \na & \na \\
\cmidrule(lr){2-8}
& Greedy-100 & 0.3357 & 0.3967 & 0.4591 & 69.73\% & 10100 & 0.45 \\
& \cellcolor{oursrow} PhaseWin-50 & \cellcolor{oursrow} 0.2614 & \cellcolor{oursrow} 0.3198 & \cellcolor{oursrow} 0.3770 & \cellcolor{oursrow} 51.35\% & \cellcolor{oursrow} 477.3 & \cellcolor{oursrow} 7.90 \\
& \cellcolor{oursrow} PhaseWin-100 & \cellcolor{oursrow} 0.3018 & \cellcolor{oursrow} 0.3583 & \cellcolor{oursrow} 0.4289 & \cellcolor{oursrow} 63.78\% & \cellcolor{oursrow} 2595.0 & \cellcolor{oursrow} 1.65 \\
\midrule
\multirow{8}{*}{\shortstack[l]{LVIS V1 (rare)\\(Zero-shot det. task)}}
& Grad-CAM & 0.0503 & 0.0891 & 0.1564 & 12.50\% & \na & \na \\
& SSGrad-CAM++ & 0.0574 & 0.0946 & 0.1580 & 11.84\% & \na & \na \\
& RISE & 0.1245 & 0.1647 & 0.2088 & 28.95\% & 5000 & 0.41 \\
& HSIC & 0.0963 & 0.1247 & 0.1748 & 16.45\% & 1536 & 1.14 \\
& ODAM & 0.0575 & 0.0954 & 0.1520 & 9.21\% & \na & \na \\
\cmidrule(lr){2-8}
& Greedy-100 & 0.1776 & 0.2190 & 0.2606 & 43.29\% & 10100 & 0.26 \\
& \cellcolor{oursrow} PhaseWin-50 & \cellcolor{oursrow} 0.1394 & \cellcolor{oursrow} 0.1817 & \cellcolor{oursrow} 0.2119 & \cellcolor{oursrow} 36.63\% & \cellcolor{oursrow} 426.5 & \cellcolor{oursrow} 5.20 \\
& \cellcolor{oursrow} PhaseWin-100 & \cellcolor{oursrow} 0.1475 & \cellcolor{oursrow} 0.1845 & \cellcolor{oursrow} 0.2296 & \cellcolor{oursrow} 39.47\% & \cellcolor{oursrow} 2204.8 & \cellcolor{oursrow} 1.04 \\
\bottomrule
\end{tabular}}
\vspace{-8pt}
\end{table}

\textbf{Undetected instances.}
Table~\ref{tab:undetected_faith_eff} studies the hardest failure mode, where the object is not detected at all. Even in this setting, search-based attribution is substantially more informative than gradient-based or random-mask alternatives. On MS COCO, PhaseWin-100 sligtly exceeds Greedy-100 in Insertion (0.2156 vs.\ 0.2102) and ESR (44.44\% vs.\ 41.33\%) while using 4.7$\times$ fewer evaluations. On LVIS, PhaseWin again provides the best cost-effective trade-off, especially in the 50-region setting.

\begin{table}[t]
\centering
\small
\setlength{\tabcolsep}{5.5pt}
\renewcommand\arraystretch{1.15}
\caption{MS COCO and LVIS (undetected failure samples): Faithfulness metrics and efficiency (Grounding DINO).}
\label{tab:undetected_faith_eff}
\resizebox{\columnwidth}{!}{%
\begin{tabular}{
   l l
  !{\color{gray!40}\vrule}
  S[table-format=1.4] S[table-format=1.4] S[table-format=1.4] c
  !{\color{gray!40}\vrule}
  c c
}
\toprule
\multirow{2}{*}{\textbf{Datasets}} & \multirow{2}{*}{\textbf{Methods}} &
\multicolumn{4}{c!{\color{gray!40}\vrule}}{Faithfulness Metrics} &
\multicolumn{2}{c}{\textbf{Efficiency Metrics}} \\
\cmidrule(lr){3-6}\cmidrule(lr){7-8}
& &
\mc{\shortstack{Ins.\\(↑)}} &
\mc{\shortstack{Ins.~(class)\\(↑)}} &
\mc{\shortstack{Ave.~high.\\score~(↑)}} &
\mc{\shortstack{ESR\\(↑)}} &
\mc{\shortstack{$\mathrm{MEC}_{\mathrm{ave}}$\\(↓)}} &
\mc{\shortstack{\textbf{A-C ratio}\\(↑)}} \\
\midrule
\multirow{8}{*}{\shortstack[l]{MS COCO\\(Detection task)}}
& Grad-CAM & 0.0760 & 0.1321 & 0.2153 & 16.44\% & \na & \na \\
& SSGrad-CAM++ & 0.0671 & 0.1151 & 0.2124 & 16.44\% & \na & \na \\
& RISE & 0.1538 & 0.2260 & 0.2564 & 26.94\% & 5000 & 0.31 \\
& HSIC & 0.1101 & 0.1716 & 0.1945 & 13.56\% & 1536 & 1.43 \\
& ODAM & 0.0745 & 0.1350 & 0.2037 & 13.78\% & \na & \na \\
\cmidrule(lr){2-8}
& Greedy-100 & 0.2102 & 0.3011 & 0.3014 & 41.33\% & 10100 & 0.21 \\
& \cellcolor{oursrow} PhaseWin-50 & \cellcolor{oursrow} 0.1801 & \cellcolor{oursrow} 0.2641 & \cellcolor{oursrow} 0.2726 & \cellcolor{oursrow} 33.78\% & \cellcolor{oursrow} 427.8 & \cellcolor{oursrow} 6.37 \\
& \cellcolor{oursrow} PhaseWin-100 & \cellcolor{oursrow} 0.2156 & \cellcolor{oursrow} 0.3045 & \cellcolor{oursrow} 0.3289 & \cellcolor{oursrow} 44.44\% & \cellcolor{oursrow} 2160.2 & \cellcolor{oursrow} 1.52 \\
\midrule
\multirow{8}{*}{\shortstack[l]{LVIS V1 (rare)\\(Zero-shot det. task)}}
& Grad-CAM & 0.0291 & 0.0689 & 0.0901 & 5.43\% & \na & \na \\
& SSGrad-CAM++ & 0.0292 & 0.0680 & 0.0897 & 5.24\% & \na & \na \\
& RISE & 0.0703 & 0.1184 & 0.1312 & 18.73\% & 5000 & 0.26 \\
& HSIC & 0.0516 & 0.0920 & 0.1168 & 13.48\% & 1536 & 0.76 \\
& ODAM & 0.0283 & 0.0716 & 0.0851 & 4.68\% & \na & \na \\
\cmidrule(lr){2-8}
& Greedy-100 & 0.1155 & 0.1886 & 0.1784 & 30.15\% & 10100 & 0.18 \\
& \cellcolor{oursrow} PhaseWin-50 & \cellcolor{oursrow} 0.0787 & \cellcolor{oursrow} 0.1286 & \cellcolor{oursrow} 0.1309 & \cellcolor{oursrow} 17.04\% & \cellcolor{oursrow} 348.4 & \cellcolor{oursrow} 3.76 \\
& \cellcolor{oursrow} PhaseWin-100 & \cellcolor{oursrow} 0.0942 & \cellcolor{oursrow} 0.0069 & \cellcolor{oursrow} 0.1552 & \cellcolor{oursrow} 24.72\% & \cellcolor{oursrow} 1509.1 & \cellcolor{oursrow} 1.03 \\
\bottomrule
\end{tabular}}
\vspace{-8pt}
\end{table}

\subsubsection{Qualitative results}
Representative object-level cases are shown in Figure~\ref{fig:vis}. Compared with ODAM and RISE, PhaseWin produces sharper and less noisy attribution maps. It is visually close to Greedy, and in some examples even recovers a sligtly higher peak object score because the annealed search can escape locally suboptimal choices.

\subsection{Caption Token Attribution}
\label{sec:caption_exp}
We finally test PhaseWin in a generative MLLM setting. Following the token-level attribution protocol of EAGLE~\cite{chen2026where}, each example consists of an image and a pre-selected subset of generated caption tokens rather than a sentence-level metric. Concretely, given an image and the prefix preceding the selected tokens, we score a region subset by the mean probability of the selected generated tokens. We use Qwen2.5-VL-3B-Instruct and Qwen2.5-VL-7B-Instruct on 275 COCO validation images, partition each image into 64 SLICO superpixels, and compare PhaseWin with Greedy, RISE, vanilla gradients, and LLaVACAM. Sensitivity-aware AUC is computed with the default threshold of 0.2. In particular, because we were using the same samples to test two models of different sizes, we used SLICO-64 segmentation for the 3B model.

\subsubsection{Main results}
Table~\ref{tab:caption_main} shows that the search-based ordering paradigm also transfers to generative MLLM attribution. Greedy remains the strongest method, but PhaseWin is again the closest approximation while cutting the average evaluation count. It is worth noting that for both slice and slic segmentation, because slicing tends to cover fewer regions, the average number of forward passes for greedy operations surges dramatically (2931.52-4168.62) under slice, while PhaseWin remains stable (1401.60-1412.71). This also verifies our hypothesis that the number of forward passes in PhaseWin is essentially controlled by the number of features on which the model's decisions depend, and this does not change linearly with the increase in the number of segmented regions, unlike the quadratic complexity of greedy operations. 

\subsubsection{Qualitative results}
Figure~\ref{fig:caption_vis} is reserved for qualitative caption cases. This panel is particularly useful for showing whether PhaseWin preserves the token-specific evidence used by Greedy while suppressing background regions that are irrelevant to the selected caption tokens.

\subsection{Cross-task Discussion}
Across all three tasks, two consistent patterns emerge. First, explicit region search is more reliable than purely local saliency estimation when the evaluation target is defined by insertion/deletion replay. Second, PhaseWin is consistently the closest approximation to Greedy under a much smaller evaluation budget: it reduces $\mathrm{MEC}_{\mathrm{ave}}$ by about $1.9\times$ on classification, $2.8\times$ on caption attribution, and by $3.2\times$--$29.0\times$ on the most demanding Grounding DINO detection and grounding settings. The exact speedup depends on the local structure of the task-specific score function: it is largest for Grounding DINO, moderate for CLIP and Qwen2.5-VL, and smaller but still beneficial for Florence-2. Overall, these results support the main claim of this paper: PhaseWin is not a detector-specific heuristic, but a general acceleration mechanism for region-based attribution.

\subsection{Ablation Study}
\textbf{Speed--Accuracy Trade-off}
\begin{figure}[t]
    \centering
    \includegraphics[width=0.9\columnwidth]{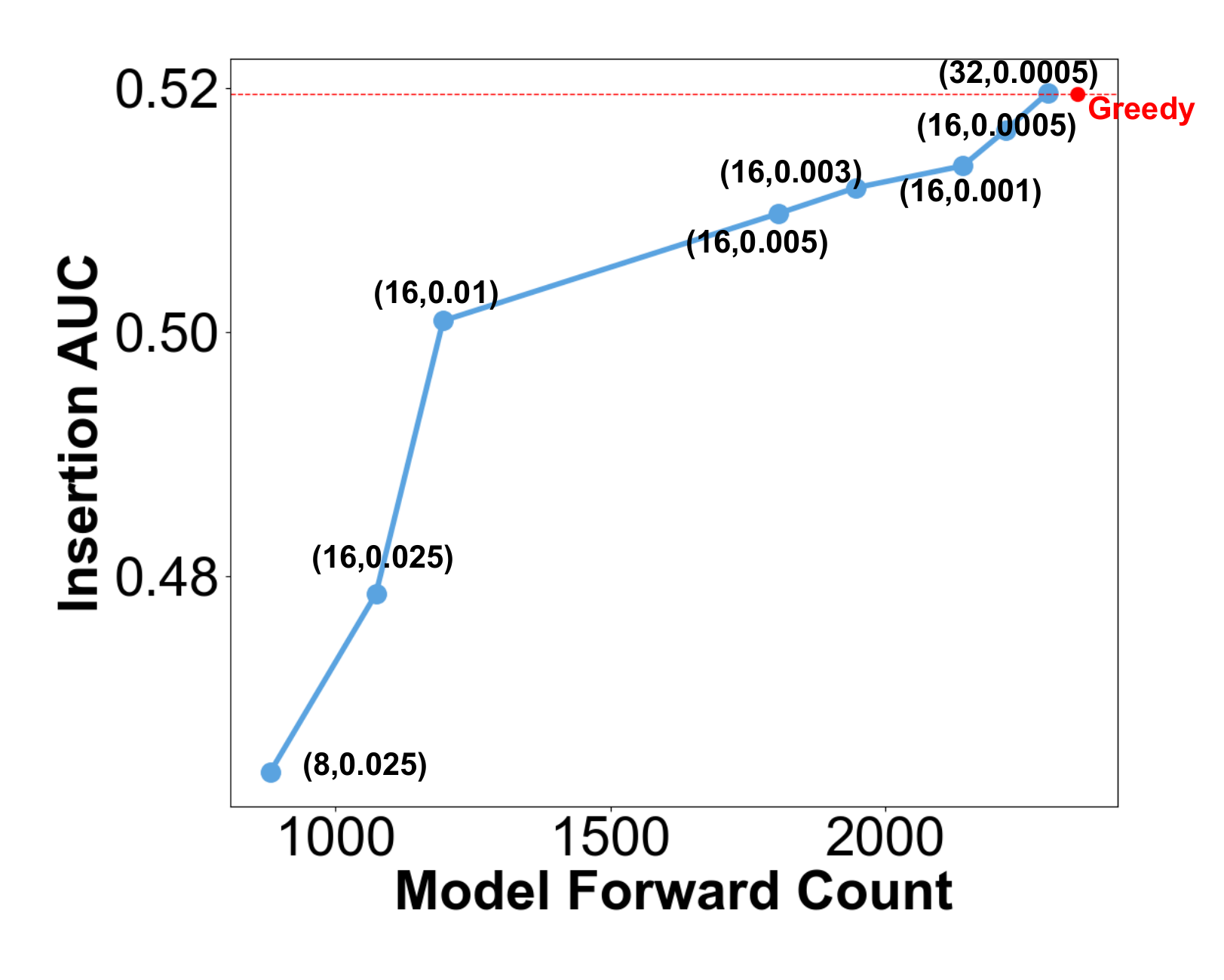}
    \vspace{-5pt}
    \caption{Trade-off between speed and precision.}
    \label{fig:af}
    \vspace{-4pt}
\end{figure}

Beyond fixed default settings, PhaseWin provides a tunable continuum between speed and faithfulness. Figure~\ref{fig:af} reports a representative object-level ablation by varying the window size and the phase-exit threshold. As the search budget increases, the insertion AUC rises monotonically and approaches the performance of Greedy. In the high-budget regime, the annealed deferral mechanism can even sligtly exceed Greedy by escaping locally suboptimal early choices. This behavior is particularly attractive in practice, because it allows users to adjust the computational budget without changing the attribution interface or the evaluation protocol.

\textbf{Boundary cases.}
We also analyze the rare cases where PhaseWin deviates noticeably from greedy search in Appendix~\ref{app:greedy_tail}. These samples typically show unstable local marginal rankings, where early region scores are not reliable indicators of later contribution. In the captioning task, we do not observe such pathological samples. In the classification task, they account for roughly \(13.1\%\) of the evaluated images; after excluding them, the faithfulness gap between PhaseWin and greedy is within \(2\%\). These results indicate that the remaining discrepancy is concentrated in a small set of ranking-unstable cases, while PhaseWin preserves greedy-level faithfulness on the overwhelming majority of samples.
\section{Conclusion}\label{sec:conclusion}
We presented PhaseWin, an efficient search algorithm for high-faithfulness visual 
attribution. Casting attribution as an ordered subset-search problem, PhaseWin 
replaces the quadratic global rescoring of greedy search with a phase-window 
strategy---anchor-based pruning, windowed refinement, dynamic supervision, and 
deferred selection---reducing model evaluations to near-linear scale while preserving 
greedy-like behavior.
Across image classification, object detection, referring expression comprehension, and 
image captioning, PhaseWin approaches the faithfulness of greedy search with 
substantially fewer forward evaluations. This advantage is not tied to a specific 
architecture, dataset, or task, but reflects a structure shared by high-faithfulness 
attribution: informative regions can be identified through a few locally reliable 
comparisons rather than repeated global rescoring. Our analysis makes this precise, 
establishing near-greedy guarantees under monotonicity and feature-level 
diminishing-return conditions and linking them to task-level objectives such as 
classification confidence, localization overlap, and response recovery.


\ifCLASSOPTIONcompsoc
\section*{Acknowledgments}
\else
\section*{Acknowledgment}
\fi



\input{arxiv.bbl}




\appendices
\setcounter{section}{0}
\renewcommand{\thesection}{\Alph{section}}
\section{Submodularity and Supermodularity}
\label{appendix:submodularity}

This appendix clarifies why the score optimized by greedy-based visual attribution should not be regarded as a globally submodular objective. The key point is not that redundancy or synergy never appears in visual attribution. Rather, the two-sided sufficiency--necessity score used in prior subset-search attribution has a structural symmetry that is incompatible with global submodularity, except in a degenerate modular case. Therefore, the empirical success of greedy search is better explained by the monotonic tendency of the attribution score and by the preference of insertion AUC for early-stage gain accumulation, rather than by a valid global submodular guarantee.

\subsection{Definitions}

Let \(V\) be a finite ground set of candidate regions and let
\(\mathcal{H}:2^V\to\mathbb{R}\) be a set function. For \(S\subseteq V\) and
\(i\in V\setminus S\), define the marginal gain
\[
    \Delta_{\mathcal{H}}(i\mid S)
    =
    \mathcal{H}(S\cup\{i\})-\mathcal{H}(S).
\]
For distinct \(i,j\in V\setminus S\), define the second-order discrete interaction
\[
    \Gamma_{\mathcal{H}}(i,j\mid S)
    =
    \mathcal{H}(S\cup\{i,j\})
    -\mathcal{H}(S\cup\{i\})
    -\mathcal{H}(S\cup\{j\})
    +\mathcal{H}(S).
\]

\begin{definition}[Submodularity]
A set function \(\mathcal{H}\) is \emph{submodular} if, for all
\(A\subseteq B\subseteq V\) and \(i\in V\setminus B\),
\[
    \Delta_{\mathcal{H}}(i\mid A)
    \ge
    \Delta_{\mathcal{H}}(i\mid B).
\]
Equivalently, for every \(S\subseteq V\setminus\{i,j\}\) and every distinct
\(i,j\in V\setminus S\),
\[
    \Gamma_{\mathcal{H}}(i,j\mid S)\le 0.
\]
Thus, submodularity corresponds to non-positive pairwise interaction, or diminishing returns.
\end{definition}

\begin{definition}[Supermodularity]
A set function \(\mathcal{H}\) is \emph{supermodular} if, for all
\(A\subseteq B\subseteq V\) and \(i\in V\setminus B\),
\[
    \Delta_{\mathcal{H}}(i\mid A)
    \le
    \Delta_{\mathcal{H}}(i\mid B).
\]
Equivalently,
\[
    \Gamma_{\mathcal{H}}(i,j\mid S)\ge 0,
    \qquad
    \forall S\subseteq V\setminus\{i,j\}.
\]
Thus, supermodularity corresponds to non-negative pairwise interaction, or increasing returns.
\end{definition}

\begin{definition}[Modularity]
A set function \(\mathcal{H}\) is \emph{modular} if both submodularity and supermodularity hold, equivalently
\[
    \Gamma_{\mathcal{H}}(i,j\mid S)=0,
    \qquad
    \forall S\subseteq V\setminus\{i,j\}.
\]
In this case, every region has a context-independent marginal contribution:
\[
    \mathcal{H}(S)
    =
    \mathcal{H}(\emptyset)
    +
    \sum_{i\in S} w_i .
\]
\end{definition}

\subsection{The Two-sided Attribution Score}

Let \(G(S)\) denote the task response obtained after inserting the region subset
\(S\) into the image. Depending on the task, \(G\) may represent a class score,
an object confidence, an IoU-aware detection score, or another target response.
The attribution score used by greedy subset search combines a sufficiency term
and a necessity term:
\[
    F(S)
    =
    G(S)
    +
    \bigl(G(V)-G(V\setminus S)\bigr).
    \label{eq:two_sided_score}
\]
The first term measures how much evidence is recovered when \(S\) is inserted.
The second term measures how much evidence is lost when \(S\) is removed from the full image.
The constant \(G(V)\) does not affect marginal comparisons, but it makes the necessity term directly comparable to the sufficiency term.

A basic property of \(F\) is its complement identity:
\[
    F(S)+F(V\setminus S)
    =
    F(V)+F(\emptyset)
    =
    2G(V),
    \qquad
    \forall S\subseteq V.
    \label{eq:complement_identity}
\]
Indeed,
\[
\begin{aligned}
    F(S)+F(V\setminus S)
    &=
    \Bigl(G(S)+G(V)-G(V\setminus S)\Bigr)
    \\&+
    \Bigl(G(V\setminus S)+G(V)-G(S)\Bigr)  \\
    &=
    2G(V).
\end{aligned}
\]

\begin{theorem}[Submodularity of \(F\) forces modularity]
Let \(F\) be the two-sided attribution score in Eq.~\eqref{eq:two_sided_score}.
If \(F\) is submodular, then \(F\) is modular.
\end{theorem}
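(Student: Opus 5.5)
The plan is to exploit the complement identity Eq.~\eqref{eq:complement_identity}: the function $\bar F(S):=F(V\setminus S)=2G(V)-F(S)$ is, up to an additive constant, just $-F$. Submodularity is preserved under complementation of the argument, i.e.\ if $F$ is submodular then $S\mapsto F(V\setminus S)$ is submodular as well. On the other hand, $-F$ plus a constant is supermodular. Combining the two, $F$ must be both submodular and supermodular, hence modular by the definition above.

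Concretely, I would carry this out with the second-order interaction $\Gamma$. First I would fix $S\subseteq V$ and distinct $i,j\in V\setminus S$, and set $T:=V\setminus(S\cup\{i,j\})$, so that $T\cap\{i,j\}=\varnothing$. Next I would expand $\Gamma_F(i,j\mid T)$ via the complement identity. Since
\[
V\setminus(T\cup\{i,j\})=S,\quad V\setminus(T\cup\{i\})=S\cup\{j\},
\]
\[
V\setminus(T\cup\{j\})=S\cup\{i\},\quad V\setminus T=S\cup\{i,j\},
\]
replacing each $F(X)$ by $2G(V)-F(V\setminus X)$ makes the four constants cancel. This gives
\[
\Gamma_F(i,j\mid T)=-\bigl(F(S)-F(S\cup\{j\})-F(S\cup\{i\})+F(S\cup\{i,j\})\bigr)=-\Gamma_F(i,j\mid S).
\]
Then submodularity applied at both $S$ and $T$ yields $\Gamma_F(i,j\mid S)\le 0$ and $-\Gamma_F(i,j\mid S)=\Gamma_F(i,j\mid T)\le 0$. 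Together these force $\Gamma_F(i,j\mid S)=0$ for all admissible $S,i,j$, which is exactly the definition of modularity stated earlier.

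The only step needing care is the bookkeeping of which complement corresponds to which term, in particular checking that the sign pattern of $\Gamma$ is reversed rather than preserved. Done as above, the two singleton terms swap with each other and the empty/full terms swap with each other, so the sign flip comes solely from the minus sign in the complement identity. If desired, I would close by noting the representation $F(S)=F(\emptyset)+\sum_{i\in S}w_i$ with $w_i=\Delta_F(i\mid\varnothing)$. This follows by telescoping, since vanishing $\Gamma$ makes every marginal gain context-independent. The identity $F(\emptyset)=G(V)$ and $\sum_i w_i=F(V)-F(\emptyset)=0$ then also shows that the degenerate modular case is symmetric about $G(V)$.
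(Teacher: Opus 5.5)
Your proof is correct and follows the paper's idea: the complement identity $F(V\setminus X)=2G(V)-F(X)$ turns submodularity into supermodularity. The paper does this through the lattice inequality applied to $V\setminus A$ and $V\setminus B$, while you use the pointwise relation $\Gamma_F(i,j\mid T)=-\Gamma_F(i,j\mid S)$, which lands directly on the $\Gamma$-based definition of modularity the paper states. One correction to your optional closing remark: $F(\emptyset)=G(\emptyset)+G(V)-G(V)=G(\emptyset)$, not $G(V)$, so $\sum_i w_i=F(V)-F(\emptyset)=2\bigl(G(V)-G(\emptyset)\bigr)$ need not vanish. The only symmetry available is the general identity $F(S)+F(V\setminus S)=2G(V)$.
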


\begin{proof}
For arbitrary \(A,B\subseteq V\), apply submodularity to the two complement sets
\(V\setminus A\) and \(V\setminus B\):
\[
    F(V\setminus A)+F(V\setminus B)
    \ge
    F\bigl(V\setminus(A\cup B)\bigr)
    +
    F\bigl(V\setminus(A\cap B)\bigr).
\]
Using the complement identity in Eq.~\eqref{eq:complement_identity}, this becomes
\[
    \bigl(C-F(A)\bigr)+\bigl(C-F(B)\bigr)
    \ge
    \bigl(C-F(A\cup B)\bigr)+\bigl(C-F(A\cap B)\bigr),
\]
where \(C=F(V)+F(\emptyset)\). After cancellation,
\[
    F(A)+F(B)
    \le
    F(A\cup B)+F(A\cap B).
\]
This is exactly supermodularity. Therefore, \(F\) is both submodular and supermodular. Hence it is modular.
\end{proof}

This theorem shows that \(F\) cannot be globally submodular unless it is in the degenerate modular case. In a modular score, every region has a fixed marginal contribution independent of the current selected context. Then subset search reduces to a one-shot sorting problem, and there is no genuine interaction among regions. This is incompatible with the empirical behavior of modern visual models, where redundant and synergistic region interactions are both observed.

The same conclusion can be expressed directly through second-order interactions. For the two-sided score in Eq.~\eqref{eq:two_sided_score}, we have
\[
    \Gamma_F(i,j\mid S)
    =
    \Gamma_G(i,j\mid S)
    -
    \Gamma_G\bigl(i,j\mid V\setminus(S\cup\{i,j\})\bigr).
    \label{eq:interaction_difference}
\]
Thus, \(F\) is modular only if every interaction of \(G\) in an insertion context is exactly matched by the corresponding interaction in the complementary deletion context:
\[
    \Gamma_G(i,j\mid S)
    =
    \Gamma_G\bigl(i,j\mid V\setminus(S\cup\{i,j\})\bigr),
    \qquad
    \forall S,i,j.
\]
This is an extremely restrictive cancellation condition. Therefore, once there exist
\(S\subseteq V\setminus\{i,j\}\) and distinct \(i,j\in V\setminus S\) such that
\[
    \Gamma_G(i,j\mid S)
    \neq
    \Gamma_G\bigl(i,j\mid V\setminus(S\cup\{i,j\})\bigr),
\]
the score \(F\) is not modular and hence, by the theorem above, cannot be submodular.

\subsection{Why \texorpdfstring{\(G\)}{G} Is Not Globally Submodular Either}

The task response \(G\) itself does not satisfy a global submodularity or supermodularity law. Global submodularity would require
\[
    \Gamma_G(i,j\mid S)\le 0,
    \qquad
    \forall S\subseteq V\setminus\{i,j\},
\]
meaning that all region interactions are redundant. Global supermodularity would require
\[
    \Gamma_G(i,j\mid S)\ge 0,
    \qquad
    \forall S\subseteq V\setminus\{i,j\},
\]
meaning that all region interactions are synergistic. Visual recognition models do not obey either uniform sign condition. Some regions provide overlapping evidence, producing negative interactions, while others only become informative when combined, producing positive interactions. Therefore, submodularity and supermodularity should be understood as local interaction modes rather than global properties of the attribution objective.

\subsection{Response-curve Diagnostics}

The insertion AUC is a scalar summary of a response curve. For an ordering
\(\pi=(s_1,\ldots,s_{|V|})\), define the prefix set
\[
    S_j^\pi=\{s_1,\ldots,s_j\}
\]
and the response value
\[
    r_j=\mathcal{H}(S_j^\pi).
\]
The insertion AUC is
\[
    \mathrm{AUC}(\pi)
    =
    \frac{1}{|V|}
    \sum_{j=1}^{|V|}
    r_j .
\]
Thus, AUC rewards orderings whose response values rise early.

For an exact greedy ordering under a monotone submodular function \(\mathcal{H}\), the response curve must be discretely concave. Let
\[
    d_j
    =
    \mathcal{H}(S_j)-\mathcal{H}(S_{j-1})
\]
be the greedy marginal gain at step \(j\). Since \(s_j\) is chosen by greedy,
\[
    d_j
    =
    \max_{i\in V\setminus S_{j-1}}
    \Delta_{\mathcal{H}}(i\mid S_{j-1}).
\]
For the next selected region \(s_{j+1}\), submodularity gives
\[
    \Delta_{\mathcal{H}}(s_{j+1}\mid S_j)
    \le
    \Delta_{\mathcal{H}}(s_{j+1}\mid S_{j-1}),
\]
and greedy optimality at step \(j\) gives
\[
    \Delta_{\mathcal{H}}(s_{j+1}\mid S_{j-1})
    \le
    \Delta_{\mathcal{H}}(s_j\mid S_{j-1}).
\]
Therefore,
\[
    d_{j+1}\le d_j.
\]
Hence, a monotone submodular objective necessarily yields a non-increasing sequence of greedy marginal gains. If the measured greedy response curve contains increasing marginal segments, then the underlying score cannot be globally submodular.

\subsection{Empirical Implications}

Figure~\ref{sub} visualizes insertion response curves under greedy subset search for Grounding DINO and Florence-2 on the same sample. Grounding DINO shows an approximately concave trend with local violations, indicating that redundancy dominates in some stages but does not define a global law. Florence-2 exhibits a strongly convex trend, suggesting substantial synergistic interactions. These two behaviors demonstrate that the attribution response is not governed by a universal submodular or supermodular structure.

Consequently, the role of greedy search in visual attribution should be interpreted carefully. Greedy is effective because insertion AUC favors early recovery of the target response, and greedy directly optimizes the immediate marginal gain at each step. However, its effectiveness does not imply that the underlying score is globally submodular. PhaseWin therefore accelerates greedy-style subset search without relying on an invalid global submodularity assumption.

\begin{figure}[ht]
    \centering
    \includegraphics[width=0.9\linewidth]{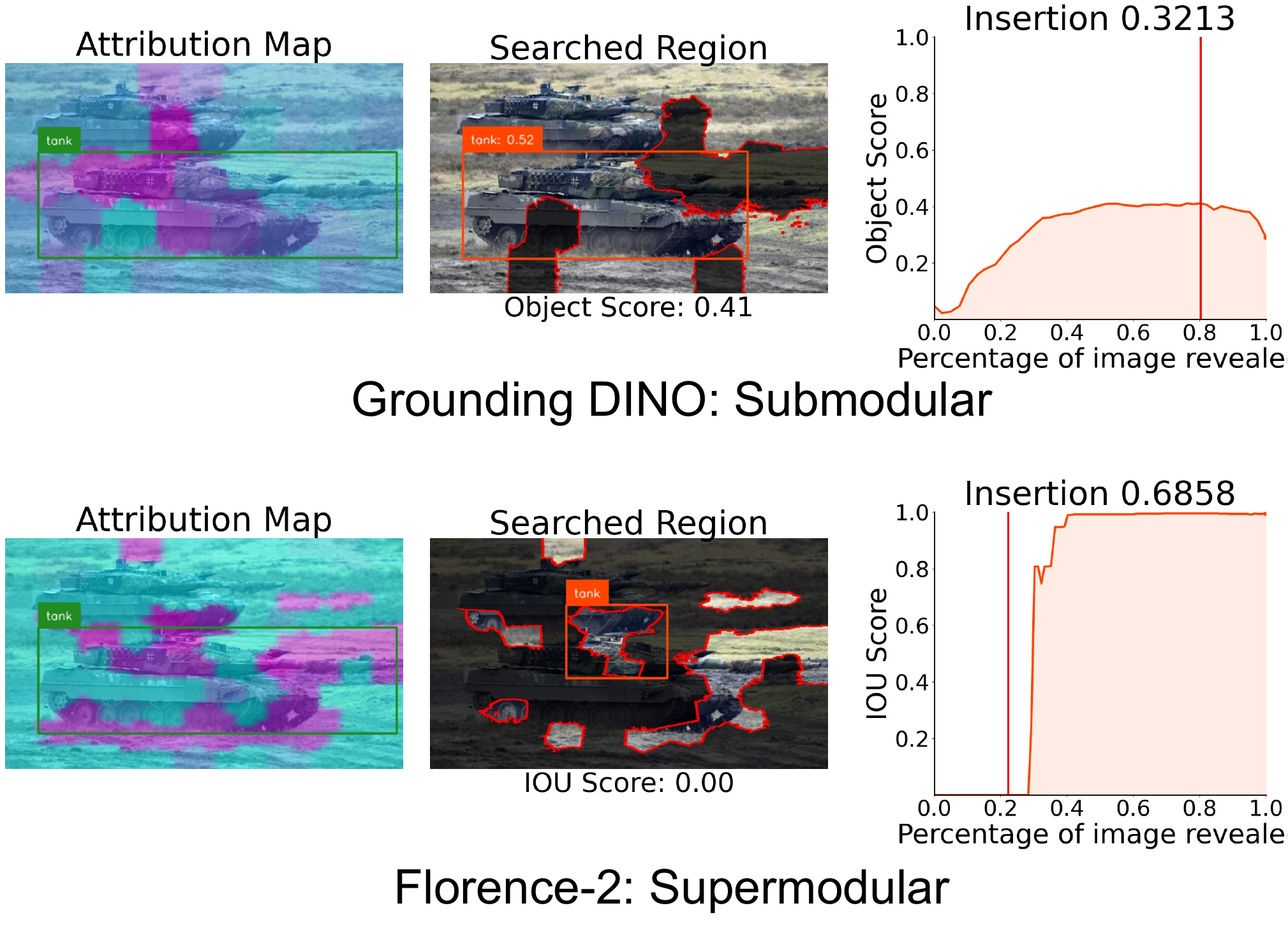}
    \caption{Insertion response curves under greedy subset search. Grounding DINO shows an approximately concave trend with local violations, whereas Florence-2 exhibits a convex, synergy-dominated trend. These curves indicate that the attribution score is not a globally submodular objective.}
    \label{sub}
\end{figure}

\clearpage
\onecolumn
\section{Notation}
\label{sec:notation}
For ease of reading, we have compiled a list of symbols that may be used in the main text and the appendix of the proof before we begin, for your reference.

\begin{table*}[h]
\caption{Structural notation table.}
\centering
\small
\setlength{\tabcolsep}{4pt}
\begin{tabular}{p{0.18\textwidth} p{0.2\textwidth} p{0.52\textwidth}}
\toprule
\textbf{Symbol} & \textbf{Scope} & \textbf{Meaning} \\
\midrule
$U$ & Problem setup & Finite ground set of candidate regions. \\
$n$ & Problem setup & Ground-set size, $n=|U|$. \\
$\mathcal H=\{H_1,\dots,H_q\}$ & Structure & Partition of $U$ into $q$ semantic blocks. \\
$H_j$ & Structure & The $j$-th block in $\mathcal H$. \\
$q$ & Structure & Number of semantic blocks. \\
$X\subseteq U$ & Structure & Subset of regions. \\
$B(X)$ & Structure & Activated block set: $\{j\in[q]:X\cap H_j\neq\emptyset\}$. \\
$\Phi$ & Structure & Block-level monotone submodular function on $2^{[q]}$. \\
$R$ & Structure & Residual function in decomposition $F(X)=\Phi(B(X))+R(X)$. \\
$\varepsilon_R$ & Structure & Uniform upper bound on residual marginal gains. \\
$\underline{\Delta}_\Phi$ & Structure & Minimum positive marginal gain of $\Phi$. \\
\midrule
$\Pi=(v_1,\dots,v_k)$ & Output & PhaseWin ordered output. \\
$S_i^{\mathrm{PW}}$ & Output & Prefix set $\{v_1,\dots,v_i\}$. \\
$\mathcal R_i$ & Algorithm & Live candidate pool before step $i$. \\
$\mathcal D_i$ & Algorithm & Candidates deleted at step $i$. \\
$a_i$ & Algorithm & Best live marginal before step $i$. \\
$\rho_{\mathrm{sel}}$ & Parameter & Selection threshold ratio. \\
$\rho_{\mathrm{del}}$ & Parameter & Deletion threshold ratio. \\
$\beta_i^\psi$ & Policy & Approximation ratio of window policy at step $i$. \\
$\kappa_i$ & Derived & Effective selection ratio $\kappa_i=\rho_{\mathrm{sel}}\beta_i^\psi$. \\
$\kappa_\psi$ & Derived & Uniform lower bound $\min_i \kappa_i$. \\
$C_k(\kappa_\psi,\rho_{\mathrm{del}})$ & Theorem~\ref{thm:finiteF} & Finite-cardinality approximation coefficient. \\
\midrule
$G:2^U\to\mathbb R_+$ & Objective & Attribution/evaluation set function. \\
$F:2^U\to\mathbb R$ & Objective & Search objective $F(X)=G(X)+G(U)-G(U\setminus X)$. \\
$\Delta_H(e\mid X)$ & General & Marginal gain $H(X\cup\{e\})-H(X)$. \\
$\pi$ & Ordering & Full ordering of $U$. \\
$\pi^{\mathrm{PW}}$ & Ordering & Full ordering induced by PhaseWin. \\
$P_t^\pi$ & Ordering & Prefix $\{\pi_1,\dots,\pi_t\}$. \\
$M_G(\pi)$ & Metric & Prefix maximum $\max_t G(P_t^\pi)$. \\
$\operatorname{AUC}_G(\pi)$ & Metric & Insertion AUC along ordering $\pi$. \\
$\operatorname{AUC}_G^\star$ & Metric & Optimal AUC over all orderings. \\
$\mathrm{OPT}_G$ & Metric & $\max_{X\subseteq U} G(X)$. \\
$\mathrm{OPT}_{G,k}$ & Metric & $\max_{|X|\le k} G(X)$. \\
\midrule
$\lambda, b$ & One-sided alignment & Constants in Assumption~\ref{ass:align}. \\
$\mu, b_+$ & Two-sided alignment & Constants in Assumption~\ref{ass:align}. \\
$\bar\delta$ & Weak submodularity & Submodularity violation bound of $G$. \\
\midrule
$d_i$ & Proof & Accepted marginal gain at step $i$. \\
$J_t$ & Proof & Activated block set $B(S_t^{\mathrm{PW}})$. \\
$j_t$ & Proof & Block activated at step $t$. \\
$j_t^\star$ & Proof & Best inactive block at step $t$. \\
$J_t^\star$ & Proof & Optimal block subset of size $\le t$. \\
$\mathrm{OPT}_t^\Phi$ & Proof & $\max_{|J|\le t}\Phi(J)$. \\
$X^\star$ & Proof & Optimal subset for $G$. \\
$r^\star$ & Proof & Number of activated blocks in $X^\star$. \\
$G_t^\pi$ & Proof & Prefix value $G(P_t^\pi)$. \\
$B_t^\pi$ & Proof & Prefix block set $B(P_t^\pi)$. \\
\bottomrule
\end{tabular}
\end{table*}

\clearpage
\twocolumn
\section{Necessity of the Alignment Assumption}
\label{app:necessity}

The structural assumptions on $F$ alone do not imply any nontrivial guarantee for the induced ordering on $G$.

\begin{proposition}
\label{prop:necessity}
Even if $G$ is exactly submodular, i.e., $\delta_G=0$, there is no universal constant $c>0$ such that
\begin{equation}
M_G(\pi^{\mathrm{PW}})
\ge
c\max_{X\subseteq U}G(X)
\end{equation}
or, in the equal-area setting $a_e\equiv 1$,
\begin{equation}
\operatorname{AUC}_G(\pi^{\mathrm{PW}})
\ge
c\,\operatorname{AUC}_G^\star
\end{equation}
can be guaranteed under Assumptions~\ref{ass:partition}, \ref{ass:windowfaithful}, and \ref{ass:align} alone.
\end{proposition}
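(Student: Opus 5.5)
The plan is to exhibit, for every $c>0$, a concrete instance that satisfies Assumptions~\ref{ass:partition}, \ref{ass:windowfaithful} and \ref{ass:align}, has an exactly submodular $G$, and yet gives $M_G(\pi^{\mathrm{PW}})<c\max_X G(X)$; the AUC ratio is then handled with the same family. The reason such instances should exist is that Assumption~\ref{ass:align} only requires constants $0\le\lambda_1\le\lambda_2$. Taking $\lambda_1=0$, $b_1\le\min_X F(X)$, $\lambda_2$ arbitrary and $b_2$ large makes the assumption vacuous, while the target inequality still compares two quantities computed from $G$. The real task is therefore to make $F$ and $G$ disagree on the ordering while keeping the structural assumptions on $F$ satisfied. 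The construction must not rely on a tight alignment, so that the argument shows that only nondegenerate constants carry information.

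The first step is to make Assumptions~\ref{ass:partition} and \ref{ass:windowfaithful} cheap to verify by choosing the trivial partition $q=1$, $H_1=U$. Then $B(X)=\{1\}$ for every $X\neq\varnothing$, so $\Phi$ takes the values $\Phi(\varnothing)=0$ and $\Phi(\{1\})=\phi_0>0$, and $\Phi$ is automatically monotone submodular. The residual $R(X)=F(X)-\phi_0\mathbf 1[X\neq\varnothing]$ only needs nonnegative, bounded increments. We take $\varepsilon_R$ to be the largest increment of $R$ and use the exact policy with $\kappa_1=1$, so the gap condition~\eqref{eq:kappa-gap} is not needed. We also take $\rho_{\mathrm{del}}=0$, which means nothing is deleted and Eq.~\eqref{eq:block-safe} holds trivially, since it only concerns $i\le q=1$. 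With these choices PhaseWin reduces to exact greedy on $F$. It then remains to find a submodular, non-monotone $G$ with $G(\varnothing)=0$ such that:
\begin{itemize}
\item $F(X)=G(X)+G(U)-G(U\setminus X)$ is monotone, with $F(\{e\})\ge\phi_0$ for every $e$;
\item greedy on $F$ produces an ordering whose prefixes all have small $G$-value.
\end{itemize}

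The construction works through the singleton scores $F(\{e\})=G(\{e\})+G(U)-G(U\setminus\{e\})$. We plan a jackpot element $a$ with $G(\{a\})=1$ and $G(U\setminus\{a\})$ large, so that removing $a$ barely hurts. We add decoys $d_1,\dots,d_m$ with $G(\{d_i\})\approx 0$ but whose removal destroys the response, i.e.\ $G(U\setminus\{d_i\})\ll G(U)$. Greedy on $F$ then takes the decoys first. Once $a$ is added after decoys, $G$ should stay small, and $G(U)$ should also be small. Two-element examples cannot work, because there $F(\{a\})-F(\{b\})=2(G(\{a\})-G(\{b\}))$, so greedy on $F$ always agrees with $G$. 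We therefore need $n\ge3$. A natural template is $G(X)=\mathbf 1[a\in X]\cdot g(|X\cap D|)-L\,h(X)$ with $g$ and $h$ chosen concave/convex in the right direction. One then checks submodularity via $\Gamma_G\le 0$, monotonicity of $F$ via the identity $\Gamma_F(i,j\mid S)=\Gamma_G(i,j\mid S)-\Gamma_G(i,j\mid U\setminus(S\cup\{i,j\}))$ and direct marginals, and sends a scale parameter to infinity so that the ratio $M_G(\pi^{\mathrm{PW}})/\max_X G(X)\to0$. For the AUC statement we let $m\to\infty$, so that the good ordering (with $a$ first) has AUC bounded below while the greedy-on-$F$ ordering has AUC tending to $0$.

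The main obstacle is the third step: reconciling the exact submodularity of $G$ with monotonicity of $F$ and the misleading greedy order on $F$. The Appendix theorem shows that $F$ itself cannot be submodular unless it is modular, and a modular $F$ essentially sorts by the singleton values of $G$. So the misalignment must come from interactions of $G$ near the full set $U$ differing from those near $\varnothing$. I would first try the smallest case $n=3$ with one decoy, solving the finitely many linear inequalities by hand with a free scale parameter, and only then generalize to $m$ decoys.
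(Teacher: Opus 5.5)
Your bookkeeping is fine: with $\lambda_1=0$, Assumption~\ref{ass:align} is vacuous, a trivial partition with no deletions makes Assumptions~\ref{ass:partition} and~\ref{ass:windowfaithful} easy, and $n=2$ indeed cannot work. But the proposition is nothing more than the existence of a counterexample, and that is exactly the step you leave open, so as written there is no proof.

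Worse, the decoy/jackpot mechanism you sketch cannot be realized under your own constraints. For submodular $G$ with $G(\varnothing)=0$, diminishing returns along $\varnothing\subseteq U\setminus\{e\}$ gives $G(U)-G(U\setminus\{e\})\le G(\{e\})$, hence $F(\{e\})\le 2G(\{e\})$. So a decoy with $G(\{d\})\approx 0$ can never have $G(U\setminus\{d\})\ll G(U)$, and the necessity term can never make it attractive. The jackpot fails too. You need $G(U)$ small, since $M_G(\pi)\ge G(U)$ for every full ordering. A jackpot with $G(U\setminus\{a\})$ large and $G(U)$ small then has $F(\{a\})=G(\{a\})+G(U)-G(U\setminus\{a\})<0=F(\varnothing)$, which contradicts the monotonicity of $F$ implied by Assumption~\ref{ass:partition}. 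Your guiding heuristic is also off. A modular $F$ sorts by $F(\{e\})=G(\{e\})+G(U)-G(U\setminus\{e\})$, not by $G(\{e\})$, and the misalignment need not come from interactions near $U$ differing from those near $\varnothing$.

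The missing idea is that $F$ is blind to symmetric components of $G$. If $G_s(X)=G_s(U\setminus X)$ and $G_s(\varnothing)=0$, then $G_s(X)+G_s(U)-G_s(U\setminus X)=0$ for all $X$. So an arbitrarily large submodular payoff can be hidden from the search objective.

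The paper takes $G$ to be the cut function of the path $P_n$ on $\{1,\dots,n\}$. It is submodular and symmetric with $G(\varnothing)=G(U)=0$, so $F\equiv 0$. All three assumptions then hold trivially (e.g.\ $\Phi\equiv 0$, $R\equiv 0$, $\lambda_1=b_1=b_2=0$, $\lambda_2=1$), and every ordering is an admissible output. The natural order has $G(P_t)=1$ for $t<n$, against $\max_X G=n-1$. Its AUC is $(n-1)/n$, against $\Omega(n)$ for the odd-then-even order. Here $\Gamma_G(i,j\mid S)=-2\cdot\mathbf 1[\{i,j\}\in E(P_n)]$ does not depend on $S$ at all, which is the opposite of what your heuristic predicted.

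If you prefer an instance that does not rely on tie-breaking, as your strict-greedy framing suggests, use $G=\mathrm{cut}+\epsilon w$ with $w_1>\cdots>w_n>0$ and $w(X):=\sum_{e\in X}w_e$. Then $G$ stays submodular, and $F=2\epsilon w$ is modular and monotone and strictly forces the natural order under exact greedy. You still get $M_G\le 1+\epsilon w(U)$ while $\max_X G\ge n-1$. With singleton blocks, $\Phi(J)=2\epsilon\sum_{j\in J}w_j$ and $R\equiv 0$, even the gap condition~\eqref{eq:kappa-gap} holds.
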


\begin{proof}
Let $U=\{1,\dots,n\}$ and let $G$ be the cut function of the path graph $P_n$:
\begin{equation}
G(X)
:=
\#\bigl\{(u,v)\in E(P_n): |\{u,v\}\cap X|=1\bigr\}.
\end{equation}
This $G$ is nonnegative, normalized, and submodular. Since graph cut functions are symmetric,
\begin{equation}
G(U\setminus X)=G(X),
\;
G(U)=0.
\end{equation}
Therefore
\begin{equation}
F(X)
=
G(X)+G(U)-G(U\setminus X)
=
0,
\;
\forall X\subseteq U.
\end{equation}
Thus the search objective carries no ordering information; any tie-breaking order may be returned.

Consider the natural order $\pi_{\mathrm{nat}}=(1,2,\dots,n)$. Its prefixes are contiguous intervals, so
\begin{equation}
G(P_t^{\pi_{\mathrm{nat}}})=1,
\;
t=1,\dots,n-1,
\end{equation}
and $G(P_n^{\pi_{\mathrm{nat}}})=0$. By contrast,
\begin{equation}
\max_{X\subseteq U}G(X)=n-1,
\end{equation}
as witnessed by an alternating subset of vertices. Hence
\begin{equation}
\frac{
M_G(\pi_{\mathrm{nat}})
}{
\max_XG(X)
}
=
\frac{1}{n-1}
\to 0.
\end{equation}

For the AUC,
\begin{equation}
\operatorname{AUC}_G(\pi_{\mathrm{nat}})
=
\frac{1}{n}\sum_{t=1}^nG(P_t^{\pi_{\mathrm{nat}}})
=
\frac{n-1}{n}.
\end{equation}
On the other hand, the alternating order
\begin{equation}
\pi_{\mathrm{alt}}=(1,3,5,\dots,2,4,6,\dots)
\end{equation}
keeps many path edges crossing the prefix boundary for a linear number of prefixes, yielding
\begin{equation}
\operatorname{AUC}_G(\pi_{\mathrm{alt}})=\Omega(n).
\end{equation}
Therefore
\begin{equation}
\frac{
\operatorname{AUC}_G(\pi_{\mathrm{nat}})
}{
\operatorname{AUC}_G^\star
}
\to 0.
\end{equation}
This proves that without an explicit alignment assumption between $F$ and $G$, no constant-factor quality guarantee for $G$ can hold in general.
\end{proof}

\section{Proofs of the Main Theorems}
\label{app:proofs}

We prove the main theorems in \ref{sec:theory} in this section.

\begin{lemma}[Residual accumulation]
\label{lem:residual}
Under Assumption~\ref{ass:partition}, for any $X\subseteq U$,
\begin{equation}
0 \le R(X) \le |X|\varepsilon_R.
\end{equation}
\end{lemma}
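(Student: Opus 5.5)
The plan is to prove Lemma~\ref{lem:residual} by telescoping the residual along an arbitrary enumeration of $X$, and bounding each increment using the marginal-gain bounds in Assumption~\ref{ass:partition}.

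Fix $X\subseteq U$ and write $m:=|X|$. If $m=0$, then $R(\varnothing)=0$ by assumption and both inequalities hold trivially. Otherwise, enumerate $X=\{x_1,\dots,x_m\}$ in any order and set $X_0:=\varnothing$ and $X_\ell:=\{x_1,\dots,x_\ell\}$ for $\ell=1,\dots,m$. Since $R(X_0)=0$, we get the telescoping identity
\begin{equation}
R(X)=R(X_m)-R(X_0)=\sum_{\ell=1}^{m}\bigl(R(X_\ell)-R(X_{\ell-1})\bigr)=\sum_{\ell=1}^{m}\Delta_R(x_\ell\mid X_{\ell-1}).
\end{equation}

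Each summand is a marginal gain $\Delta_R(e\mid Y)$ with $Y=X_{\ell-1}\subseteq U$ and $e=x_\ell\in U\setminus Y$, because the $x_\ell$ are distinct. Assumption~\ref{ass:partition} applies to every such pair and gives $0\le\Delta_R(x_\ell\mid X_{\ell-1})\le\varepsilon_R$. Summing the $m$ lower bounds yields $R(X)\ge 0$. This is also immediate from the codomain $R:2^U\to\mathbb R_+$. Summing the $m$ upper bounds yields $R(X)\le m\,\varepsilon_R=|X|\varepsilon_R$.

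No step is a real obstacle. The only point that needs checking is that the uniform residual bound in Assumption~\ref{ass:partition} is stated for all $X\subseteq U$ and all $e\notin X$, so it covers every intermediate prefix $X_{\ell-1}$ without further conditions, and that $R(\varnothing)=0$ anchors the telescoping sum.
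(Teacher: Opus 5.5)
Your proposal is correct and follows the paper's proof: both telescope $R(X)$ along an enumeration of $X$, starting from $R(\varnothing)=0$, and bound each marginal gain within $[0,\varepsilon_R]$ using Assumption~\ref{ass:partition}. Summing the lower bounds gives $R(X)\ge 0$, and summing the upper bounds gives $R(X)\le |X|\varepsilon_R$.
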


\begin{proof}
The lower bound follows from $R(\varnothing)=0$ and
$\Delta_R(e\mid X)\ge 0$ for every feasible pair $(X,e)$.
For the upper bound, write $X=\{e_1,\dots,e_m\}$ and telescope:
\begin{equation}
R(X)
=
\sum_{i=1}^m \Delta_R(e_i\mid \{e_1,\dots,e_{i-1}\})
\le m\varepsilon_R.
\end{equation}
\end{proof}
\begin{lemma}[Two-sided transfer]
\label{lem:twosided-transfer}
Assume Assumption~\ref{ass:align} and $\lambda_2>0$.
If $S,T\subseteq U$ and $c,\eta\ge 0$ satisfy
\begin{equation}
F(S)\ge c\,F(T)-\eta,
\end{equation}
then
\begin{equation}
\label{eq:generic-transfer-clean}
G(S)
\ge
\frac{c\lambda_1}{\lambda_2}\,G(T)
+
\frac{cb_1-\eta-b_2}{\lambda_2}.
\end{equation}
\end{lemma}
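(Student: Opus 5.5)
The argument is a direct chaining of the two sides of Assumption~\ref{ass:align}, with the hypothesis on $F$ in the middle. Apply the lower alignment bound to $S$, the hypothesis, and then the upper alignment bound to $T$:
\begin{equation*}
\lambda_2 G(S)+b_2\ \ge\ F(S)\ \ge\ c\,F(T)-\eta\ \ge\ c\bigl(\lambda_1 G(T)+b_1\bigr)-\eta .
\end{equation*}
Here the first inequality is the upper alignment bound $F(S)\le\lambda_2G(S)+b_2$, the middle one is the assumed inequality, and the last is $F(T)\ge\lambda_1G(T)+b_1$. The last step requires multiplying a lower bound on $F(T)$ by $c$, and this preserves the inequality direction precisely because $c\ge 0$.

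Rearranging gives
\begin{equation*}
\lambda_2 G(S)\ \ge\ c\lambda_1 G(T)+c b_1-\eta-b_2 .
\end{equation*}
Dividing by $\lambda_2>0$ yields Eq.~\eqref{eq:generic-transfer-clean} directly. The assumption $\lambda_2>0$ is what makes this final division legitimate and sign-preserving.

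There is no real obstacle here. The only care needed is the sign bookkeeping: the nonnegativity of $c$ is used exactly once, when scaling the lower alignment bound for $T$, and the positivity of $\lambda_2$ is used once, in the final division. The sign of $\eta$ plays no role, and neither does $\lambda_1$ beyond appearing in the constant.
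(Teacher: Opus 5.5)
Your proof is correct and matches the paper's argument exactly: chain the upper alignment bound at $S$, the hypothesis, and the lower alignment bound at $T$ (scaled by $c\ge 0$), then rearrange and divide by $\lambda_2>0$. (Your opening sentence swaps the labels ``lower'' and ``upper'', but your displayed chain and its step-by-step justification use the correct sides.)
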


\begin{proof}
The upper side of Assumption~\ref{ass:align} gives
$F(S)\le \lambda_2G(S)+b_2$, while the lower side gives
$F(T)\ge \lambda_1G(T)+b_1$.  Therefore
\begin{equation}
\lambda_2 G(S)+b_2
\ge
F(S)
\ge
c\,F(T)-\eta
\ge
c\bigl(\lambda_1G(T)+b_1\bigr)-\eta.
\end{equation}
Moving $b_2$ to the right-hand side and dividing by $\lambda_2$
proves the claim.
\end{proof}
\begin{proof}[Proof of Theorem~\ref{thm:finiteF}]
Let the first $k$ accepted elements be $(v_1,\ldots,v_k)$, and write
$S_i^{\mathrm{PW}}=\{v_1,\ldots,v_i\}$ with
$S_0^{\mathrm{PW}}=\varnothing$.  Let
$S^\star=S_{F,k}^\star$ be an optimal set for the search objective under the
cardinality constraint $|X|\le k$.  We use the shorthand
\begin{equation}
F_i := F(S_i^{\mathrm{PW}}),
\quad
d_i := \Delta_F(v_i\mid S_{i-1}^{\mathrm{PW}})=F_i-F_{i-1}.
\end{equation}
Since $F(\varnothing)=\Phi(\varnothing)+R(\varnothing)=0$, we have
$F_i=\sum_{m=1}^i d_m$.
Set
\begin{equation}
c_F := F(S^\star)-k\varepsilon_R.
\end{equation}
The proof first lower-bounds $F(S_k^{\mathrm{PW}})$, and then transfers the
result to $G$ by Lemma~\ref{lem:twosided-transfer}.

By Lemma~\ref{lem:residual},
\begin{equation}
F(S^\star)
=
\Phi(B(S^\star))+R(S^\star)
\le
\Phi(B(S^\star))+k\varepsilon_R,
\end{equation}
and hence
\begin{equation}
\label{eq:cF-block-bound}
c_F \le \Phi(B(S^\star)).
\end{equation}
Thus, up to the additive residual loss $k\varepsilon_R$, it is enough to compare
PhaseWin against the block-level value of the optimal set.

Fix a step $\ell\in\{1,\dots,k\}$.  For an element $e$, let
$b(e)$ denote the unique block index satisfying $e\in H_{b(e)}$, and recall
$\Delta_\Phi(b(e)\mid J):=\Phi(J\cup\{b(e)\})-\Phi(J)$. Note that this quantity is zero
when $b(e)\in J$. By monotonicity and submodularity of $\Phi$,
\begin{equation}
\begin{aligned}
\Phi(B(S^\star))
&\le
\Phi\bigl(B(S^\star)\cup B(S_{\ell-1}^{\mathrm{PW}})\bigr) \\
&\le
\Phi\bigl(B(S_{\ell-1}^{\mathrm{PW}})\bigr)
+
\sum_{e\in S^\star\setminus S_{\ell-1}^{\mathrm{PW}}}
\Delta_\Phi\!\bigl(B(e)\mid B(S_{\ell-1}^{\mathrm{PW}})\bigr).
\end{aligned}
\end{equation}
The sum may contain several elements from the same block; this only makes the
upper bound looser, because all block marginal gains are nonnegative.
Since $R\ge 0$, we also have $\Phi(J_{\ell-1})\le F_{\ell-1}$.  Combining this with
\eqref{eq:cF-block-bound} gives
\begin{equation}
\label{eq:finite-opt-bound-clean}
c_F
\le
F_{\ell-1}
+
\sum_{e\in S^\star\setminus S_{\ell-1}^{\mathrm{PW}}}
\Delta_\Phi\!\bigl(B(e)\mid B(S_{\ell-1}^{\mathrm{PW}})\bigr).
\end{equation}

Now we fix $e\in S^\star\setminus S_{\ell-1}^{\mathrm{PW}}$.
By the PhaseWin algorithm, such an element is either still live in $\mathcal R_\ell$
or was deleted at some earlier decision point $i<\ell$.

If $e\in \mathcal R_\ell$, then
\begin{equation}
\begin{aligned}
\Delta_\Phi\!\bigl(B(e)\mid B(S_{\ell-1}^{\mathrm{PW}})\bigr)
&\le
\Delta_F(e\mid S_{\ell-1}^{\mathrm{PW}}) \\
&\le
a_\ell
\le
\frac{d_\ell}{\kappa_\ell}
\le
\frac{d_\ell}{\kappa_1},
\end{aligned}
\end{equation}
where the first inequality uses $\Delta_R(e\mid S_{\ell-1}^{\mathrm{PW}})\ge 0$,
the second is the definition of $a_\ell$, the third is from the fact that
PhaseWin selection guarantee $d_\ell\ge \kappa_\ell a_\ell$, and the last uses
the monotonicity $\kappa_\ell\ge \kappa_1$.

If instead $e\in \mathcal D_i$ for some $i<\ell$, then
$B(S_{i-1}^{\mathrm{PW}})\subseteq B(S_{\ell-1}^{\mathrm{PW}})$, so by
submodularity of $\Phi$ and the deletion criterion,
\begin{equation}
\begin{aligned}
\Delta_\Phi\!\bigl(B(e)\mid B(S_{\ell-1}^{\mathrm{PW}})\bigr)
&\le
\Delta_\Phi\!\bigl(B(e)\mid B(S_{i-1}^{\mathrm{PW}})\bigr) \\
&\le
\Delta_F(e\mid S_{i-1}^{\mathrm{PW}}) \\
&\le
\rho_{\mathrm{del}}\,
\Delta_F(v_i\mid S_{i-1}^{\mathrm{PW}}) \\
&=
\rho_{\mathrm{del}}\,d_i.
\end{aligned}
\end{equation}

Substituting these bounds into \eqref{eq:finite-opt-bound-clean} gives a single-step progress inequality.
There are at most $k$ elements of $S^\star$ in the live part, and for each
earlier decision $i<\ell$ there are also at most $k$ deleted elements from
$S^\star$ to consider.  Therefore,
\begin{equation}
\begin{aligned}
c_F
&\le
F_{\ell-1}
+
\frac{k}{\kappa_1}d_\ell
+
k\rho_{\mathrm{del}}\sum_{i=1}^{\ell-1} d_i \\
&=
F_{\ell-1}
+
\frac{k}{\kappa_1}(F_\ell-F_{\ell-1})
+
k\rho_{\mathrm{del}}F_{\ell-1} \\
&=
\frac{k}{\kappa_1}F_\ell
-
\left(
\frac{k}{\kappa_1}-1-k\rho_{\mathrm{del}}
\right)F_{\ell-1}.
\end{aligned}
\end{equation}
Rearranging gives
\begin{equation}
\label{eq:finite-recurrence-clean}
F_\ell
\ge
\frac{\kappa_1}{k}c_F
+
\left(
1-\frac{\kappa_1}{k}-\kappa_1\rho_{\mathrm{del}}
\right)F_{\ell-1}.
\end{equation}
This is the usual approximate recurrence for greedy algorithm, with an additional
$\kappa_1\rho_{\mathrm{del}}$ loss accounting for deleted
candidates.

Define
\begin{equation}
\mu_k := 1-\frac{\kappa_1(1+k\rho_{\mathrm{del}})}{k}.
\end{equation}
Under the theorem hypothesis, $\mu_k\in[0,1]$.
Iterating \eqref{eq:finite-recurrence-clean} from $\ell=1$ to $\ell=k$ and using
$F_0=0$, we get
\begin{equation}
\begin{aligned}
F_k
&\ge
\frac{\kappa_1}{k}c_F
\sum_{j=0}^{k-1}\mu_k^j \\
&=
\frac{\kappa_1}{k}
\cdot
\frac{1-\mu_k^k}{1-\mu_k}
\cdot c_F \\
&=
\frac{
1-\left(
1-\frac{\kappa_1(1+k\rho_{\mathrm{del}})}{k}
\right)^k
}{
1+k\rho_{\mathrm{del}}
}
\left(F(S_{F,k}^\star)-k\varepsilon_R\right).
\end{aligned}
\end{equation}
This is exactly \eqref{eq:Ck}.

It remains to transfer the guarantee from $F$ to $G$.
Let
\begin{equation}
c_k := C_k(\kappa_1,\rho_{\mathrm{del}}).
\end{equation}
By Theorem~\ref{thm:finiteF},
\begin{equation}
\begin{aligned}
F(S_k^{\mathrm{PW}})
&\ge
c_k\bigl(F(S_{F,k}^\star)-k\varepsilon_R\bigr) \\
&\ge
c_k F(S_{G,k}^\star)-c_k k\varepsilon_R.
\end{aligned}
\end{equation}
Applying Lemma~\ref{lem:twosided-transfer} with
$S=S_k^{\mathrm{PW}}$, $T=S_{G,k}^\star$, $c=c_k$, and
$\eta=c_k k\varepsilon_R$, we obtain
\begin{equation}
\label{eq:finiteG-clean}
G(S_k^{\mathrm{PW}})
\ge
\frac{c_k\lambda_1}{\lambda_2}\,\mathrm{OPT}_{G,k}
+
\frac{c_k b_1-c_k k\varepsilon_R-b_2}{\lambda_2}.
\end{equation}

If $\kappa_1=1-o(1)$ and $\rho_{\mathrm{del}}=o(1/k)$, then
\begin{equation}
\mu_k
=
1-\frac{1}{k}+o\!\left(\frac{1}{k}\right),
\qquad
1+k\rho_{\mathrm{del}}=1+o(1),
\end{equation}
and therefore
\begin{equation}
C_k(\kappa_1,\rho_{\mathrm{del}})
=
1-e^{-1}-o(1).
\end{equation}

\end{proof}

\begin{proof}[Proof of Theorem~\ref{thm:linear}]
At the beginning of a phase, PhaseWin scans all candidates that are still in the
live pool.  The candidate with the largest marginal gain is selected as the
anchor of this phase.  We show that, as long as there is still an inactive block
visible in the live pool, this anchor must come from an inactive block.

Let $S$ be the current selected set.  Consider first an element $e\in H_j$ whose
block has not been activated. That is, $j\notin B(S)$.  Adding $e$ activates
block $j$, so its marginal gain can be written as
\begin{equation}
\Delta_F(e\mid S)
=
\Delta_\Phi(j\mid B(S))+\Delta_R(e\mid S)
\ge
\underline{\Delta}_\Phi,
\end{equation}
where the inequality uses the definition of $\underline{\Delta}_\Phi$ and the
nonnegativity of the residual marginal.  Thus every live representative of an
inactive block has gain at least $\underline{\Delta}_\Phi$.

Now consider an element $e$ from a block that is already active.  Adding such an
element does not activate any new block, so the block-level term does not change.
Only the residual term can increase:
\begin{equation}
\Delta_F(e\mid S)=\Delta_R(e\mid S)\le \varepsilon_R.
\end{equation}
By Assumption~\ref{ass:partition}, we have
$\varepsilon_R<\kappa_1\underline{\Delta}_\Phi$, and since
$\kappa_1\le 1$, this implies $\varepsilon_R<\underline{\Delta}_\Phi$.  Hence
any live element from an inactive block has strictly larger gain than any live
element from an already active block.  Therefore, if the live pool still contains
a representative of some inactive block, the maximum-gain anchor selected by the
global scan must activate a new block.

This gives a direct bound on the number of phases.  Each effective
non-terminal phase can be associated with the new block activated by its anchor.
The same block cannot be activated twice, and there are only $q$ blocks in total.
Therefore, there are at most $q$ effective non-terminal phases that activate new
blocks.  After all blocks that remain visible have already been activated, by the exit criterion there
may be one last phase in which only residual gains are left.  This possible last
phase accounts for the additional $+1$.  Hence the number of effective phases is
at most $q+1$.

It remains to count the evaluations within each phase.  The global scan evaluates
the marginal gain of every live candidate once, so it uses at most $n$
evaluations.  The windowed refinement only processes candidates that enter the
phase window.  By the definition of the window policy, the number of exact
reevaluations needed for one window scan is bounded by $f_\psi(\omega)$.  Since
at most $n$ candidates can be charged to the windows of one phase, the local
refinement costs $O(n f_\psi(\omega))$ evaluations in one phase.  Thus one phase
costs
\[
O\bigl(n(f_\psi(\omega)+1)\bigr)
\]
evaluations.  Multiplying this by the phase bound $q+1$ gives
\begin{equation}
O\!\bigl((q+1)n(f_\psi(\omega)+1)\bigr).
\end{equation}
When the number of blocks $q$ is independent of $n$, this becomes
$O(n(f_\psi(\omega)+1))$.  If the window size $\omega$ is also fixed, then
$f_\psi(\omega)$ is a constant, and the total number of evaluations is linear in
$n$.
\end{proof}

\begin{proof}[Proof of Corollary~\ref{thm:max}]
Let $X^\star$ be an optimal set for $G$, and put
\begin{equation}
k^\star:=|X^\star|.
\end{equation}
Since PhaseWin is run until it produces a full ordering, the prefix
$P_{k^\star}^{\pi^{\mathrm{PW}}}=S_{k^\star}^{\mathrm{PW}}$ is available.  The
first $k^\star$ selected elements are exactly the output that PhaseWin would
return if we stopped the same run after $k^\star$ selections.  Therefore we may
apply Theorem~\ref{thm:finiteF} with target cardinality $k^\star$.

The set $X^\star$ is feasible for the problem
$\max_{|X|\le k^\star}G(X)$.  Hence
\begin{equation}
\mathrm{OPT}_{G,k^\star}
=
G(X^\star)
=
\mathrm{OPT}_G .
\end{equation}
Using the $G$-version of Theorem~\ref{thm:finiteF}, we obtain
\begin{equation}
\begin{aligned}
G(P_{k^\star}^{\pi^{\mathrm{PW}}})
\ge\;&
\frac{C_{k^\star}(\kappa_1,\rho_{\mathrm{del}})\lambda_1}{\lambda_2}
\,\mathrm{OPT}_G \\
&+
\frac{
C_{k^\star}(\kappa_1,\rho_{\mathrm{del}})b_1
-
C_{k^\star}(\kappa_1,\rho_{\mathrm{del}})k^\star\varepsilon_R
-
b_2
}{\lambda_2}.
\end{aligned}
\end{equation}
Finally, the prefix maximum is at least the value of this particular prefix:
\begin{equation}
M_G(\pi^{\mathrm{PW}})
\ge
G(P_{k^\star}^{\pi^{\mathrm{PW}}}).
\end{equation}
Combining the last two displays proves the desired prefix-maximum bound.  If the
statement uses the generic symbol $k$, it corresponds here to the prefix length
$k^\star=|X^\star|$.
\end{proof}
\begin{proof}[Proof of Corollary~\ref{thm:auc}]
In the equal-area setting,
\begin{equation}
\operatorname{AUC}_G(\pi)
=
\frac1n\sum_{t=1}^nG(P_t^\pi).
\end{equation}
Let $\pi^\star$ be an ordering that maximizes this quantity.  We compare the two
orderings prefix by prefix.

Fix a prefix length $t$ and write
\[
c_t:=C_t(\kappa_1,\rho_{\mathrm{del}}).
\]
The set $P_t^{\pi^\star}$ has size $t$, so it is a feasible competitor for the
cardinality-$t$ problem.  Applying Theorem~\ref{thm:finiteF} to the first $t$
elements of the PhaseWin ordering gives
\begin{equation}
\label{eq:auc-prefix-transfer}
\begin{aligned}
G(P_t^{\pi^{\mathrm{PW}}})
\ge\;&
\frac{c_t\lambda_1}{\lambda_2}
G(P_t^{\pi^\star}) \\
&+
\frac{
c_t b_1
-
c_t t\varepsilon_R
-
b_2
}{\lambda_2}.
\end{aligned}
\end{equation}
This inequality says that the $t$-th PhaseWin prefix is comparable with the
$t$-th prefix of the best AUC ordering.  Summing
\eqref{eq:auc-prefix-transfer} over all $t=1,\ldots,n$ and dividing by $n$ gives
\begin{equation}
\label{eq:auc-exact-average}
\begin{aligned}
\operatorname{AUC}_G(\pi^{\mathrm{PW}})
\ge\;&
\frac{\lambda_1}{\lambda_2}
\cdot
\frac1n
\sum_{t=1}^n
c_t\,G(P_t^{\pi^\star}) \\
&+
\frac{1}{n\lambda_2}
\sum_{t=1}^n
\Bigl(
c_t b_1
-
c_t t\varepsilon_R
-
b_2
\Bigr).
\end{aligned}
\end{equation}

For a shorter one-line bound, define
\begin{equation}
C_{\min}:=\min_{1\le t\le n} c_t
\end{equation}
and
\begin{equation}
\Gamma_{\min}:=
\min_{1\le t\le n}
\bigl(c_t b_1-c_t t\varepsilon_R\bigr).
\end{equation}
Since $G$ is nonnegative in the attribution setting,
\eqref{eq:auc-exact-average} implies
\begin{equation}
\operatorname{AUC}_G(\pi^{\mathrm{PW}})
\ge
\frac{C_{\min}\lambda_1}{\lambda_2}
\operatorname{AUC}_G^\star
+
\frac{\Gamma_{\min}-b_2}{\lambda_2}.
\end{equation}
And we complete the proof.
\end{proof}
\section{Additional Support Experiments}
\subsection{Linear Complexity Verification with Fixed Window Size}

We first demonstrated in the main text that window size is the most significant factor affecting the number of forward passes required by PhaseWin. Experimental results across different settings also demonstrate PhaseWin's general speedup performance. Next, we showcase the linear complexity of PhaseWin under a specific set of experimental settings with fixed window size and exit threshold.

We consider the attribution task of Grounding DINO on MS COCO for correctly detected samples, using SLICO superpixel segmentation, considering the number of regions 50, 64, and 100, fixing the Phasewin window size to 16, and setting the exit threshold to 0.025, and calculating the average number of forward passes.

\begin{figure}[ht]
    \centering
    \includegraphics[width=0.5\linewidth]{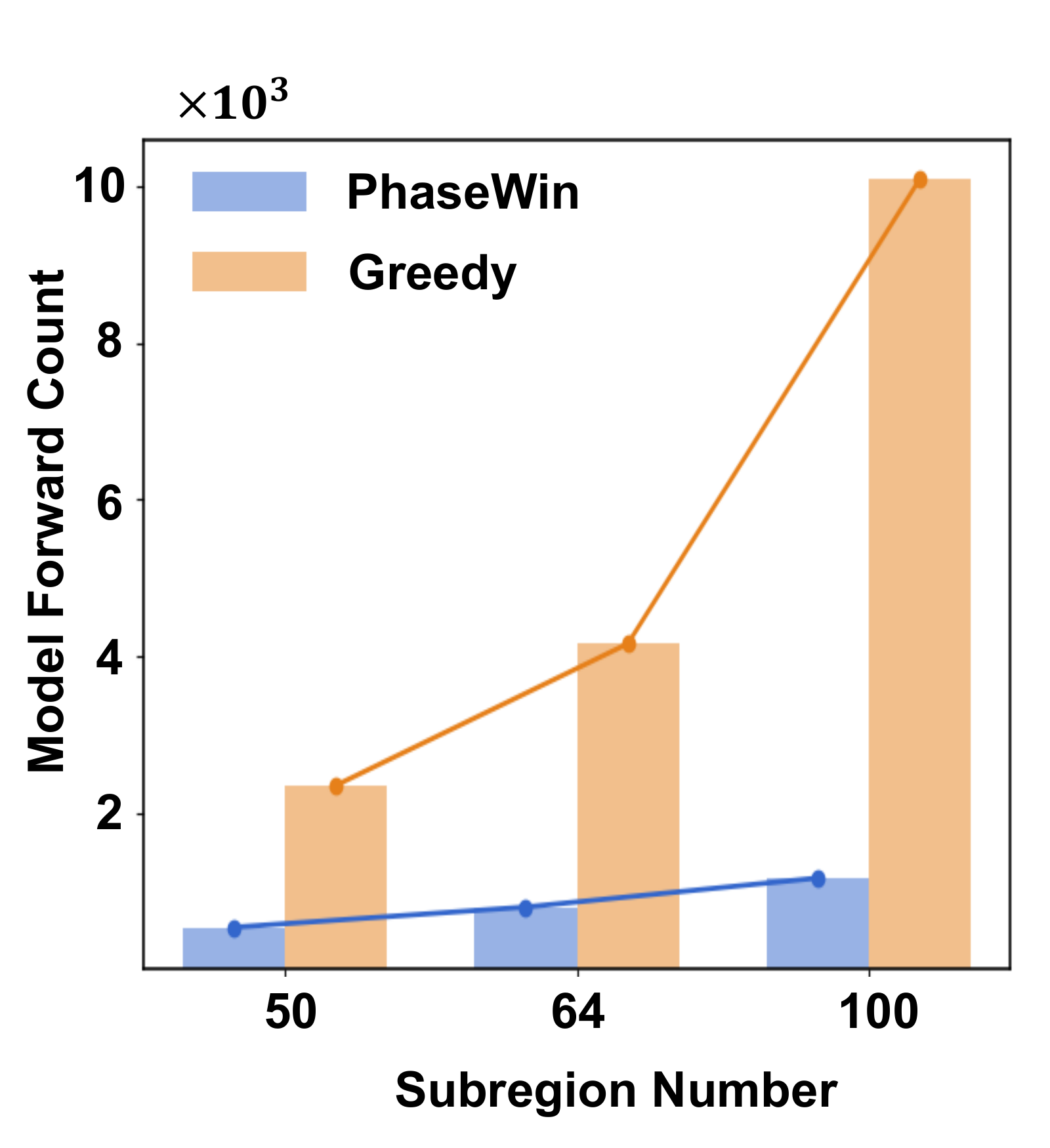}
    \caption{
Linear-complexity verification under a fixed window configuration. 
We evaluate Grounding DINO on correctly detected MS COCO samples with SLICO superpixels and vary the number of segmented regions from 50 to 64 and 100. 
PhaseWin uses the same window size and exit threshold across all settings. 
As the number of regions increases, the model-forward count of Greedy grows quadratically, whereas PhaseWin increases approximately linearly, confirming the expected near-linear behavior under fixed hyperparameters.
}
    \label{fig:intro1}
\end{figure}

The results are shown in Figure~\ref{fig:intro1}. The value of phasewin is 536.8-706.5-1192.7, and the value of greedy is 2548.8-4132.7-10100, which shows that under fixed hyperparameters, phasewin changes linearly with the number of regions.

\subsection{Greedy's Advantage is Amplified by a Right-Tail of Samples}
\label{app:greedy_tail}
\begin{table}[t]
\centering
\small
\setlength{\tabcolsep}{4.5pt}
\renewcommand{\arraystretch}{1.12}
\caption{Right-tail analysis of Greedy's insertion-AUC advantage over PhaseWin on classification.}
\label{tab:greedy_tail_model}
\begin{tabular}{lrrrr}
\toprule
Model
& \(n\)
& Mean gap
& \(d_i>10\)
& Tail contrib. \\
\midrule
CLIP RN101
& 9000
& 5.51
& 20.1\%
& 62.9\% \\
CLIP ViT-L/14
& 9000
& 3.50
& 11.4\%
& 52.8\% \\
ResNet-101
& 9000
& 3.30
& 7.7\%
& 39.4\% \\
\midrule
All
& 27000
& 4.10
& 13.1\%
& 53.7\% \\
\bottomrule
\end{tabular}
\end{table}

We further examine whether the insertion-AUC gap between Greedy and PhaseWin reflects a uniform sample-wise advantage or is mainly amplified by a small set of large-gap samples. This analysis is conducted on the classification task under three local evaluation settings, namely \texttt{true}, \texttt{cause}, and \texttt{repair}.

For each paired sample, we define
\begin{equation}
    d_i
    =
    100\cdot
    \left(
    \mathrm{InsAUC}^{\mathrm{Greedy}}_i
    -
    \mathrm{InsAUC}^{\mathrm{PhaseWin}}_i
    \right).
\end{equation}
Thus, \(d_i>0\) means that Greedy obtains a higher insertion AUC than PhaseWin on sample \(i\). We define the extreme Greedy-advantage set as
\begin{equation}
    \mathcal{E}_{10}=\{i:d_i>10\},
\end{equation}
where the threshold corresponds to a ten-point insertion-AUC advantage. We also compute the tail contribution ratio
\begin{equation}
    R_{10}
    =
    \frac{
        \sum_{i\in \mathcal{E}_{10}} d_i
    }{
        \sum_{i:d_i>0} d_i
    },
\end{equation}
which measures how much of Greedy's positive advantage is contributed by these extreme samples.

Table~\ref{tab:greedy_tail_model} shows a clear right-tail effect. Across all \(27{,}000\) paired samples, the average gap between Greedy and PhaseWin is \(4.10\) insertion-AUC points. However, only \(13.1\%\) of samples fall into the extreme region \(d_i>10\), while these samples account for \(53.7\%\) of Greedy's total positive advantage. This indicates that the aggregate gap is not a uniform sample-wise separation, but is substantially enlarged by a minority of large-gap cases.

This tail effect is especially pronounced for CLIP RN101, where \(20.1\%\) of samples contribute \(62.9\%\) of Greedy's positive advantage. CLIP ViT-L/14 shows a similar pattern, with \(11.4\%\) extreme samples explaining \(52.8\%\) of the positive advantage. ResNet-101 has a weaker but still visible tail concentration, suggesting that the degree of right-tail amplification depends on the backbone.

After excluding the extreme Greedy-advantage samples, the remaining gap between PhaseWin and Greedy becomes small, staying within \(2\%\) insertion AUC under the non-extreme classification protocol. Therefore, the main empirical difference between the two methods is not that Greedy consistently dominates PhaseWin on ordinary samples. Rather, Greedy occasionally obtains much larger gains on a small subset of samples, and these tail cases disproportionately increase the reported mean gap.

This observation is important for interpreting the efficiency--faithfulness trade-off. PhaseWin is designed to approximate Greedy's search behavior with substantially fewer model evaluations. The tail analysis shows that PhaseWin remains close to Greedy on the non-extreme majority of classification samples, while the remaining average gap is mainly driven by a small right tail. Hence, the aggregate Greedy advantage should be understood as a tail-amplified effect rather than a broad degradation of PhaseWin's attribution fidelity.

\textbf{Case Study and Fix}
Finally, we present three examples to illustrate the causes and solutions for such extreme cases.

\begin{figure}[ht]
    \centering
    \includegraphics[width=0.9\linewidth]{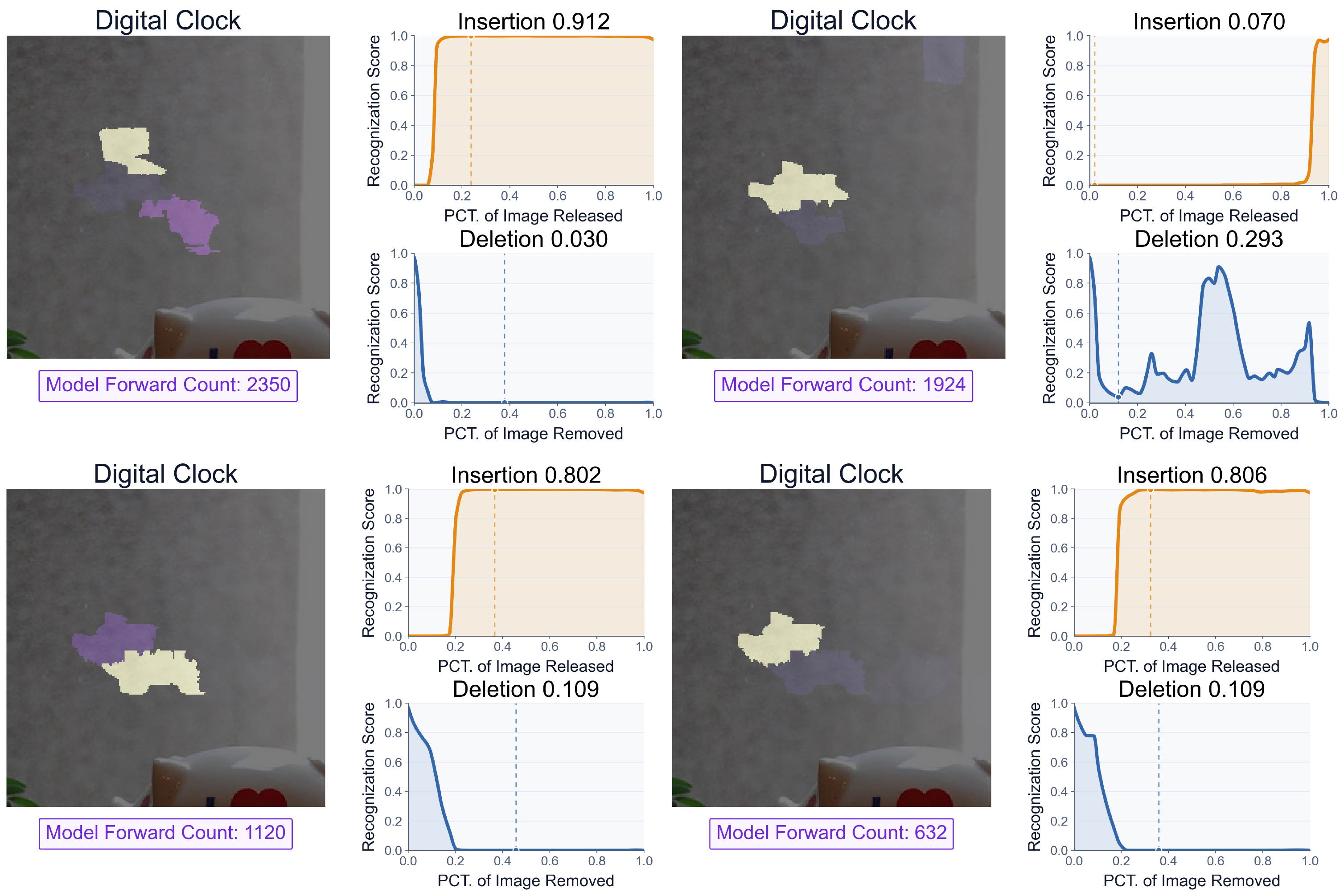}
    \caption{
Case study of an extreme Greedy-advantage sample on CLIP ViT-L/14 under the failure-attribution setting toward the model's wrong prediction. 
The original fine superpixel partition fragments the model-preferred noisy evidence, allowing exhaustive Greedy search to recover a high insertion trajectory while PhaseWin may postpone the decisive fragment. 
When the segmentation is coarsened, the unstable fragments are merged into more reliable attribution units, and PhaseWin recovers Greedy-level insertion and deletion behavior with fewer model evaluations.
}
    \label{fig:fail1}
\end{figure}

\begin{figure}[ht]
    \centering
    \includegraphics[width=0.9\linewidth]{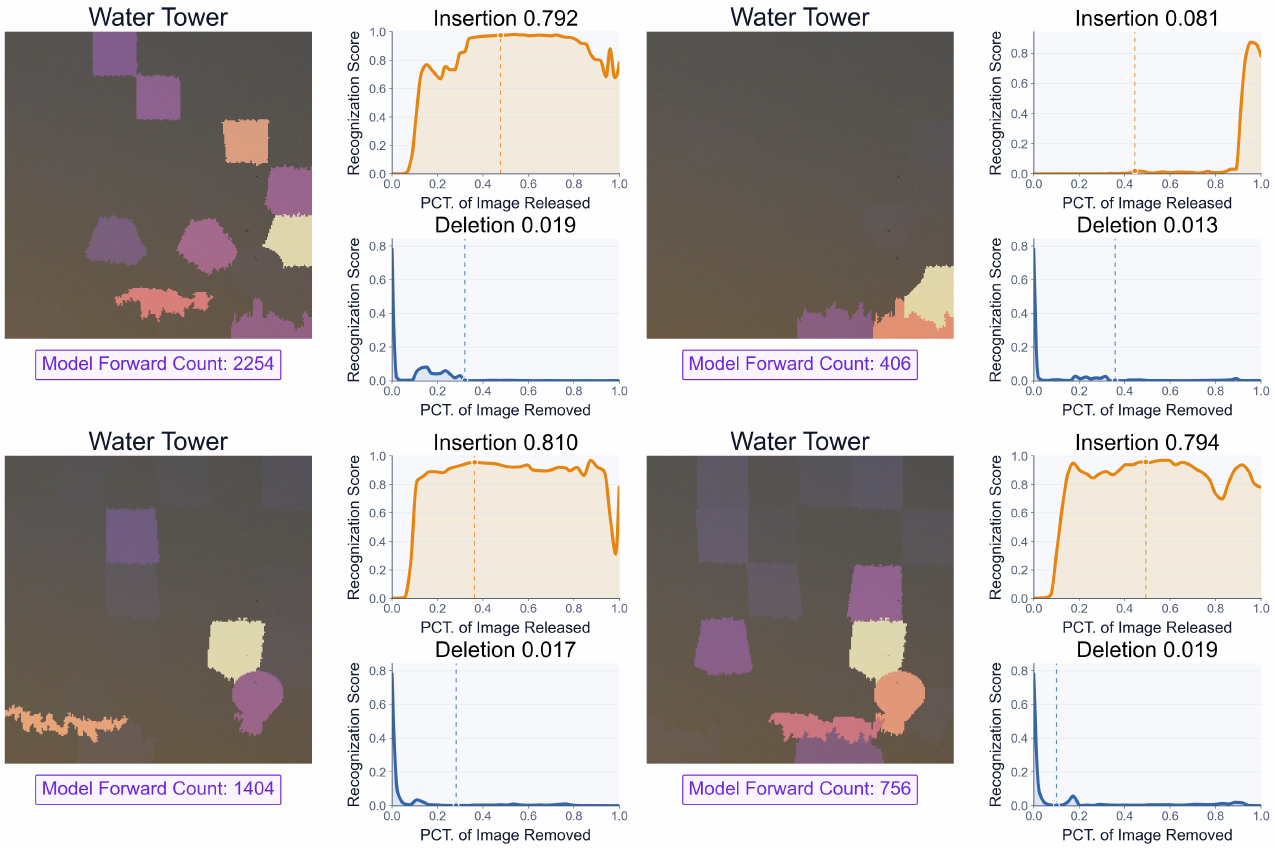}
    \caption{
Case study of an extreme Greedy-advantage sample on CLIP RN101 under the correctly classified setting. 
With the original fine partition, the model response is highly sensitive to small background or nuisance regions, producing unstable local marginal rankings. 
Greedy benefits from repeated global rescoring, whereas PhaseWin is affected by the partition-induced instability. 
After reducing the number of segmentation regions, the attribution units better match the model's response scale, and the gap between Greedy and PhaseWin is substantially reduced.
}
    \label{fig:fail2}
\end{figure}

\begin{figure}[ht]
    \centering
    \includegraphics[width=0.9\linewidth]{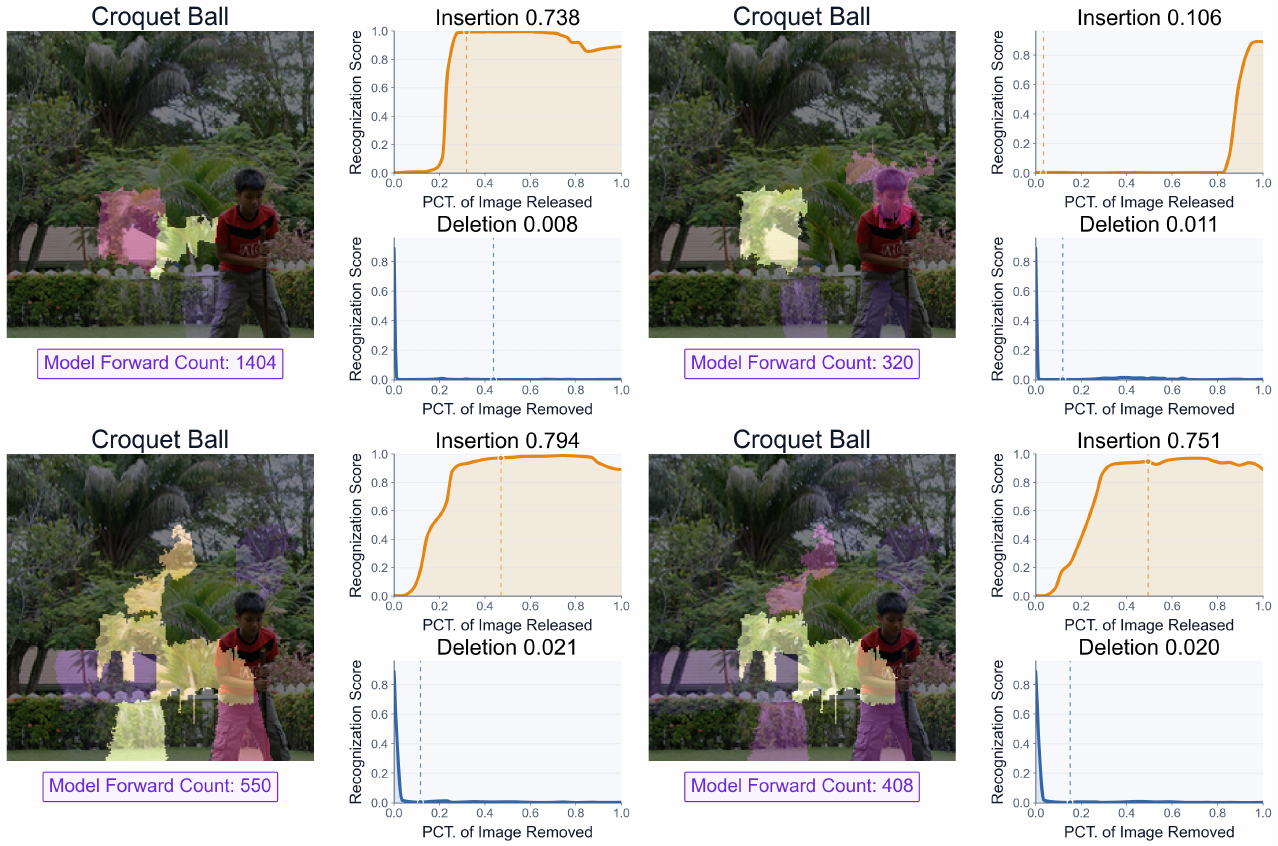}
    \caption{
Case study of an extreme Greedy-advantage sample on CLIP RN101 under the correctly classified setting. 
The large insertion-AUC gap is caused by an unfavorable over-partitioning of the visual evidence rather than by a systematic limitation of PhaseWin. 
A coarser segmentation merges correlated fragments into more stable regions, enabling PhaseWin to recover a response curve close to Greedy while using substantially fewer forward passes.
}
    \label{fig:fail3}
\end{figure}

Figures~\ref{fig:fail1}--\ref{fig:fail3} show the three samples with the largest insertion-AUC gap \(d_i\). These cases clarify that the apparent lead of Greedy does not mainly come from a stronger search capability. Rather, it is caused by a mismatch between the predefined superpixel partition and the model's intrinsic preference on noisy or background-dominated samples. Under the original fine partition, the evidence favored by the model is fragmented into unstable nuisance regions. Greedy can still recover these fragments because it repeatedly performs exhaustive global rescoring, whereas PhaseWin relies on a locally stable candidate ordering and can therefore postpone or miss the decisive noisy fragment. This is a partition--model mismatch rather than a systematic failure of the accelerated search.

The lower rows in Figures~\ref{fig:fail1}--\ref{fig:fail3} show that this mismatch can be directly mitigated by reducing the number of segmentation regions. A coarser partition merges correlated noisy fragments into more stable attribution units, making the search space better aligned with the model's response scale. After this adjustment, PhaseWin recovers Greedy-level insertion and deletion behavior on all three pathological samples, while still requiring substantially fewer model forward passes. Therefore, the large right-tail gap should be interpreted as an artifact induced by unfavorable over-partitioning on rare noisy samples, not as evidence that Greedy has a broad algorithmic advantage over PhaseWin.
\clearpage
\section{Additional Visualization Results}
\label{appvis}

This section provides additional qualitative examples for the three attribution settings studied in the main paper: image classification, object detection and visual grounding, and image captioning. These examples are intended to complement the quantitative comparisons by showing how the attribution maps behave across different backbones, tasks, and failure modes. Unless otherwise specified, the panels in each figure follow the left-to-right order stated in the corresponding caption.

\subsection{Additional Visualization Results for Classification}

We first provide additional classification visualizations on a shared input image across three representative backbones: CLIP-RN101, ResNet-101, and CLIP ViT-L/14. This controlled presentation separates the effect of attribution method from sample variation, and shows how gradient-based, perturbation-based, and subset-search methods behave under the same visual evidence.

\begin{figure}[t]
    \centering
    \includegraphics[width=0.95\linewidth]{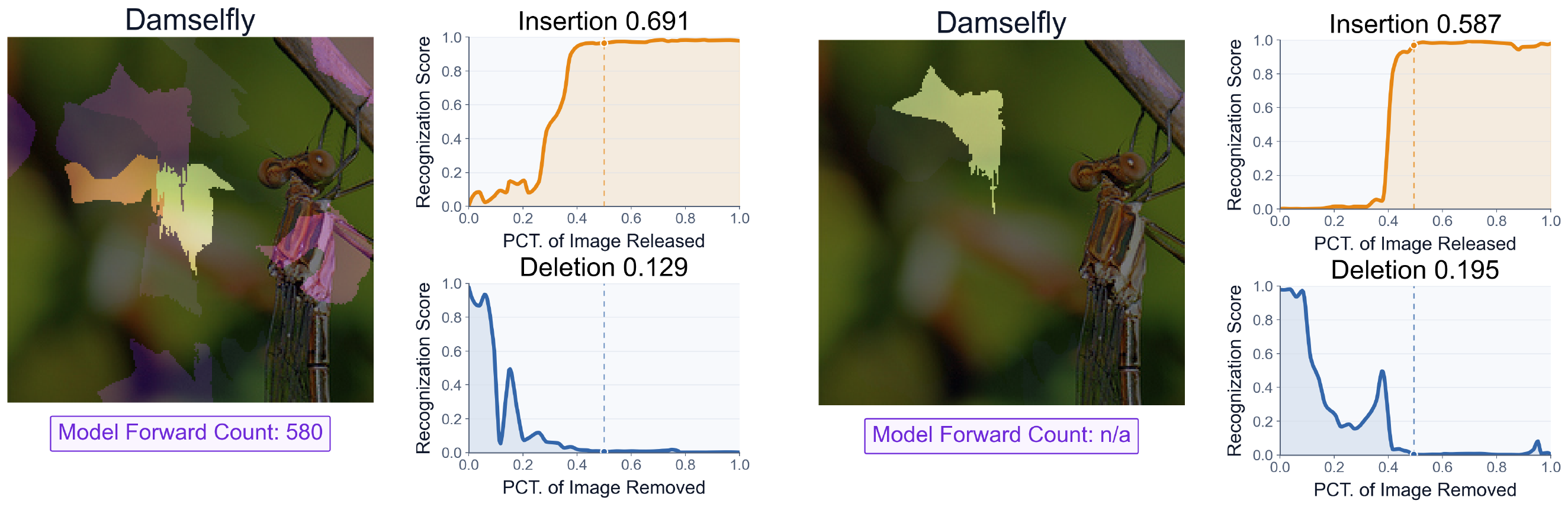}
    \caption{Additional classification attribution results on CLIP-RN101. From left to right: D-HSIC and D-RISE.}
    \label{fig:app-cls-clip-rn101-dhsic-drise}
\end{figure}

\begin{figure}[t]
    \centering
    \includegraphics[width=0.95\linewidth]{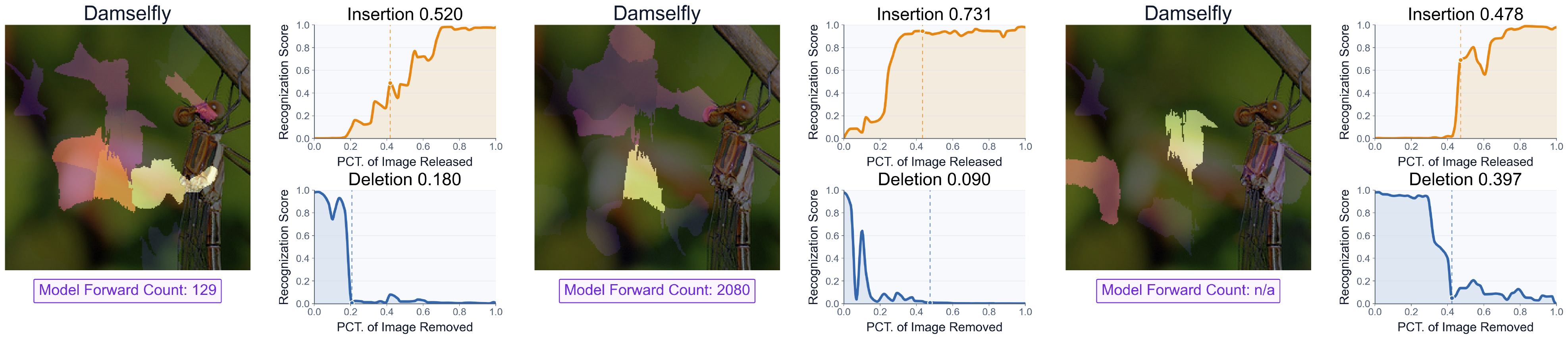}
    \caption{Additional classification attribution results on CLIP-RN101. From left to right: Gradient, Integrated Gradients, and IGOS++.}
    \label{fig:app-cls-clip-rn101-grad-ig-igos}
\end{figure}

\begin{figure}[t]
    \centering
    \includegraphics[width=0.95\linewidth]{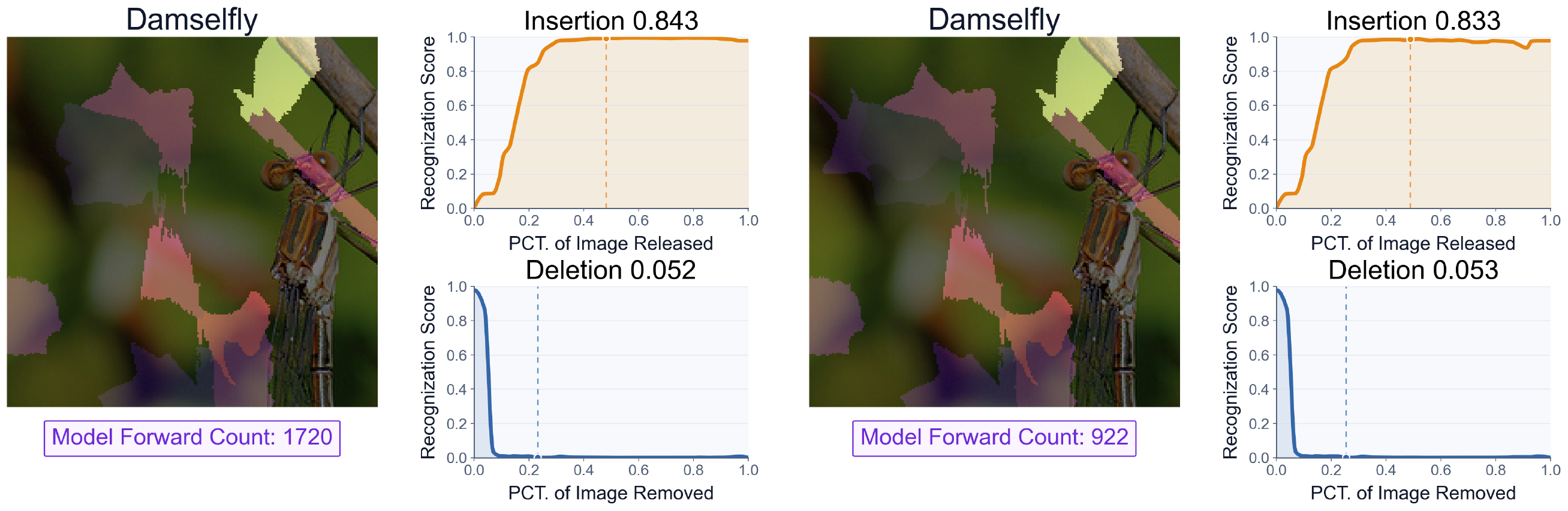}
    \caption{Additional classification attribution results on CLIP-RN101. From left to right: Greedy and PhaseWin.}
    \label{fig:app-cls-clip-rn101-greedy-phasewin}
\end{figure}

\begin{figure}[t]
    \centering
    \includegraphics[width=0.95\linewidth]{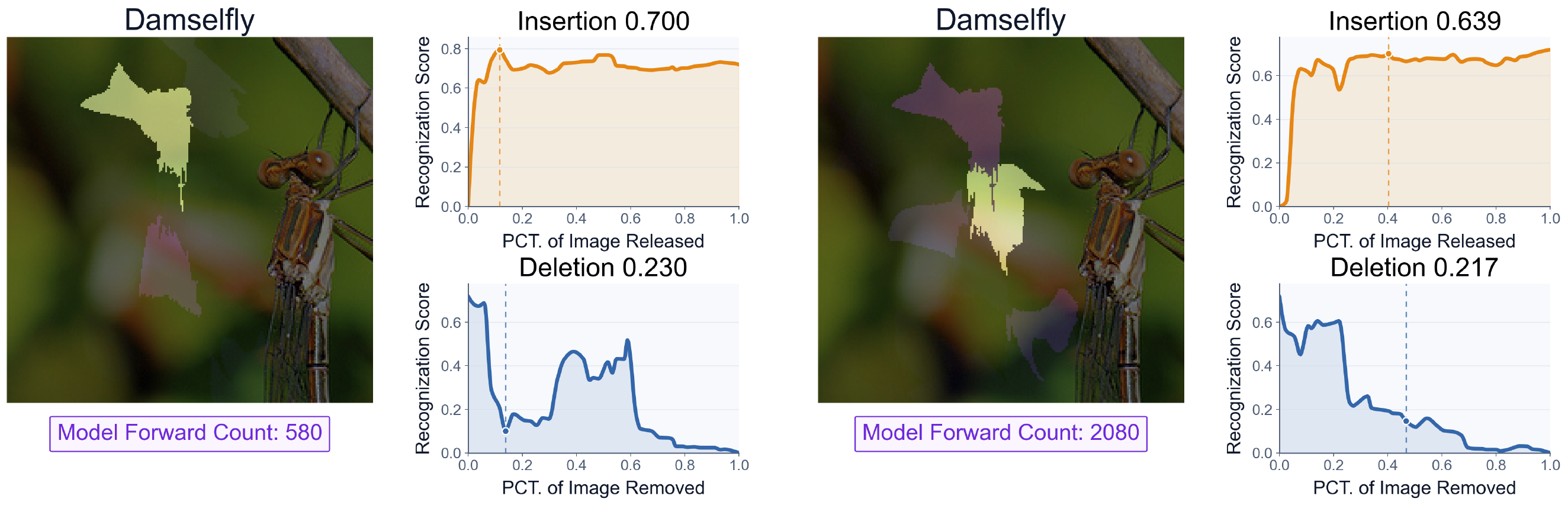}
    \caption{Additional classification attribution results on ResNet-101. From left to right: D-HSIC and D-RISE.}
    \label{fig:app-cls-resnet101-dhsic-drise}
\end{figure}

\begin{figure}[t]
    \centering
    \includegraphics[width=0.95\linewidth]{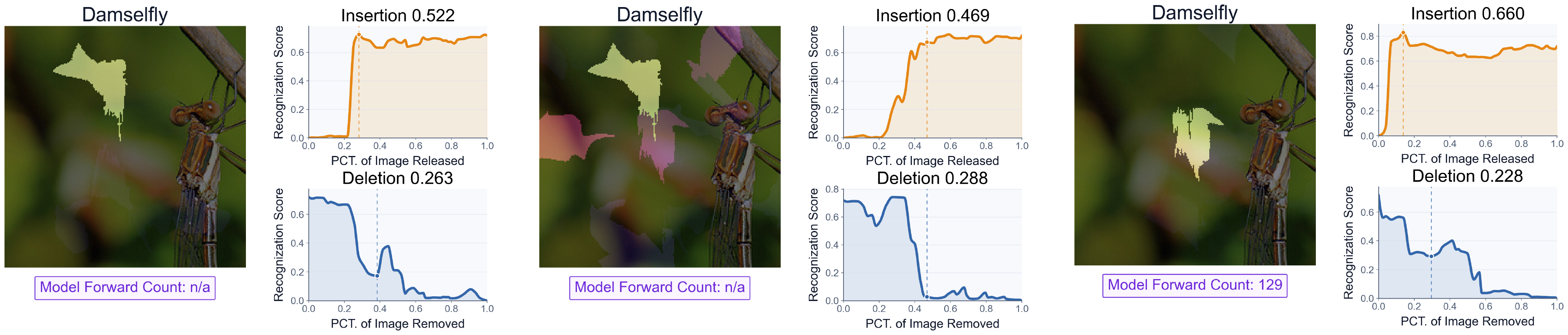}
    \caption{Additional classification attribution results on ResNet-101. From left to right: Gradient, Integrated Gradients, and IGOS++.}
    \label{fig:app-cls-resnet101-grad-ig-igos}
\end{figure}

\begin{figure}[t]
    \centering
    \includegraphics[width=0.95\linewidth]{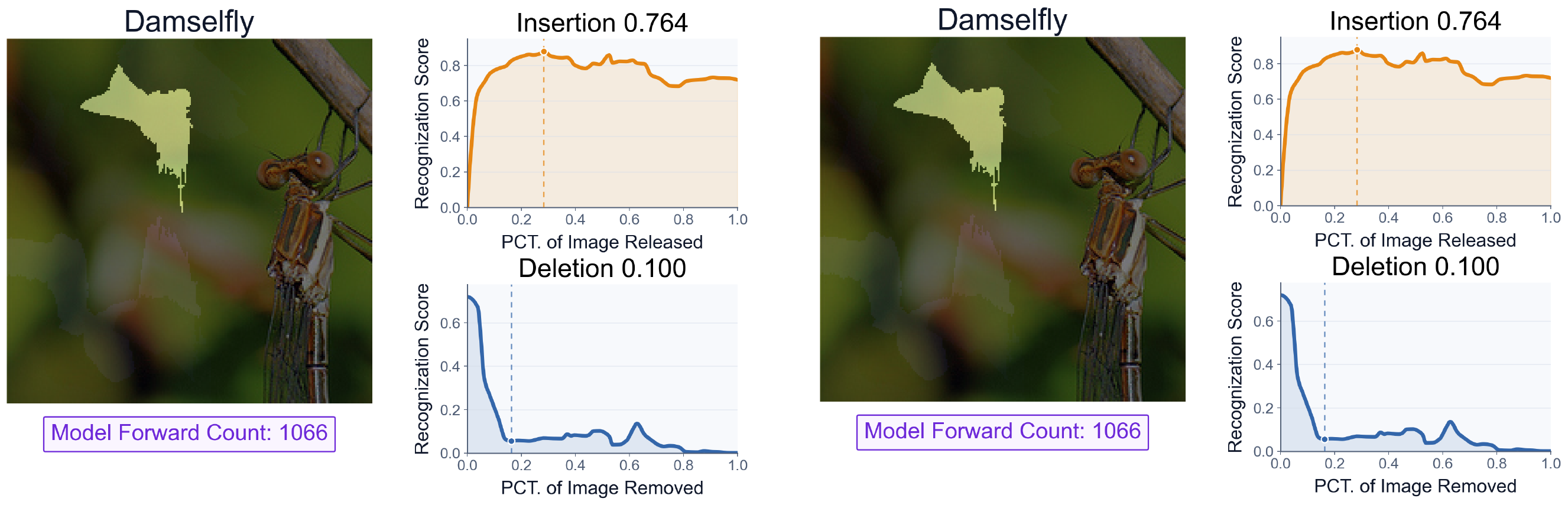}
    \caption{Additional classification attribution results on ResNet-101. From left to right: Greedy and PhaseWin.}
    \label{fig:app-cls-resnet101-greedy-phasewin}
\end{figure}

\begin{figure}[t]
    \centering
    \includegraphics[width=0.95\linewidth]{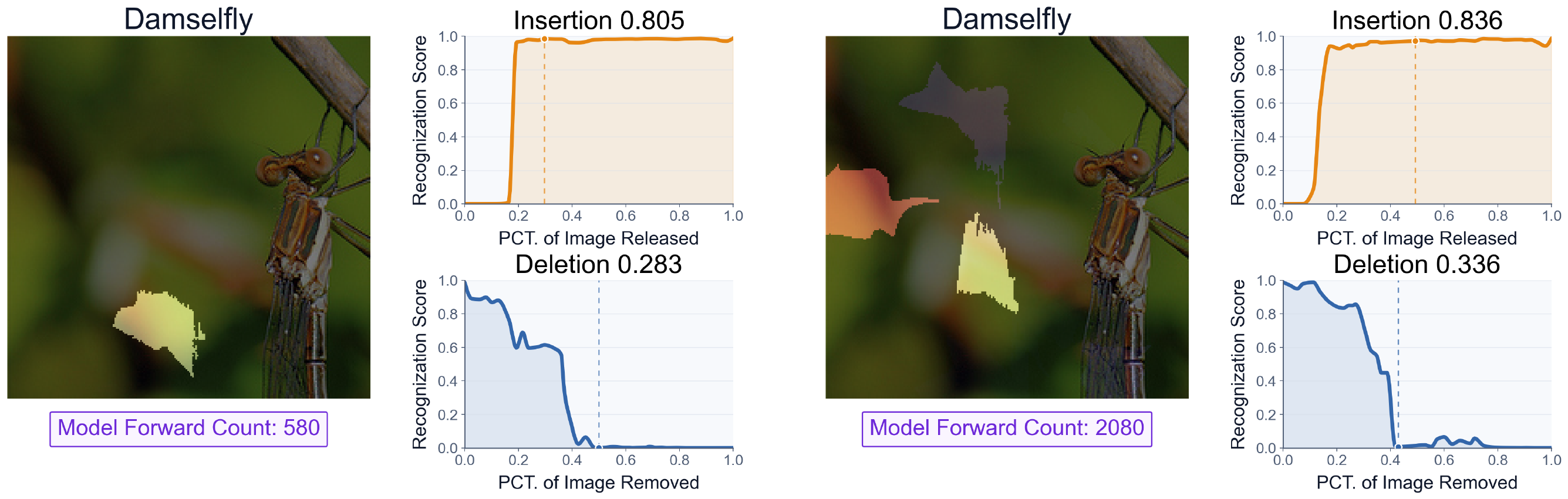}
    \caption{Additional classification attribution results on CLIP ViT-L/14. From left to right: D-HSIC and D-RISE.}
    \label{fig:app-cls-clip-vitl-dhsic-drise}
\end{figure}

\begin{figure}[t]
    \centering
    \includegraphics[width=0.95\linewidth]{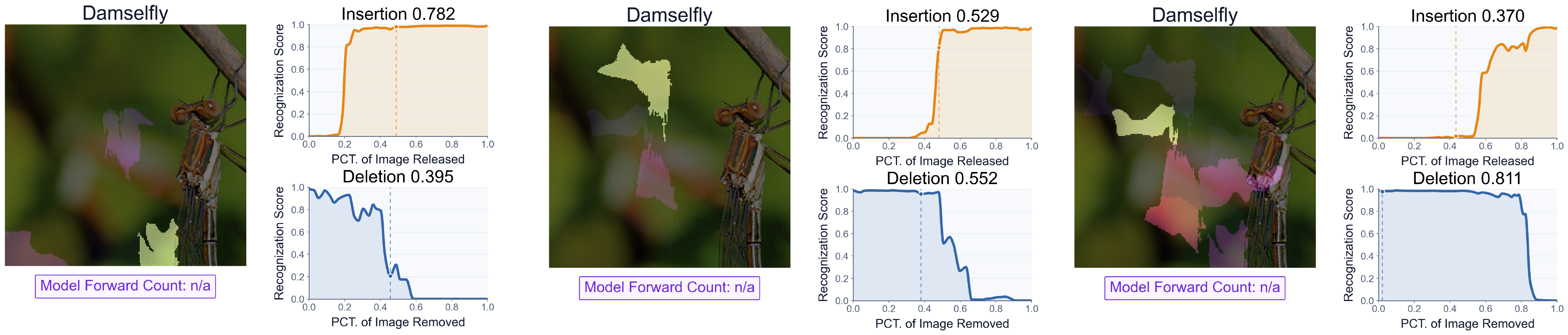}
    \caption{Additional classification attribution results on CLIP ViT-L/14. From left to right: Grad-ECLIP, Gradient, and Integrated Gradients.}
    \label{fig:app-cls-clip-vitl-gradeclip-grad-ig}
\end{figure}

\begin{figure}[t]
    \centering
    \includegraphics[width=0.95\linewidth]{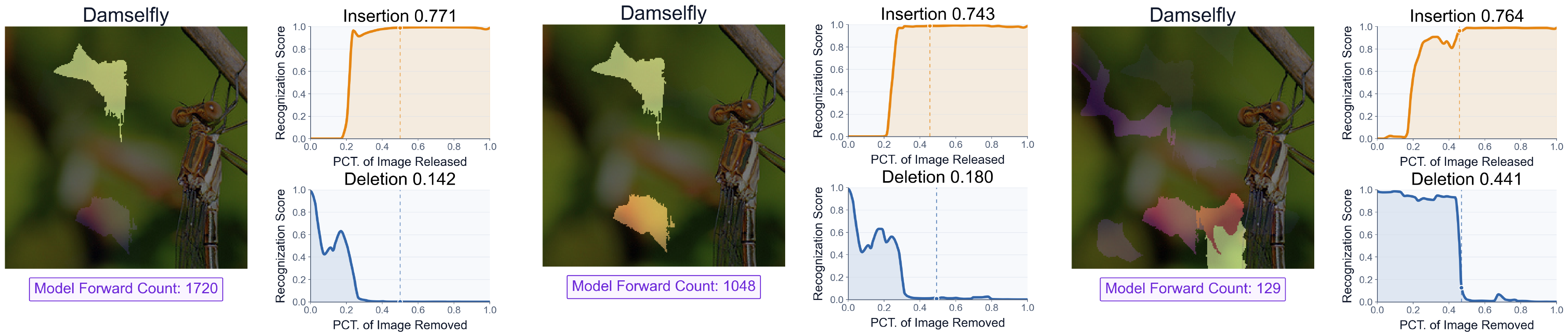}
    \caption{Additional classification attribution results on CLIP ViT-L/14. From left to right: Greedy, PhaseWin, and IGOS++.}
    \label{fig:app-cls-clip-vitl-greedy-phasewin-igos}
\end{figure}

\subsection{Additional Visualization Results for Detection and Visual Grounding}

We next provide additional examples for object detection and visual grounding. The visualization set covers both correct predictions and failure cases, including misclassification, missed detection, and incorrect grounding. These cases are used to examine whether PhaseWin preserves the diagnostic behavior of Greedy when the model prediction is either correct or erroneous.

\begin{figure}[t]
    \centering
    \includegraphics[width=0.95\linewidth]{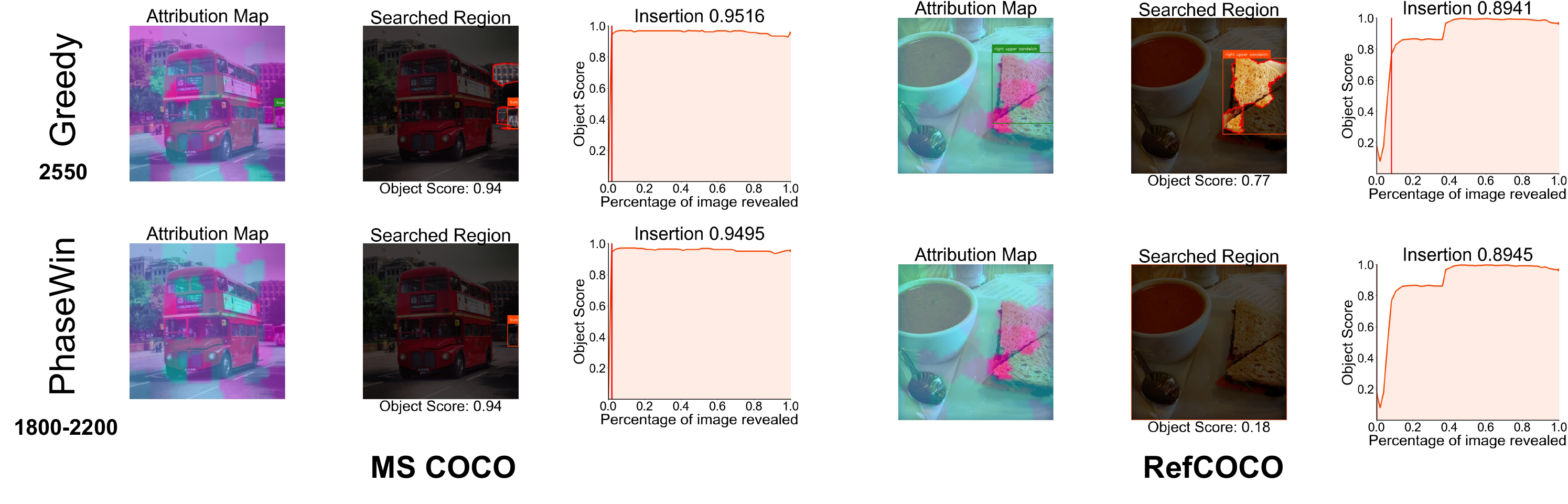}
    \caption{Additional correct-case visualizations on Florence-2 for MS COCO detection and RefCOCO grounding. From left to right, the panels compare Greedy and PhaseWin.}
    \label{fig:app-det-florence-correct}
\end{figure}

\begin{figure}[t]
    \centering
    \includegraphics[width=0.95\linewidth]{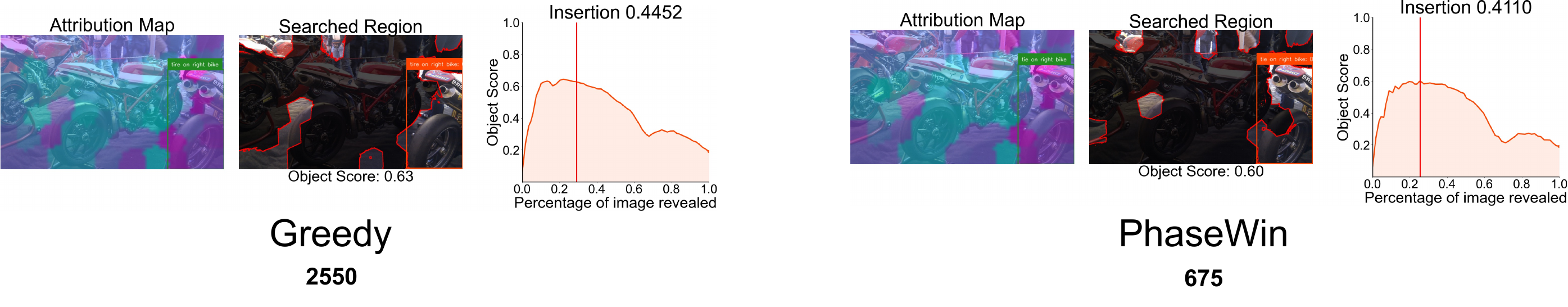}
    \caption{Additional failure-case visualizations on Grounding DINO for MS COCO misclassification. From left to right: Greedy and PhaseWin.}
    \label{fig:app-det-gdino-coco-miscls}
\end{figure}

\begin{figure}[t]
    \centering
    \includegraphics[width=0.95\linewidth]{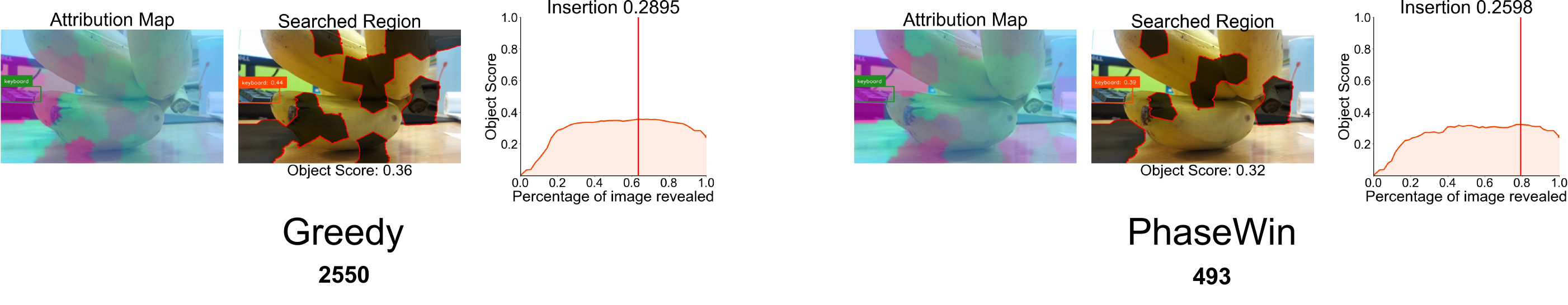}
    \caption{Additional failure-case visualizations on Grounding DINO for LVIS misclassification. From left to right: Greedy and PhaseWin.}
    \label{fig:app-det-gdino-lvis-miscls}
\end{figure}

\begin{figure}[t]
    \centering
    \includegraphics[width=0.95\linewidth]{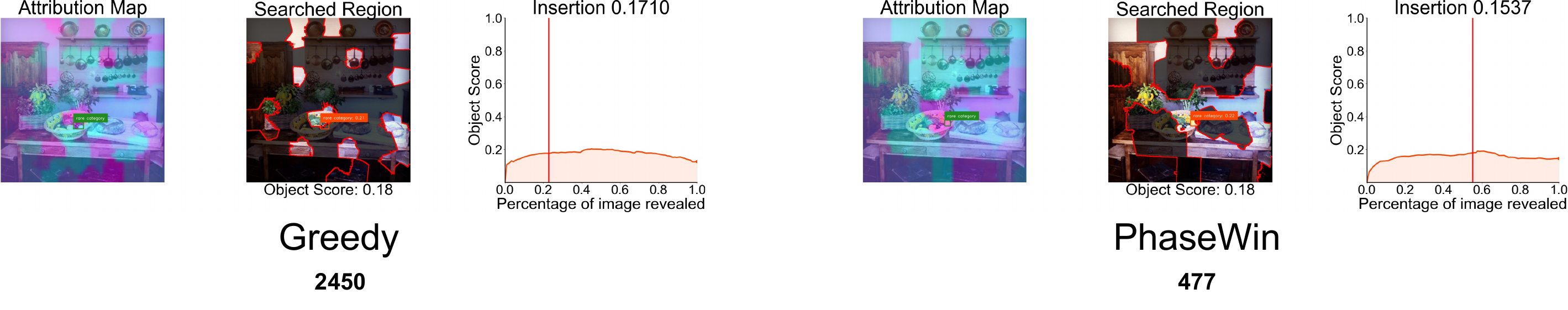}
    \caption{Additional failure-case visualizations on Grounding DINO for MS COCO missed detection. From left to right: Greedy and PhaseWin.}
    \label{fig:app-det-gdino-coco-missed}
\end{figure}

\begin{figure}[t]
    \centering
    \includegraphics[width=0.95\linewidth]{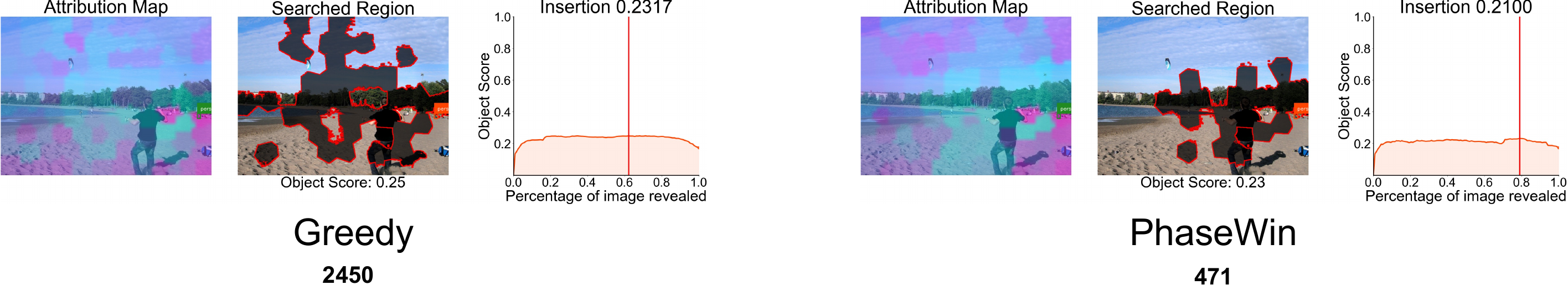}
    \caption{Additional failure-case visualizations on Grounding DINO for LVIS missed detection. From left to right: Greedy and PhaseWin.}
    \label{fig:app-det-gdino-lvis-missed}
\end{figure}

\begin{figure}[t]
    \centering
    \includegraphics[width=0.95\linewidth]{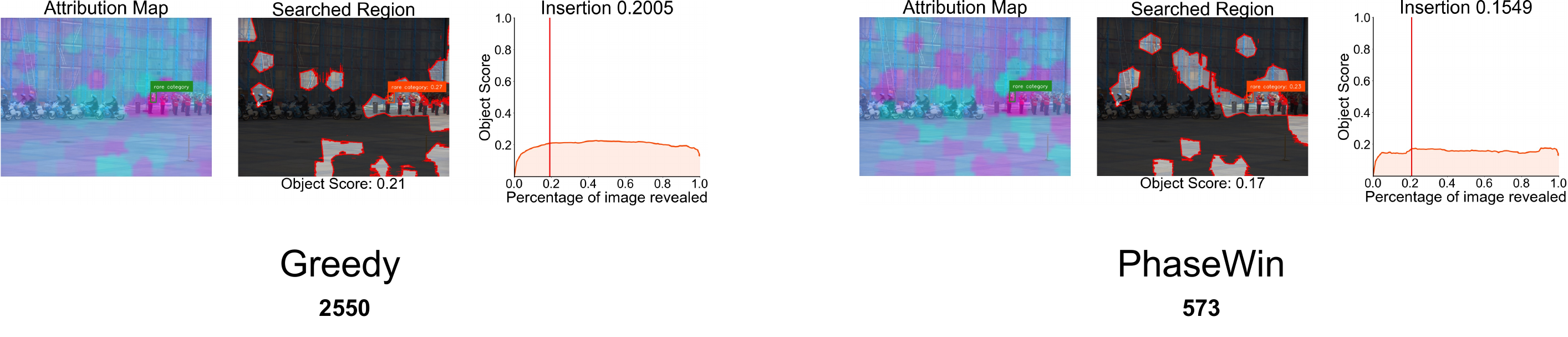}
    \caption{Additional failure-case visualizations on Grounding DINO for RefCOCO grounding errors. From left to right: Greedy and PhaseWin.}
    \label{fig:app-det-gdino-refcoco-failure}
\end{figure}

\subsection{Additional Visualization Results for Image Captioning}

Finally, we provide additional image-captioning visualizations on Qwen2.5-VL-3B. The main paper reports the corresponding qualitative examples on the larger 7B model; the examples here show that the same comparison protocol can also be applied to the 3B-scale captioning model.

\begin{figure}[t]
    \centering
    \includegraphics[width=0.95\linewidth]{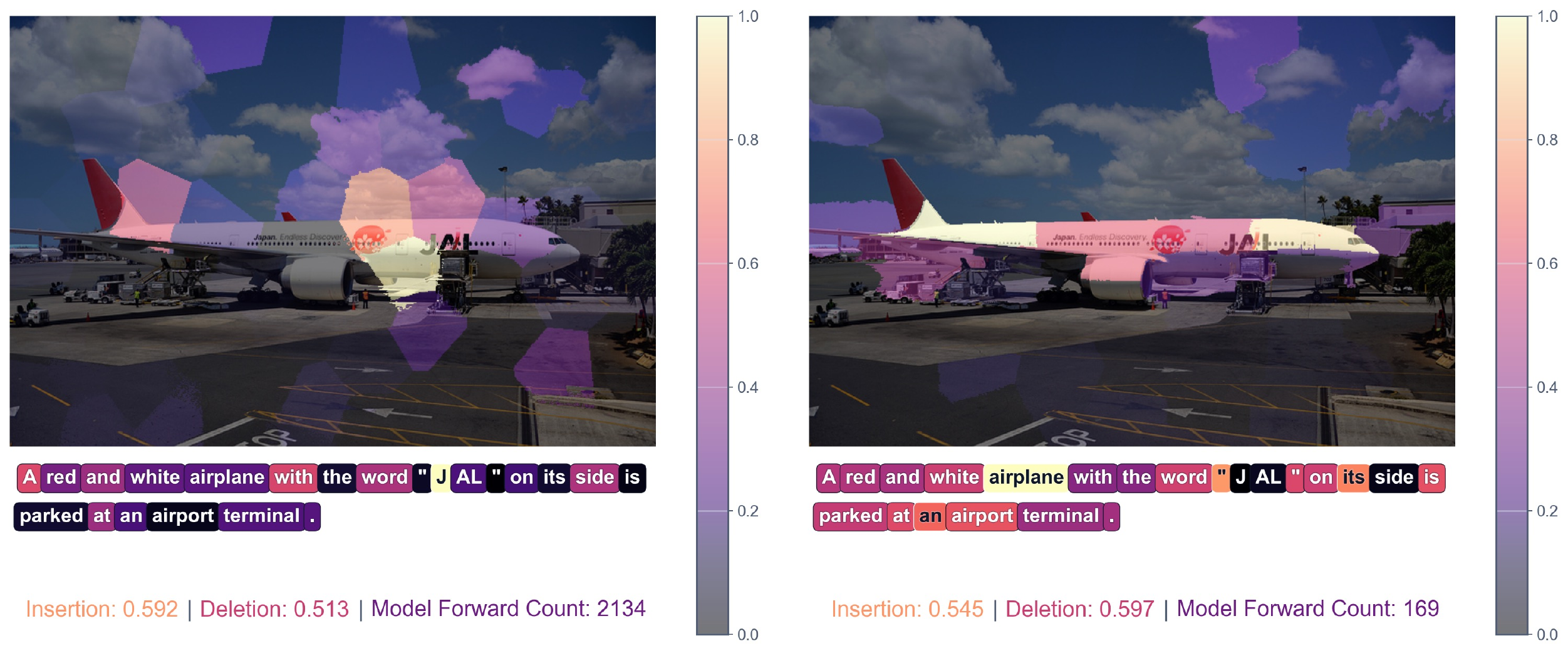}
    \caption{Additional image-captioning attribution results on Qwen2.5-VL-3B. From left to right: D-RISE and IGOS++.}
    \label{fig:app-cap-qwen25vl3b-drise-igos}
\end{figure}

\begin{figure}[t]
    \centering
    \includegraphics[width=0.95\linewidth]{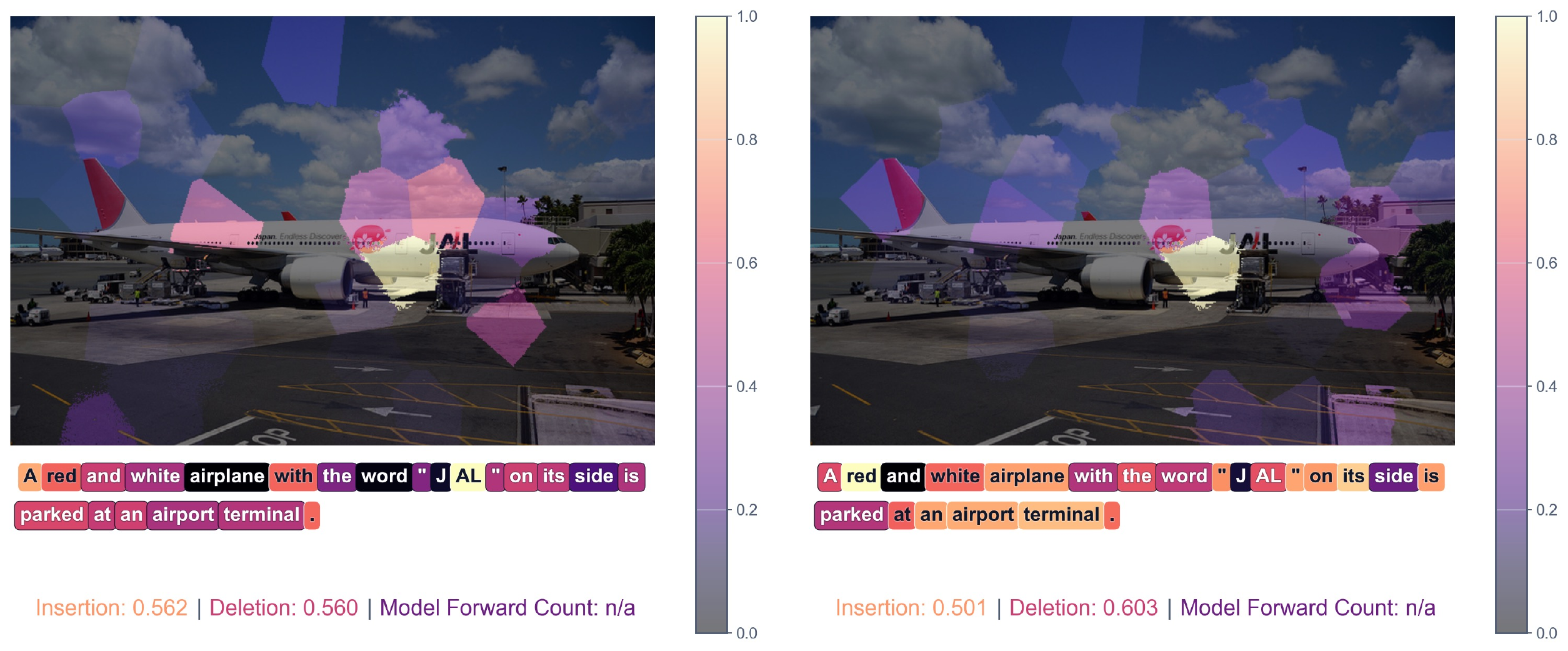}
    \caption{Additional image-captioning attribution results on Qwen2.5-VL-3B. From left to right: Gradient and LLaVA-CAM.}
    \label{fig:app-cap-qwen25vl3b-grad-llavacam}
\end{figure}

\begin{figure}[t]
    \centering
    \includegraphics[width=0.95\linewidth]{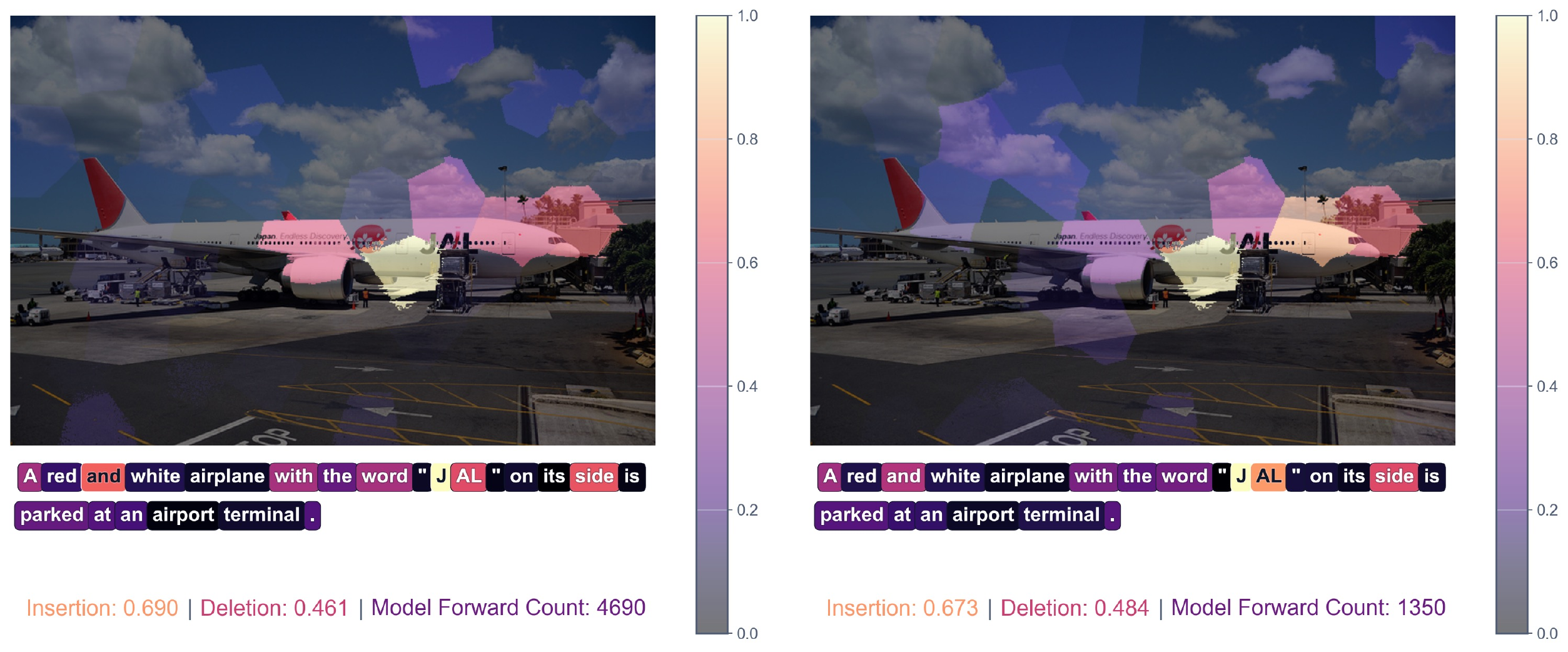}
    \caption{Additional image-captioning attribution results on Qwen2.5-VL-3B. From left to right: Greedy and PhaseWin.}
    \label{fig:app-cap-qwen25vl3b-greedy-phasewin}
\end{figure}
\end{document}

%% file: arxiv.bbl